\newtheorem{theorem}{Theorem}[section]
\newtheorem{proposition}[theorem]{Proposition}
\newtheorem{lemma}[theorem]{Lemma}
\newtheorem{corollary}[theorem]{Corollary}
\title{An $(\epsilon,\delta)$-accurate level set estimation with a stopping criterion}
\author{\name Hideaki Ishibashi \email ishibashi@brain.kyutech.ac.jp \\
      \addr Kyushu Institute of Technology \\
      \AND
      \name Kota Matsui \email matsui.kota.x3@f.mail.nagoya-u.ac.jp \\
      \addr Nagoya University / RIKEN AIP \\
      \AND
      \name Kentaro Kutsukake \email kutsukake.kentaro.c3@f.mail.nagoya-u.ac.jp \\
      \addr Nagoya University \\
      \name Hideitsu Hino \email hino@ism.ac.jp \\
      \addr The Institute of Statistical Mathematics/ RIKEN AIP
      }
\let\AND\relax
\begin{document}

\maketitle

\begin{abstract}
The level set estimation problem seeks to identify regions within a set of candidate points where an unknown and costly to evaluate function’s value exceeds a specified threshold, providing an efficient alternative to exhaustive evaluations of function values. Traditional methods often use sequential optimization strategies to find $\epsilon$-accurate solutions, which permit a margin around the threshold contour but frequently lack effective stopping criteria, leading to excessive exploration and inefficiencies. This paper introduces an acquisition strategy for level set estimation that incorporates a stopping criterion, ensuring the algorithm halts when further exploration is unlikely to yield improvements, thereby reducing unnecessary function evaluations. We theoretically prove that our method satisfies $\epsilon$-accuracy with a confidence level of $1 - \delta$, addressing a key gap in existing approaches. Furthermore, we show that this also leads to guarantees on the lower bounds of performance metrics such as F-score. Numerical experiments demonstrate that the proposed acquisition function achieves comparable precision to existing methods while confirming that the stopping criterion effectively terminates the algorithm once adequate exploration is completed.
\end{abstract}

\section{Introduction}
Adaptive experimental design is a data-driven approach to planning experiments that determines the next experimental conditions based on data obtained so far. It is applied in various fields of experimental sciences such as drug discovery~\citep{griffiths2020constrained} and the development of new materials~\citep{ueno2016combo}.
For example, in manufacturing industries, identifying defective areas where the physical properties of materials do not meet the desired quality is a crucial issue.
Such defective areas are often determined by measuring the physical properties, using techniques like X-ray diffraction, in various parts of refined materials and determining whether they exceed acceptable lower limits.
This problem can be formulated as a {\it Level Set Estimation} (LSE) by considering a black-box function that takes the coordinates of measurement locations as input and outputs the physical properties at each measurement location.
LSE is a problem aimed to identify regions on the input space where the output values of a black-box function are greater (or smaller) than a certain threshold, and active learning~(AL;~\citet{Settles2010}) approach has been proposed specifically to perform LSE with as few experimental iterations as possible~\citep{Bichon2008-dm,gotovos2013active,NIPS2016_ce78d1da,zanette2019robust}.

In practical scenarios of adaptive experimental design, determining ``when to stop the experiment'' is crucially important.
If stopping is not done appropriately, it can lead to wasteful experiments and the squandering of various costs.
In experimental sciences, there are situations where an upper limit on the number of experiments that can be conducted. A na\"ive approach often involves conducting experiments up to such a ``budget limit'' and then stopping.
For LSE, few theoretical guarantees exist on the consistency~\citep{10.3150/18-BEJ1074} or sample complexity~\citep{DBLP:conf/aistats/BachocCG21}, and there also exist some theoretical results on the finite-time guarantee of LSE~\citep{gotovos2013active,DBLP:conf/aistats/MasonJMCJN22}, but to the best of the authors' knowledge, research on the stopping criterion for LSE is limited, with one example being the F-score sampling criterion~\citep{10.1007/s00366-021-01341-7}. This method stops when the 5th percentile of the sampled F-scores exceeds the desired F-score, and it allows for intuitive parameter setting and can be applied to a wide range of acquisition functions. However, in many applications, it is often unclear what is the maximum possible F-score for the problem, and the actual F-score at the stopping point may not exceed the desired F-score, making it difficult to stop the LSE procedure by specifying the F-score.

\paragraph{Contributions} In this paper, we propose an acquisition function for LSE based on the distribution of a random variable that represents the difficulty of classification. The proposed acquisition function entails a natural stopping criterion, probabilistically ensuring that the algorithm can be appropriately stopped when the LSE is accurately performed when used with the proposed acquisition function. Furthermore, our method probabilistically guarantees the lower bounds of performance metrics such as the F-score, accuracy, recall, precision, and specificity. Experiments using test functions and real-world data on the quality of silicon ingots demonstrate that the proposed method performs at least as well as existing methods in terms of the F-score, and can stop the algorithm when sufficient estimation accuracy is achieved.

\paragraph{Related Works} 
For active learning, stopping criteria based on various perspectives have been proposed. 
For example, it is investigated in~\citet{Vlachos2008} that the use of classifier confidence to determine that there are no informative instances remaining in the candidate point set and to stop AL.
In~\citet{Olsson2009}, an intrinsic stopping criterion based on the exhaustiveness of the candidate point set is proposed, that does not depend on a predefined threshold parameter.
A stopping criteria based on {\it Stabilizing Predictions} is proposed in~\citet{Bloodgood2009}, that checks the stability of the current model's predictions on the validation set and decides whether to stop the AL. 
In \citet{Laws2008,Bloodgood2013,Altschuler2019}, stopping criterion based on the change in the F-score is considered.
Criteria called {\it TotalConf} and {\it LeastConf} are proposed in \citet{10.1145/3397271.3401267}, which stop the AL based on the amount of change in the classification confidence (i.e., prediction uncertainty) for unlabeled data.
A method to stop AL based on upper bound of the generalization error is proposed in~\citet{DBLP:conf/aistats/IshibashiH20}.
For Bayesian optimization (BO), stopping criteria based on regret have been proposed~\citep{makarova2022automatic,ishibashi2023stopping,NEURIPS2024_b204de70}. 
Note that each of these studies concerns stopping criteria for active learning and adaptive experimental design for classification, regression and optimization tasks, and are not directly applicable to the LSE problem.

Similar to the LSE problem, the estimation of the excursion set (which is also known as the probability of failure of a system in the industrial world) has also been considered, and different approaches such as sequential experimental design and kriging have been employed to tackle it with criteria targeted to reduce the uncertainty about the level set~\citep{bect2012sequential,Azzimonti2016-sf,Chevalier2014-uu}.
Contour finding, which identifies the contour where a black-box function equals a given threshold, has been developed independently of LSE but is closely related to it~\citep{Cole01012023,NEURIPS2018_01a06836,LI20118683}.
Several extensions of LSE to various situations are also considered, such as LSE under input uncertainty~\citep{Chevalier2013-on,inatsu2020active,iwazaki2020bayesian}, heavy-tailed output noise~\citep{Lyu2021-ze} or heteroscedasticity of outputs~\citep{Zhang2024-yu}, settings that aim at distributionally robust LSE~\citep{inatsu2021active}, dealing with Bernoulli observations~\citep{Letham2022-mt}, considering control over type-I and type-II errors~\citep{Azzimonti2021-wf}, and the setting where the input is composed of both deterministic and uncertain parts~\citep{doi:10.1080/00401706.2024.2394475}.

\section{Level Set Estimation}
Consider an unknown function $f: \cX \rightarrow \mathbb{R}$ where $\cX$ is a finite set of input $\vx$. This is a so-called pool-based problem. The objective of LSE is to classify, given a threshold $\theta \in \mathbb{R}$, whether the outputs $\{f(\vx)\mid \vx \in \cX\}$ corresponding to a given candidate point set $\cX$ exceed $\theta$, using as few datasets as possible. The upper/lower level sets are defined as $H_{\theta} = \{ \vx \in \cX \mid f(\vx) > \theta \}$ and $L_{\theta} = \{ \vx \in \cX \mid f(\vx) \leq \theta\}$, respectively. In LSE, the following procedure is iteratively performed to achieve this objective:
i) Estimate the surrogate function $\hat{f}$ from the obtained dataset.
ii) Utilize the surrogate function to classify each candidate point into any one of the upper-level set, the lower-level set, or the undetermined set.
iii) Select the next search point based on the surrogate function. iv) Query the oracle for the corresponding output of the selected point. v) Add the obtained point to the dataset.

The surrogate function is often modeled by the Gaussian process regression (GPR;~\citet{NIPS1995_7cce53cf}). Consider a set of input-output pairs $S_N=\{(\vx_n, y_n)\}^N_{n=1}$. In GPR, we assume that the function $\hat{f}$ is generated by a Gaussian process (GP) with a mean function $m(\vx)$ and a covariance function $k(\vx,\vx')$. Additionally, the observed output $y$ is assumed to have Gaussian noise with precision parameter $\lambda$ added to the generated function $\hat{f}$. Therefore, in GPR, we consider the following generative model:
$\hat{f}(\vx) \sim \mathcal{N}(m(\vx), k(\vx, \vx')), \quad \mbox{and} \quad y \mid \vx \sim \mathcal{N}(\hat{f}(\vx), \lambda^{-1})$. 
Denoting $\vy := (y_1, y_2, \ldots, y_N)$, the joint distribution of $\vy$ and the output $\hat{f}(\vx^\ast)$ for a new input $\vx^\ast$ can be expressed by the following equation.
\begin{equation}
  \left[    \begin{array}{c}
      \vy  \\
      \hat{f}(\vx^\ast)
    \end{array}
  \right]
  \sim \mathcal{N}\left(
  \left[    \begin{array}{c}
      \vm  \\
      m(\vx^\ast)
\end{array}
  \right]
  ,
  \left[    \begin{array}{cc}
      \tilde{\vK} & \vk(\vx^\ast) \\
      \vk\T(\vx^\ast) & k(\vx^\ast,\vx^\ast)
    \end{array}
  \right]\right),
\end{equation}
where $\tilde{\vK} = \vK + \lambda^{-1}\vI$, $[\vK]_{i,j} = k(\vx_i, \vx_j)$, $\vk(\vx^\ast) = (k(\vx_n, \vx^\ast))^N_{n=1} \in \mathbb{R}^N$, and $\vm = (m(\vx_n))^N_{n=1} \in \mathbb{R}^{N}$.
Therefore, the posterior distribution when observing the dataset $S_N$ is given by $p(\hat{f}(\vx^\ast)\mid \vy) = \mathcal{N}(\hat{f}(\vx^\ast) \mid \mu_N(\vx^\ast), \sigma_N^2(\vx^\ast))$. Here,
\begin{align}
\label{eq:posterior_mean}
\mu_N(\vx^\ast)&=m(\vx^\ast)+\vk\T(\vx^\ast)\tilde{\vK}\inv(\vy-\vm),\; \\  
\label{eq:posterior_variance}
\sigma_N^2(\vx^\ast)&=k(\vx^\ast,\vx^\ast)-\vk\T(\vx^\ast)\tilde{\vK}\inv\vk(\vx^\ast).\;
\end{align}

In LSE using GPR, the next exploration point is determined based on the posterior distribution. Specifically, if we define the acquisition function $\alpha: \cX \rightarrow \mathbb{R}$ parameterized by $p(\hat{f}|\vy)$, the next exploration point is determined by $\vx^{\rm new} = \argmax_{\vx\in \cX} \alpha(\vx ; p(\hat{f}\mid\vy), \theta)$. 
Although there are various types of acquisition functions, such as those based on confidence bounds~\citep{gotovos2013active} and expected improvement for level set estimation~\citep{zanette2019robust}, we focus on a typical approach based on misclassification probability~\citep{bryan2005active}.
Assuming that the true function $f$ is generated from the posterior distribution $p(\hat{f}\, |\,\vy)$, 
the probability $\Pr (\vx \in L_{\theta} )$ can be expressed as follows, where $\Phi(\cdot)$ denotes the cumulative distribution function of the standard Gaussian:
\begin{align}
\Pr (\vx \in L_{\theta} )= \int^{\theta}_{-\infty}p(\hat{f}(\vx)\mid\vy)d\hat{f}(\vx) 
= \Phi\left(\frac{\theta-\mu_N(\vx)}{\sigma_N(\vx)}\right).
\label{eq:ProbZ}
\end{align}
Similarly, $\Pr (\vx \in H_{\theta} ) = 1- \Pr (\vx \in L_{\theta} )$. 
Then, $p^{\rm min}(\vx)=\min\{\Pr (\vx \in H_{\theta} ),\Pr (\vx \in L_{\theta} )\}$ represents the difficulty of classifying the candidate point $\vx$; hence we call this ``misclassification probability''~\citep{bryan2005active}. Similarly, we call $p^{\rm max}(\vx)=\max\{\Pr (\vx \in H_{\theta} ),\Pr (\vx \in L_{\theta} )\}$ ``classification probability''. Therefore, the following acquisition function selects the candidate points that are difficult to classify as the next points of evaluation:
\begin{equation}
    \vx^{\rm new} = \argmax_{\vx \in \cX}\; p^{\rm min}(\vx). 
\label{eq:base_acq_func}
\end{equation}

When classifying candidate points, the standard method is the classification rule based on confidence intervals proposed by \citet{gotovos2013active}. Let $\tilde{H}_\theta$ and $\tilde{L}_\theta$ be estimated upper-level set and lower-level set, respectively. We further introduce an undetermined set $\tilde{U}_{\theta}$.
Then, a candidate point $\vx$ is classified according to the following classification rule:
\begin{align}
    \tilde{H}_\theta =& \{\vx \mid \vx \in \cX, \mu_N(\vx) - \beta \sigma_N(\vx) > \theta\}, \label{eq:standard_classification_rule_H} \\
    \tilde{L}_\theta =& \{\vx \mid \vx \in \cX, \mu_N(\vx) + \beta \sigma_N(\vx) < \theta\}, \label{eq:standard_classification_rule_L} \\
    \tilde{U}_\theta =& \cX \backslash \{\tilde{H}_{\theta} \cup \tilde{L}_{\theta}\}, \label{eq:standard_classification_rule_U} 
\end{align}
where $\beta$ is the parameter that controls the exploration-exploitation trade-off in the acquisition function.

\section{Proposed Acquisition Function and Stopping Criterion}
\begin{figure*}[th]
    \centering
\includegraphics[height=4.5cm,keepaspectratio]{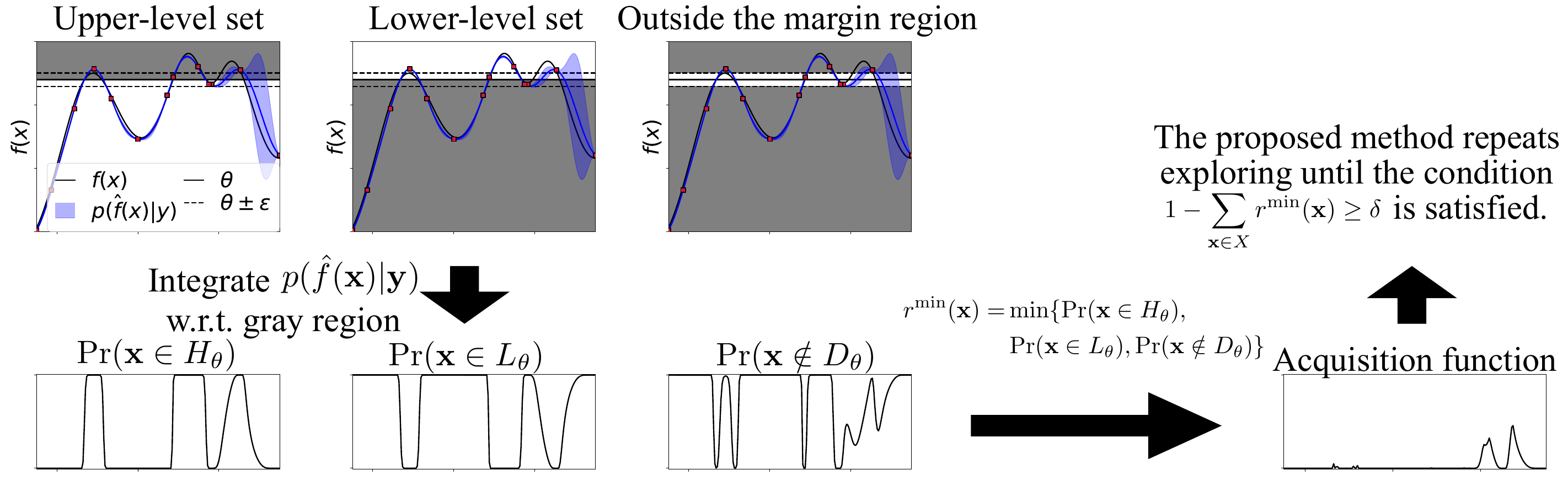}
    \caption{The proposed method selects a candidate point that is difficult to classify and has a low probability of containing the true function value in the margin region and stops LSE when the condition is satisfied.
    }
    \label{fig_concept_diagram}
\end{figure*}

This section describes the acquisition function for LSE proposed in this paper and its stopping criterion. The pseudocode of the proposed LSE procedure is shown in Appendix~\ref{seq:pseudocode}.

\subsection{Proposed Acquisition Function}

The acquisition function based on misclassification probability \eqref{eq:base_acq_func} is an intuitive and natural choice, where points with Bernoulli distribution parameters close to $0.5$, and therefore difficult to classify, are selected as candidates for the next observation.
In this formulation, noting that the cumulative distribution function of the standard Gaussian is $\Phi(0) = 0.5$, Eq.~\eqref{eq:ProbZ} suggests two possible scenarios for the selected candidate points. The first case occurs when the true function value at the candidate point is far from the threshold, but due to insufficient data observed near the candidate point, the posterior distribution's variance is large, making classification difficult ($\Phi\left(\frac{\theta - \mu_N(\vx)}{\sigma_N(\vx)}\right) \to 0.5$ as $\sigma_N(\vx) \to \infty$). In this case, exploring the candidate point reduces the variance of the posterior distribution and increases the classification probability, making it less likely to be explored in subsequent searches. This is the case the exploration offers reasonable information. 

On the other hand, the second case is problematic. The acquisition function~\eqref{eq:base_acq_func} would select points at which the true function values are close to the threshold ($\Phi\left(\frac{\theta - \mu_N(\vx)}{\sigma_N(\vx)}\right) \to 0.5$ as $\mu_N(\vx) \to \theta$). 
In this scenario, the same candidate point is repeatedly explored while other candidate points are ignored. For this issue, the previous study has heuristically used the product of the misclassification probability and the posterior variance as the acquisition function~\citep{bryan2005active}.

In this study, we assume that a margin $\epsilon > 0$ is given\footnote{Here, the margin is assumed to be given, but a method for setting the margin based on the observed data are discussed in Appendix.~\ref{seq:determine_margin}.}, which indicates a tolerance of the accuracy of estimation. If the gap between the true function value $f(\vx)$ and threshold $\theta$ for a candidate point $\vx$ lies within the range $\cE := (- \epsilon/2, \epsilon/2]$, it is considered as a difficult to classify, and that exploring this candidate point will not increase certainty about the classification, and thus the point is removed from the candidate point set. To put it another way, for candidate points that are difficult to classify, we decide to make a concession and perform an $\epsilon$-accurate classification. The notion of the margin is essentially equivalent to $\epsilon$-accuracy introduced in~\citet{NIPS2016_ce78d1da}, but we explicitly utilize it as information for determining the next experimental condition. The difficult-to-classify set is defined as $U_{\theta} = \{ \vx \in \cX \mid (f(\vx)-\theta) \in \cE\}$, and the solution triplet $(\tilde{H}_\theta, \tilde{L}_\theta, \tilde{U}_\theta)$ is $\epsilon$-accurate if $\forall \vx \in \tilde{H}_\theta$ is in $H_\theta$, $\forall \vx \in \tilde{L}_\theta$ is in $L_\theta$, and $\forall \vx \in \tilde{U}_\theta$ is in $U_\theta$.

The probability of $\vx \in U_{\theta}$ is given by
\begin{equation*}
\Pr(\vx \in U_{\theta})
= \int^{\theta+\epsilon/2}_{\theta-\epsilon/2}p(\hat{f}(\vx)\mid\vy)d\hat{f}(\vx) = \Phi\left(\frac{\theta+\epsilon/2-\mu_N(\vx)}{\sigma_N(\vx)}\right)
 - \Phi\left(\frac{\theta-\epsilon/2-\mu_N(\vx)}{\sigma_N(\vx)}\right).
\end{equation*}
Similarly, $\Pr(\vx \notin U_{\theta}) = 1- \Pr(\vx \in U_{\theta})$. 
Then, with $r^{\rm min}(\vx):=\min\{\Pr(\vx \in H_{\theta}), \Pr(\vx \in L_{\theta}), \Pr(\vx \notin U_{\theta})\}$, we redefine the acquisition function as
\begin{equation}
    \vx^{\rm new} = \argmax_{\vx \in \cX}\; r^{\rm min}(\vx).
    \label{eq:proposed_acq_func}
\end{equation}
As shown in Fig.~\ref{fig_concept_diagram}, this acquisition function evaluates not only the probability that a candidate point belongs to the upper/lower level sets but also the probability $\Pr(\vx \notin U_{\theta})$ that the gap does not fall within the range $\cE$. For a candidate point $\vx$ where the gap $f(\vx)-\theta$ is within $\cE$, if the area around the candidate point has not been well explored, $\Pr(\vx \notin U_{\theta})$ increases, and if $p^{\rm min}(\vx)$ is also large, then $r^{\rm min}(\vx)$ increases, leading to the selection of $\vx$. Conversely, if the area around the candidate point has been thoroughly explored, the posterior variance decreases, thus increasing the probability that the gap $f(\vx)-\theta$ falls within $\cE$ and decreasing $\Pr(\vx \notin U_{\theta})$. Therefore, even if $p^{\rm min}(\vx)$ is large and classification is difficult, $r^{\rm min}(\vx)$ becomes small, making it less likely to be chosen as the next point of evaluation.

As similar approaches to the misclassification-based approach, there are entropy-based and variance-based approaches~\citep{bryan2005active,Cole01012023}. 
These acquisition functions share the fundamental idea with the one in \eqref{eq:base_acq_func} and therefore inherit similar issues to those mentioned at the beginning of this section regarding \eqref{eq:base_acq_func}.
Several studies have discussed approaches to address the issues of these acquisition functions~\citep{NEURIPS2018_01a06836,Picheny2010-nz}, but none provide theoretical guarantees on stopping performance, leaving the evaluation of this aspect to empirical analysis.
In contrast, our acquisition function addresses the aforementioned issues while also providing theoretical guarantees on stopping performance, as discussed in the next section.

\subsection{Classification rule and stopping criterion}
We describe the classification rule and stopping method for LSE using the acquisition function Eq.~\eqref{eq:proposed_acq_func}. 
Letting $r^{\rm max}(\vx):=\max\{\Pr(\vx \in H_{\theta}), \Pr(\vx \in L_{\theta}), \Pr(\vx \in U_{\theta})\}$, the proposed classification rule is as follows:
\begin{align}
    \tilde{H}_{\theta} :=& \{\vx \mid \vx \in \cX, r^{\rm max}(\vx)=\Pr(\vx \in H_{\theta})\}, \label{eq:classification_rule_H} \\
    \tilde{L}_{\theta} :=& \{\vx \mid \vx \in \cX, r^{\rm max}(\vx)=\Pr(\vx \in L_{\theta})\}, \label{eq:classification_rule_L} \\
    \tilde{U}_{\theta} :=& \{\vx \mid \vx \in \cX, r^{\rm max}(\vx)=\Pr(\vx \in U_{\theta})\}. \label{eq:classification_rule_D}
\end{align}
As will be discussed later, this classification rule can be considered equivalent to the classification rule of Eqs.~\eqref{eq:standard_classification_rule_H}, \eqref{eq:standard_classification_rule_L}, and \eqref{eq:standard_classification_rule_U} under certain conditions.

The proposed stopping criterion uses a confidence parameter $\delta\; (0 < \delta < 1)$ as a threshold, and LSE is stopped when the following inequality is satisfied:
\begin{equation}
    1-\sum_{\vx \in \cX}r^{\rm min}(\vx)\geq \delta.
    \label{eq:stopping_criterion}
\end{equation}
That is, LSE is stopped when the sum of the acquisition function values for all candidate points becomes small enough. 
At the point of stopping LSE, the following probability inequality holds:
\begin{theorem}
\label{thm:stopping_criterion}
If we assume that $\tilde{H}_{\theta}$,$\tilde{L}_{\theta}$ and $\tilde{U}_{\theta}$ are determined by using the classification rule of Eqs.~\eqref{eq:classification_rule_H},\eqref{eq:classification_rule_L} and \eqref{eq:classification_rule_D}, then the following inequality holds:
\begin{equation} \Pr((\tilde{H}_{\theta},\tilde{L}_{\theta},\tilde{U}_{\theta}) \text{ is $\epsilon$-accurate}) \geq  1-\sum_{\vx \in \cX}r^{\rm min}(\vx).
    \label{eq:ineq_of_sc}
\end{equation}
\end{theorem}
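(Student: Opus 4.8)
The plan is to control the failure probability $\Pr((\tilde{H}_{\theta},\tilde{L}_{\theta},\tilde{U}_{\theta})\text{ is not $\epsilon$-accurate})$ — where the randomness is the posterior draw $\hat{f}\sim p(\hat{f}\mid\vy)$, since the three estimated sets are deterministic functions of the observed data — by a union bound over $\cX$, after showing that each candidate point contributes exactly $r^{\rm min}(\vx)$ to that bound. First I would unpack the definition of $\epsilon$-accuracy: the triplet is \emph{not} $\epsilon$-accurate exactly when some $\vx\in\cX$ carries a wrong label, i.e.\ $\vx\in\tilde{H}_\theta\setminus H_\theta$, or $\vx\in\tilde{L}_\theta\setminus L_\theta$, or $\vx\in\tilde{U}_\theta\setminus U_\theta$. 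I would assume (as holds generically for a non-degenerate Gaussian posterior) that the three numbers $\Pr(\vx\in H_\theta)$, $\Pr(\vx\in L_\theta)$, $\Pr(\vx\in U_\theta)$ are never exactly tied, so that the classification rule \eqref{eq:classification_rule_H}--\eqref{eq:classification_rule_D} assigns each $\vx$ a single label; then the event of interest is $\bigcup_{\vx\in\cX}M_\vx$, where $M_\vx$ denotes the event that $\vx$'s assigned label is incorrect.

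The core step is to evaluate $\Pr(M_\vx)$ for a fixed $\vx$. If the rule places $\vx$ in $\tilde{A}_\theta$ with $A\in\{H,L,U\}$, then $M_\vx=\{\vx\notin A_\theta\}$ and, by the rule, $\Pr(\vx\in A_\theta)=r^{\rm max}(\vx)$, so $\Pr(M_\vx)=1-r^{\rm max}(\vx)$ irrespective of which of the three sets $\vx$ landed in. It then remains only to identify $1-r^{\rm max}(\vx)$ with $r^{\rm min}(\vx)$. Using $\min\{1-a,1-b,1-c\}=1-\max\{a,b,c\}$ together with the complementarities $\Pr(\vx\in H_\theta)=1-\Pr(\vx\in L_\theta)$ and $\Pr(\vx\notin U_\theta)=1-\Pr(\vx\in U_\theta)$,
\begin{align*}
1-r^{\rm max}(\vx)
&= \min\{1-\Pr(\vx\in H_\theta),\,1-\Pr(\vx\in L_\theta),\,1-\Pr(\vx\in U_\theta)\} \\
&= \min\{\Pr(\vx\in L_\theta),\,\Pr(\vx\in H_\theta),\,\Pr(\vx\notin U_\theta)\} = r^{\rm min}(\vx),
\end{align*}
hence $\Pr(M_\vx)=r^{\rm min}(\vx)$.

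Finally I would invoke the union bound, $\Pr\big(\bigcup_{\vx\in\cX}M_\vx\big)\le\sum_{\vx\in\cX}\Pr(M_\vx)=\sum_{\vx\in\cX}r^{\rm min}(\vx)$, and pass to complements to obtain \eqref{eq:ineq_of_sc}. (Invoking the stopping rule \eqref{eq:stopping_criterion} at termination, where the right-hand side is at least $\delta$, then yields the $(\epsilon,\delta)$-accuracy claim; but the inequality of the theorem itself holds at every iteration, regardless of stopping.)

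I do not expect a genuine obstacle: the argument is just the complementarity identity plus a union bound. The one place that needs care is the handling of ties in $r^{\rm max}(\vx)$. If the classification rule is read literally and, say, $\Pr(\vx\in H_\theta)=\Pr(\vx\in L_\theta)$, then $\vx$ lies simultaneously in $\tilde{H}_\theta$ and $\tilde{L}_\theta$, $M_\vx$ becomes $\{\vx\notin H_\theta\}\cup\{\vx\notin L_\theta\}$ which has probability $1$, and the per-point bound collapses; so the rule must be understood to assign a unique label per point, breaking ties arbitrarily — and since exact ties have probability zero under a non-degenerate Gaussian posterior, nothing is lost. A secondary, routine point is keeping the error directions straight: for a point placed in $\tilde{U}_\theta$ the relevant error is $\vx\notin U_\theta$, with probability $1-\Pr(\vx\in U_\theta)=\Pr(\vx\notin U_\theta)$, which is precisely the third argument appearing in $r^{\rm min}$.
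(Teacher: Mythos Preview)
Your argument is correct and considerably more direct than the paper's. Both proofs finish with the same union bound over $\cX$, and both ultimately rest on the per-point equality $\Pr(M_\vx)=r^{\rm min}(\vx)$; the difference is entirely in how that equality is obtained. You get it in two lines from the complementarity identity $1-r^{\rm max}(\vx)=r^{\rm min}(\vx)$, which follows immediately from $\Pr(\vx\in H_\theta)+\Pr(\vx\in L_\theta)=1$ and $\Pr(\vx\in U_\theta)+\Pr(\vx\notin U_\theta)=1$. The paper instead introduces auxiliary indicators $z(\vx)=\one_{\mathbb{R}^+}(\hat f(\vx)-\theta)$ and $w(\vx)=\one_{\cE}(\hat f(\vx)-\theta)$, tabulates their joint distribution over the four regions $(-\infty,\theta-\epsilon/2]$, $(\theta-\epsilon/2,\theta]$, $(\theta,\theta+\epsilon/2]$, $(\theta+\epsilon/2,\infty)$, and carries out a nine-case analysis to evaluate $\Pr(|z(\vx)-\bE[z(\vx)]|\le\gamma(\vx),\,w(\vx)\ge\eta(\vx))$ for general thresholds $\gamma,\eta$ (Lemma~A.1); a second lemma (Lemma~A.2) then identifies that event with correct classification under the rule \eqref{eq:classification_rule_H}--\eqref{eq:classification_rule_D}. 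What the paper's longer route buys is a formula valid for arbitrary $(\gamma,\eta)$, which they exploit later to evaluate the left-hand side of \eqref{eq:ineq_of_sc} by sampling with numerically stable thresholds; your route exposes the underlying reason the bound is exactly $r^{\rm min}(\vx)$ --- the rule is simply a maximum-a-posteriori label among $\{H_\theta,L_\theta,U_\theta\}$, so its error probability is $1-r^{\rm max}(\vx)$, and the complementarities turn that into $r^{\rm min}(\vx)$. Your caveat about ties is appropriate and matches what the paper leaves implicit.
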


The proof is shown in Appendix~\ref{app:proofs}. From this theorem, when Eq.~\eqref{eq:stopping_criterion} is satisfied, stopping LSE guarantees that $(\tilde{H}_{\theta},\tilde{L}_{\theta},\tilde{U}_{\theta})$ is $\epsilon$-accurate with a probability of at least $\delta$.
Therefore, we refer to the LSE that uses the combination of the proposed acquisition function and stopping criterion as the $(\epsilon, \delta)$-{\it{accurate}} LSE. 
Since the left-hand side of Eq.~\eqref{eq:ineq_of_sc} can be evaluated by sampling functions according to the GP posterior distribution, we provide the tightness of the proposed lower bound in the Appendix~\ref{app:thm_and_lb}.

By using theorem~\ref{thm:stopping_criterion}, we can also guarantee the lower bound of performance measures. Here, we present only the lower bound of the F-score as follows.
\begin{proposition}
\label{prop:lower_bound_performance_measure}
If we assume that $\tilde{H}_{\theta}$,$\tilde{L}_{\theta}$ and $\tilde{U}_{\theta}$ are determined by using the classification rule of Eqs.~\eqref{eq:classification_rule_H},\eqref{eq:classification_rule_L} and \eqref{eq:classification_rule_D}, then the  inequality
\begin{align*}
\text{F-score} \geq& \frac{2 |\tilde{H}_\theta|}{2|\tilde{H}_\theta| + |\tilde{U}_\theta|}
\end{align*}
holds with probability $1-\sum_{\vx \in \cX}r^{\rm min}(\vx)$.
\end{proposition}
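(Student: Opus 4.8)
The plan is to condition on the high‑probability event furnished by Theorem~\ref{thm:stopping_criterion} and then argue purely deterministically on that event. By Theorem~\ref{thm:stopping_criterion}, the event $E$ that $(\tilde{H}_{\theta},\tilde{L}_{\theta},\tilde{U}_{\theta})$ is $\epsilon$‑accurate satisfies $\Pr(E)\ge 1-\sum_{\vx\in\cX}r^{\rm min}(\vx)$, so it suffices to show that the claimed F‑score inequality holds \emph{whenever} the triplet is $\epsilon$‑accurate; the probabilistic statement then follows immediately.

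First I would fix the positive class to be the true upper level set $H_\theta$, so that precision, recall, and the F‑score are evaluated against $H_\theta$ with $\tilde{H}_\theta$ as the set of predicted positives. On $E$ we have $\tilde{H}_\theta\subseteq H_\theta$, hence every predicted positive is a genuine positive and precision $P=|\tilde{H}_\theta\cap H_\theta|/|\tilde{H}_\theta|=1$ (the degenerate case $\tilde{H}_\theta=\emptyset$ makes the stated right‑hand side equal to $0$ and is handled by convention). For recall, I would use that the classification rule of Eqs.~\eqref{eq:classification_rule_H}--\eqref{eq:classification_rule_D} assigns each $\vx\in\cX$ to exactly one of $\tilde{H}_\theta,\tilde{L}_\theta,\tilde{U}_\theta$, together with $\tilde{L}_\theta\subseteq L_\theta$ from $\epsilon$‑accuracy and $H_\theta\cap L_\theta=\emptyset$ by definition: any truly positive point not captured by $\tilde{H}_\theta$ cannot lie in $\tilde{L}_\theta$, so it must lie in $\tilde{U}_\theta$, i.e. $H_\theta\setminus\tilde{H}_\theta\subseteq\tilde{U}_\theta$. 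Consequently $|H_\theta|=|\tilde{H}_\theta|+|H_\theta\setminus\tilde{H}_\theta|\le |\tilde{H}_\theta|+|\tilde{U}_\theta|$, which yields recall $R=|\tilde{H}_\theta|/|H_\theta|\ge |\tilde{H}_\theta|/\big(|\tilde{H}_\theta|+|\tilde{U}_\theta|\big)$.

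Finally I would substitute into the identity $\text{F-score}=2PR/(P+R)$ with $P=1$, giving $\text{F-score}=2R/(1+R)$, which is increasing in $R$ on $[0,1]$; plugging in the lower bound on $R$ and simplifying the fraction produces $\text{F-score}\ge 2|\tilde{H}_\theta|/\big(2|\tilde{H}_\theta|+|\tilde{U}_\theta|\big)$. Intersecting with the event $E$ completes the argument.

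I do not expect any deep obstacle here; the proof is essentially a bookkeeping argument downstream of Theorem~\ref{thm:stopping_criterion}. The points that require care are: (i) fixing the convention for the F‑score in the presence of the undetermined set $\tilde{U}_\theta$ (undetermined points are neither retrieved nor counted as true positives, and the stated bound is precisely what this convention gives); (ii) verifying that the classification rule really partitions $\cX$, including tie‑breaking, so that "missed positives lie in $\tilde{U}_\theta$" is legitimate; and (iii) the monotonicity of $r\mapsto 2r/(1+r)$ and the boundary cases $\tilde{H}_\theta=\emptyset$ or $H_\theta=\emptyset$.
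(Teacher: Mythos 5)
Your proposal is correct and follows essentially the same route as the paper: both condition on the $\epsilon$-accuracy event of Theorem~\ref{thm:stopping_criterion}, identify $|\tilde{H}_\theta|$ as the true positives, and charge at most $|\tilde{U}_\theta|$ to the error terms in the worst case. Your detour through precision $=1$, a recall lower bound, and monotonicity of $r\mapsto 2r/(1+r)$ is just a repackaging of the paper's direct use of $\text{F-score}=2\mathrm{TP}/(2\mathrm{TP}+\mathrm{FP}+\mathrm{FN})$ with the undetermined points assigned to $\mathrm{FP}/\mathrm{FN}$ in the worst case.
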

The lower bound of other performance measures such as accuracy, recall, precision, and specificity, and their proofs are shown in Appendix~\ref{app:proofs}. In \citet{Qing2022}, the lower bound of the F-score could not be analytically computed, so it is estimated using sampling. In contrast, our method provides lower bounds for the various measures such as F-score. 

The standard classification rule and the proposed classification rule can be considered the same under certain conditions. The standard classification rule can be interpreted as 
follows: $\vx$ is classified into upper-level set when $\Pr(\vx \in H_\theta)> \Phi(\beta)$ is satisfied, $\vx$ is classified into lower-level set when $\Pr(\vx \in L_\theta)> \Phi(\beta)$ is satisfied, and $\vx$ is classified into undetermined set when the both of conditions are not satisfied. Regarding $\Pr(\vx \in U_\theta)$ as $\Phi(\beta)$, the standard classification is equivalent to the proposed classification rule under the assumption that $\Phi(\beta)>0.5$\footnote{The condition $\Phi(\beta)>0.5$ is added because in the standard classification rule, when $\Phi(\beta) \leq 0.5$, there is a possibility that $\vx$ belongs to both the upper-level and lower-level sets. The standard classification rule often uses values such as $\beta=1.96$, which corresponds to $\Phi(\beta)=0.975$, implicitly assuming $\Phi(\beta) > 0.5$}. 

By using the above relationship, we can show that the triplet $(\tilde{H}_{\theta},\tilde{L}_{\theta},\tilde{U}_{\theta})$ is $\epsilon$-accurate in the case of the standard classification rule. Here, $\beta$ in the standard classification rule and $\epsilon$ in the proposed classification rule can be mutually converted. Note that $\beta$ also changes for each $\vx$ in general even if $\epsilon$ is common to all $\vx$ since $\Pr(\vx \in U_\theta)$ varies depending on $\vx$. We denote $\Pr(\vx \in U_\theta)$ as $g(\epsilon \mid \vx)$, then there is an inverse mapping of $g(\epsilon \mid \vx)$ because $g(\cdot \mid \vx): \bR^+ \rightarrow (0, 1)$ is a strictly increasing function with respect to $\epsilon\in\bR^+$. Therefore, the following mutual conversions between $\epsilon$ and $\beta$ hold:
\begin{equation*}
    \beta=\Phi^{-1}\left(g(\epsilon \mid \vx)\right), \quad \epsilon=g\inv(\Phi(\beta) \mid \vx).
\end{equation*}
With these conversions, we can show that the triplet $(\tilde{H}_\theta, \tilde{L}_\theta, \tilde{U}_\theta)$ is $\epsilon$-accurate when we use the standard classification rule as follows:
\begin{corollary}
\label{col:prob_ineq}
We assume that $\tilde{H}_{\theta}$,$\tilde{L}_{\theta}$ and $\tilde{U}_{\theta}$ are determined by using the classification rule of Eqs.~\eqref{eq:standard_classification_rule_H}, \eqref{eq:standard_classification_rule_L}, and \eqref{eq:standard_classification_rule_U} with $\beta$ and $\Phi(\beta) > 0.5$. Let $\tilde{r}^{\rm min}(\vx)=\min\{\Pr(\vx \in H_{\theta}), \Pr(\vx \in L_{\theta}), 1-\Phi(\beta)\}$. Then, the following inequality holds:
\begin{equation}
    \Pr((\tilde{H}_{\theta},\tilde{L}_{\theta},\tilde{U}_{\theta}) \text{ is $g\inv(\Phi(\beta) \mid \vx)$-accurate}) \geq  1-\sum_{\vx \in \cX}\tilde{r}^{\rm min}(\vx).
\end{equation}
\end{corollary}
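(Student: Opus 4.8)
The plan is to reduce the statement to the argument already behind Theorem~\ref{thm:stopping_criterion}, by recognizing that once the per-point margin is taken to be $\epsilon(\vx):=g\inv(\Phi(\beta)\mid\vx)$, the standard classification rule of Eqs.~\eqref{eq:standard_classification_rule_H}--\eqref{eq:standard_classification_rule_U} is exactly the proposed rule of Eqs.~\eqref{eq:classification_rule_H}--\eqref{eq:classification_rule_D} with the quantity $\Pr(\vx\in U_\theta)$ held fixed at the constant $\Phi(\beta)$. So the work splits into (i) translating the interval-based standard rule into probability-threshold form, (ii) identifying it with the proposed rule under the margin $\epsilon(\vx)$, and (iii) rerunning the per-point error bound plus a union bound.

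First I would rewrite the standard rule probabilistically. From Eq.~\eqref{eq:ProbZ} we have $\Pr(\vx\in H_\theta)=\Phi\!\big(\tfrac{\mu_N(\vx)-\theta}{\sigma_N(\vx)}\big)$, so $\mu_N(\vx)-\beta\sigma_N(\vx)>\theta$ is equivalent to $\Pr(\vx\in H_\theta)>\Phi(\beta)$, and symmetrically $\mu_N(\vx)+\beta\sigma_N(\vx)<\theta$ is equivalent to $\Pr(\vx\in L_\theta)>\Phi(\beta)$, with $\tilde U_\theta$ the complement of these two events. Since $\Phi(\beta)>1/2$ and $\Pr(\vx\in H_\theta)+\Pr(\vx\in L_\theta)=1$, at most one of the two inequalities can hold for any given $\vx$, so $(\tilde H_\theta,\tilde L_\theta,\tilde U_\theta)$ is a genuine partition of $\cX$. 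Next, by definition of $g(\cdot\mid\vx)$ and the choice $\epsilon(\vx)=g\inv(\Phi(\beta)\mid\vx)$, the difficult set $U_\theta=\{\vx: f(\vx)-\theta\in(-\epsilon(\vx)/2,\epsilon(\vx)/2]\}$ satisfies $\Pr(\vx\in U_\theta)=\Phi(\beta)$ for every $\vx$. Substituting this into Eqs.~\eqref{eq:classification_rule_H}--\eqref{eq:classification_rule_D} and using $\Phi(\beta)>1/2$ to note that $\Pr(\vx\in H_\theta)>\Phi(\beta)$ already forces $\Pr(\vx\in H_\theta)>\Pr(\vx\in L_\theta)$ (and symmetrically), the proposed rule collapses to precisely the probability-threshold form of the standard rule just derived; the only discrepancy is the boundary case $\Pr(\vx\in H_\theta)=\Phi(\beta)$, which both rules place in $\tilde U_\theta$.

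Then I would repeat the union-bound step with $\Pr(\vx\in U_\theta)=\Phi(\beta)$ fixed. Treating the true $f$ as drawn from the posterior $p(\hat f\mid\vy)$ so that the triplet is deterministic and the randomness lies only in $f$: for $\vx\in\tilde H_\theta$ the label is wrong iff $f(\vx)\le\theta$, an event of probability $\Pr(\vx\in L_\theta)$, and since $\Pr(\vx\in H_\theta)>\Phi(\beta)>1/2>1-\Phi(\beta)>\Pr(\vx\in L_\theta)$ we get $\Pr(\vx\in L_\theta)=\min\{\Pr(\vx\in H_\theta),\Pr(\vx\in L_\theta),1-\Phi(\beta)\}=\tilde r^{\rm min}(\vx)$. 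Symmetrically the error probability for $\vx\in\tilde L_\theta$ equals $\Pr(\vx\in H_\theta)=\tilde r^{\rm min}(\vx)$. For $\vx\in\tilde U_\theta$ the label is wrong iff $f(\vx)-\theta\notin(-\epsilon(\vx)/2,\epsilon(\vx)/2]$, of probability $1-\Pr(\vx\in U_\theta)=1-\Phi(\beta)$; and $\Pr(\vx\in H_\theta)\le\Phi(\beta)$ and $\Pr(\vx\in L_\theta)\le\Phi(\beta)$ give $1-\Phi(\beta)\le\min\{\Pr(\vx\in H_\theta),\Pr(\vx\in L_\theta)\}$, so again the error probability is $\tilde r^{\rm min}(\vx)$. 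Hence the per-point error probability is $\tilde r^{\rm min}(\vx)$ in every case, and a union bound over $\vx\in\cX$ yields $\Pr((\tilde H_\theta,\tilde L_\theta,\tilde U_\theta)\text{ not }g\inv(\Phi(\beta)\mid\vx)\text{-accurate})\le\sum_{\vx\in\cX}\tilde r^{\rm min}(\vx)$, which is the claim.

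I expect the main obstacle to be bookkeeping rather than any real difficulty: carefully matching the interval-based standard rule to the probability-threshold form, keeping track of the fact that the effective margin $\epsilon(\vx)$ — and therefore the set $U_\theta$ — depends on $\vx$ even though $\beta$ is shared, and verifying that the three case-wise orderings that collapse each $\min\{\cdot,\cdot,\cdot\}$ to the relevant term all follow from the single hypothesis $\Phi(\beta)>1/2$. The boundary case $\mu_N(\vx)=\theta+\beta\sigma_N(\vx)$ (equivalently $\Pr(\vx\in H_\theta)=\Phi(\beta)$) should be flagged explicitly so that the partition property and the per-point error identities remain exact.
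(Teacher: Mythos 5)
Your proposal is correct and follows essentially the same route as the paper: rewrite the standard rule in probability-threshold form ($\Pr(\vx\in H_\theta)>\Phi(\beta)$, $\Pr(\vx\in L_\theta)>\Phi(\beta)$), observe that the choice $\epsilon(\vx)=g\inv(\Phi(\beta)\mid\vx)$ makes $\Pr(\vx\in U_\theta)=\Phi(\beta)$ so the standard rule coincides with the proposed rule and $\tilde{r}^{\rm min}(\vx)$ with $r^{\rm min}(\vx)$, and then apply the guarantee of Theorem~\ref{thm:stopping_criterion}. The only difference is presentational: where the paper invokes Theorem~\ref{thm:stopping_criterion} directly, you inline its content for this special case---identifying the per-point failure probability with $\tilde{r}^{\rm min}(\vx)$ in each of the three cases and closing with Boole's inequality---which is exactly what Lemmas~\ref{lem:general_prob_ineq} and~\ref{lem:specific_events_in_each_case} accomplish in the paper, so your version is sound and somewhat more self-contained.
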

The proof is shown in Appendix~\ref{app:proofs}. 

\subsection{Choice of $\epsilon$ and $\delta$}

We explain how to determine the parameters $\epsilon$ and $\delta$, and its sensitivity. Regarding $\delta$, the proposed lower bound tends to increase monotonically, and the bound becomes tighter as the true probability of $\epsilon$-accuracy approaches 1 as shown in Appendix.~\ref{app:thm_and_lb}. Therefore, we just have to set $\delta$ close to 1, such as $\delta = 0.99$. Since the stopping time does not change significantly when $\delta$ is close to 1, we can say that the stopping timing tends to be insensitive to the choice of $\delta$. 

On the other hand, it is difficult to set $\epsilon$ appropriately, since it depends on the range of the objective function and the noise variance. In this study, instead of determining $\epsilon$ directly, we determine $\epsilon$ as follows:\footnote{When using a stationary kernel, $\epsilon$ is independent of $\vx$. Please refer to the Appendix.~\ref{seq:determine_margin} for the detailed derivation of this equation.}
\begin{equation}
    \epsilon(\vx) = 2\sqrt{\frac{\lambda^{-1} k(\vx, \vx)}{\lambda^{-1} +L k(\vx, \vx)}}\Phi^{-1}\left(1-\frac{1 - \delta}{2|\cX|}\right),
\end{equation}
where $L$ is a parameter set by the user instead of $\epsilon$, and it can be interpreted as the minimum number of observations required per candidate point, even in cases where classification is not possible. As shown in the Appendix.~\ref{app:L}, $L$ is less sensitive to the range of the function and the noise variance than directly specifying $\epsilon$, making it a more robust parameter.

\subsection{Computational cost}
The proposed stopping criterion only requires the cumulative distribution function (CDF) of the standard normal distribution, and it does not require any sampling. Since the CDF of the standard normal distribution can be efficiently computed using libraries, the computational cost increases only linearly with the number of candidate points. In contrast, F-scores sampling (FS)~\citep{Qing2022} requires sampling functions from the posterior distribution, resulting in a quadratic increase in computational cost with respect to the number of candidate points. Therefore, compared to FS, the proposed stopping criterion remains computationally feasible even as the number of candidate points increases.

\section{Experimental results}
\label{sec:experimental_results}
In this section, we demonstrate the effectiveness of the proposed acquisition function using both synthetic data and a practical application for estimating the red zone in silicon ingots.\footnote{In these experiments, we use a Macbook Pro with Apple M1 Max (10-core CPU, 32-core GPU and 32GB memory), and implemented with Python and library GPy~\citep{gpy2014}. The code is available at \url{https://github.com/hideaki-ishibashi/stopping_LSE}} 
In all experiments, the threshold $\theta$ that defines the level set is a pre-fixed value, but the results of setting $\theta$ to several different values are also shown in the Appendix~\ref{app:theta}. Note that consistent results are obtained even when different thresholds are used.

\subsection{LSE for test functions}
Aiming at demonstrating the applicability of the proposed method across functions with various shapes, we evaluate the proposed method using test functions commonly used as benchmarks in the study of optimization algorithms. The test functions employed in this experiment are the {\tt{Rosenbrock}}, {\tt{Branin}}, and {\tt{Cross in tray}} functions\footnote{\url{https://www.sfu.ca/~ssurjano/optimization.html}}, each representing a different landscape: one with a single local minimum with a spherical vicinity, one with a valley-like structure, and one with multiple local minima. Additional results are discussed in the Appendix~\ref{app:testfunc}. For each function, thresholds are set, and candidate points exceeding these thresholds are considered part of the true upper set, while those below are viewed as the true lower set. The thresholds are set as follows: 
$\theta=100$ for the {\tt{Rosenbrock}} function and the {\tt{Branin}} function, $\theta=-1.5$ for the {\tt{Cross in tray}} function. 
Although these test functions have continuous domains, they are discretized into a grid of $20 \times 20 = 400$ points, which serve as observation candidates.  In the Appendix~\ref{app:effect_n_candidates}, we show that the stopping timing of the proposed method tends not to change even if the number of candidate points increases. Any of these points may be selected by acquisition functions, and repeated selections of the same points is allowed. Gaussian noise is added to the observations, with standard deviations set according to the range of each test function: $\sigma_{\rm noise}=30$ for the {\tt{Rosenbrock}} function, $\sigma_{\rm noise}=20$ for the {\tt{Branin}} function, and $\sigma_{\rm noise}=0.01$ for the {\tt{Cross in tray}} function.

The proposed method is evaluated based on both the performance of the acquisition function and the efficiency achieved by early stopping.  
Generally, the performance of the LSE acquisition function is assessed using the F-score, which compares the predicted upper/lower level sets to the true upper/lower level sets over the candidate points. 
Not all candidate points may be classified in every search due to the classification rules.  
For the evaluation purpose, unclassified candidate points are assigned to the upper or lower level sets only for the F-score calculation if the posterior mean of GP exceeds or falls below the threshold, respectively. 
The performance of the acquisition function is evaluated based on the mean and variance of the convergence speed of the F-score when the LSE algorithm is executed using five randomly selected initial points.
On the other hand, the effectiveness of the stopping criterion is evaluated based on the stopping time and the F-score at that moment. In other words, a good stopping criterion allows the algorithm to stop with fewer observations while achieving a high F-score.

 \paragraph{Comparison methods} 
The level set estimation problem is also related to Bayesian optimization~\citep{Nguyen2020-nb} and bandit problems~\citep{10.5555/3104322.3104451}, and its applications range from brain science~\citep{MARCHINI20041203} to astronomy~\citep{Beaky1992-vj,Nikakhtar2018-rk} for example. 
Various algorithms (acquisition functions) have been proposed, but in this study, we compare those that are considered particularly major types and important in terms of practical performance:
in addition to uncertainty sampling (US), which selects points that maximize the predictive variance of the Gaussian process as a baseline, we consider Straddle~\citep{bryan2005active}, MILE~\citep{gotovos2013active}, RMILE~\citep{gotovos2013active}, and MELK~\citep{DBLP:conf/aistats/MasonJMCJN22}, which is a recently proposed sampling method based on experimental design. Although many other methods exist, they do not consistently outperform those mentioned here. It should also be noted that the main focus of this paper is the proposal of an acquisition function equipped with a stopping criterion. In these acquisition functions, candidate points are classified according to Eqs.~\eqref{eq:standard_classification_rule_H}, \eqref{eq:standard_classification_rule_L}, and \eqref{eq:standard_classification_rule_U}, where $\beta = 1.96$. On the other hand, in the proposed method, candidate points are classified according to Eqs.~\eqref{eq:classification_rule_H}, \eqref{eq:classification_rule_L}, and \eqref{eq:classification_rule_D}. The same value of $\beta = 1.96$ is used for Straddle, MILE, and RMILE. MELK assumes that candidate points are not reclassified and that multiple points are sampled simultaneously. Following the settings of the previous study~\citep{DBLP:conf/aistats/MasonJMCJN22}, MELK samples 10 points at a time without reclassifying candidate points. In contrast, other methods reclassify candidate points and sample one point at a time.
RMILE's robust adjustment parameter $\nu$ is set to $\nu = 0.1$ according to the previous studies~\citep{zanette2019robust,inatsu2021active}. The proposed acquisition function also requires setting a parameter $L$, which is conservatively set to $L=5$ to address the complex shape function based on the experimental results in Appendix~\ref{app:L}. The threshold for the proposed stopping criterion is set at $\delta=0.99$. To evaluate the stopping criterion of the proposed method, we consider two stopping criteria. One is a standard stopping criterion which stops LSE when all candidate points are classified' (we call this the {\it{fully classified (FC)}} criterion), and the other is a stopping criteria based on sampling F-scores~\citep{Qing2022} (the criterion referred to as F-score Sampling (FS)). The stopping times of these stopping criteria are compared with the stopping times when using the proposed acquisition function and stopping criterion.
In the FS criterion, as hyperparameters, we need to set the desired F-score and the probability of exceeding that F-score. In this experiment, we set the desired F-score and the probability to $0.95$ and $95$\% (that is, $5$th percentile). 

\paragraph{Hyper-parameter setting} 
In this experiment, we consider a GP with the mean function set to $\theta$ and the covariance function defined by a Gaussian kernel $k(\vx,\vx')=\rho \exp(-\frac{1}{2 l^2}\|\vx-\vx'\|^2)$. The mean function is set to $\theta$ to ensure that, in the absence of any observed data, the probability of unobserved candidate points being classified into either the upper or lower level set is $50\%$. This setting can be adjusted based on any prior knowledge available in real applications. The variance $\rho$ of the Gaussian kernel, the kernel width $l$, and the noise precision $\lambda$ are hyperparameters, which are estimated by maximizing the marginal likelihood of the observed data each time a search is conducted using LSE. To prevent large fluctuations in the hyperparameters with each search, gamma priors are placed on $\rho$ and $l$. Additionally, the noise precision $\lambda$ is constrained within the range $[10^{-6}, 10^6]$ to prevent it from becoming infinite.

\begin{figure*}[th]
    \centering
    \begin{tabular}{ccc}
    \includegraphics[height=4cm,width=4.2cm]{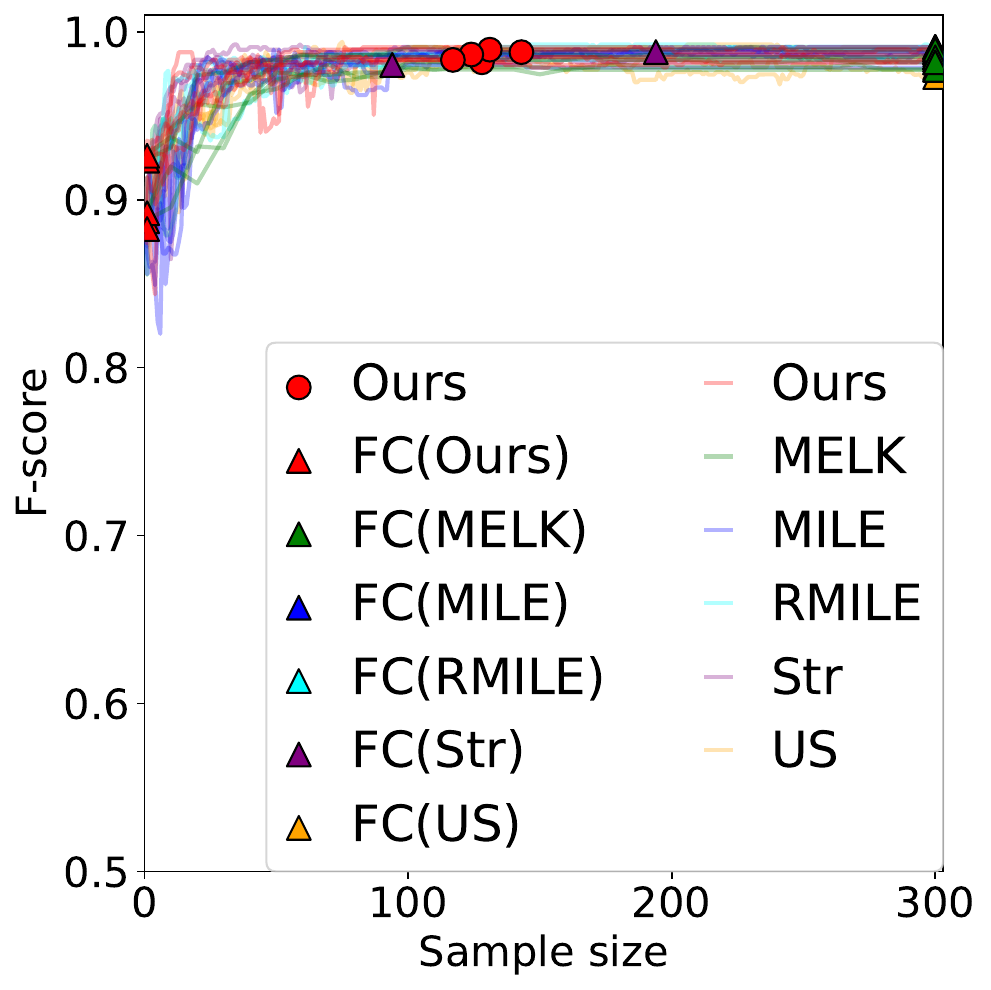}&    
    \includegraphics[height=4cm,width=4.2cm]{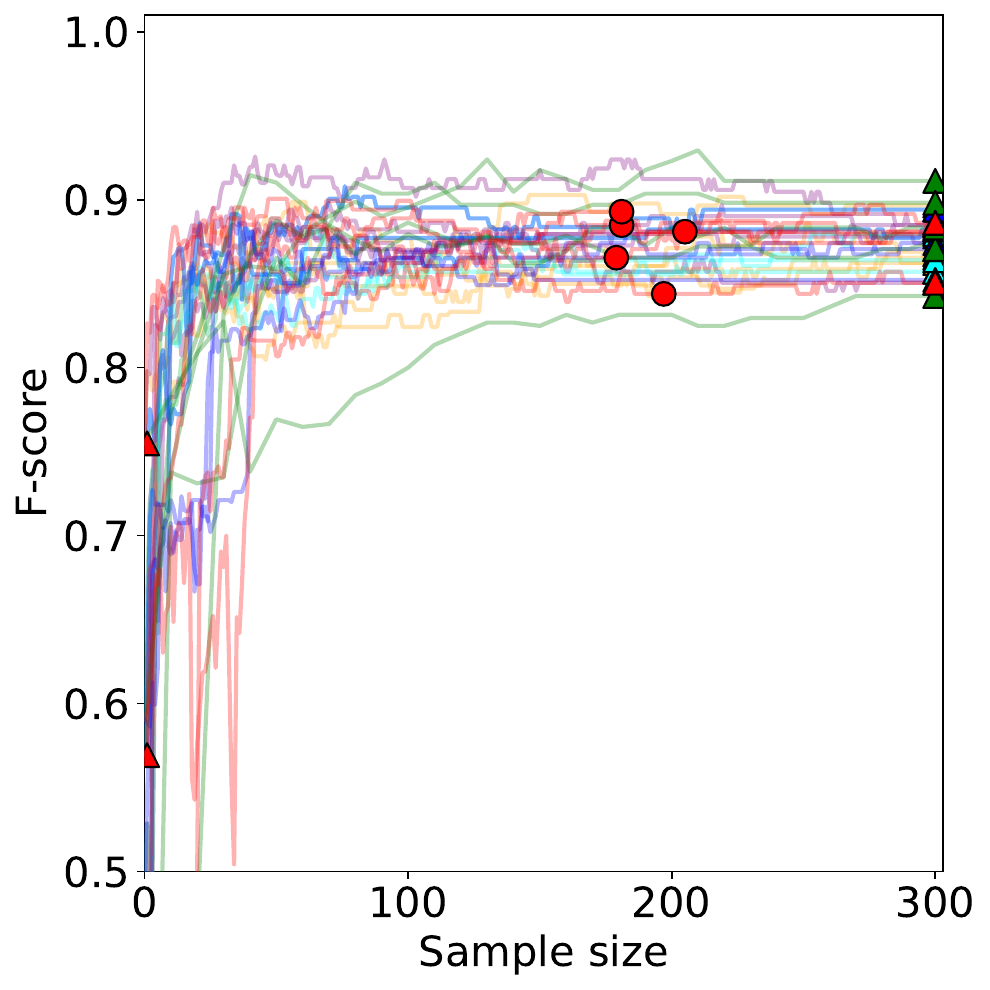}&
    \includegraphics[height=4cm,width=4.2cm]{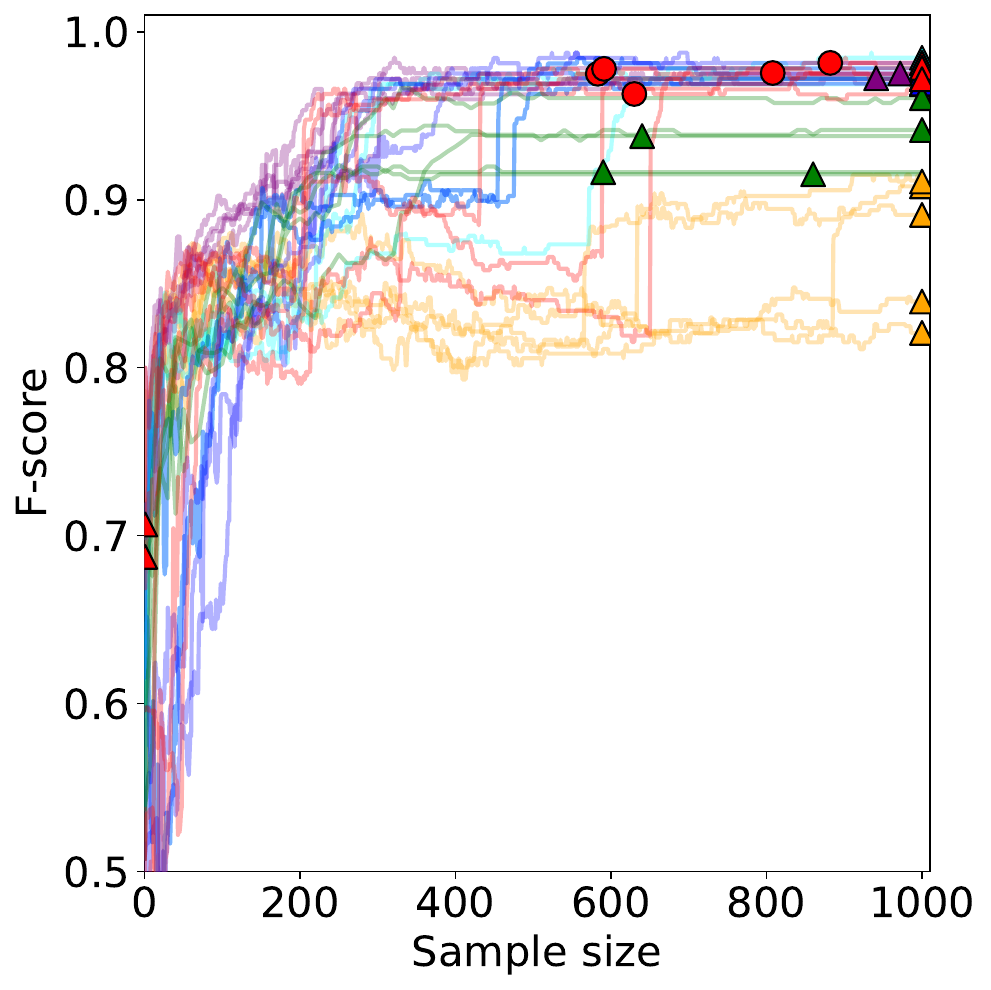}\\
    (a) {\tt{Rosenbrock}}& (b) {\tt{Branin}}& (c) {\tt{Cross in tray}}  \\
    \includegraphics[height=4cm,width=4.2cm]{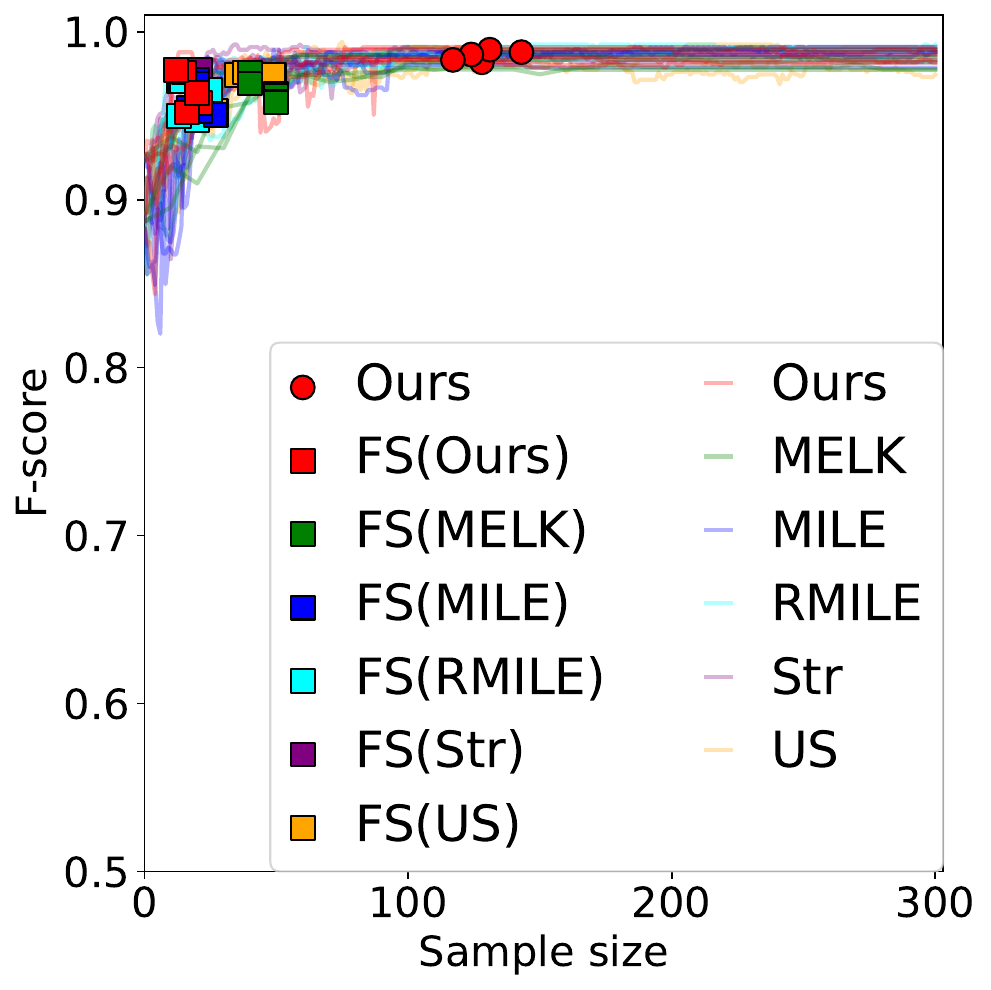}&    
    \includegraphics[height=4cm,width=4.2cm]{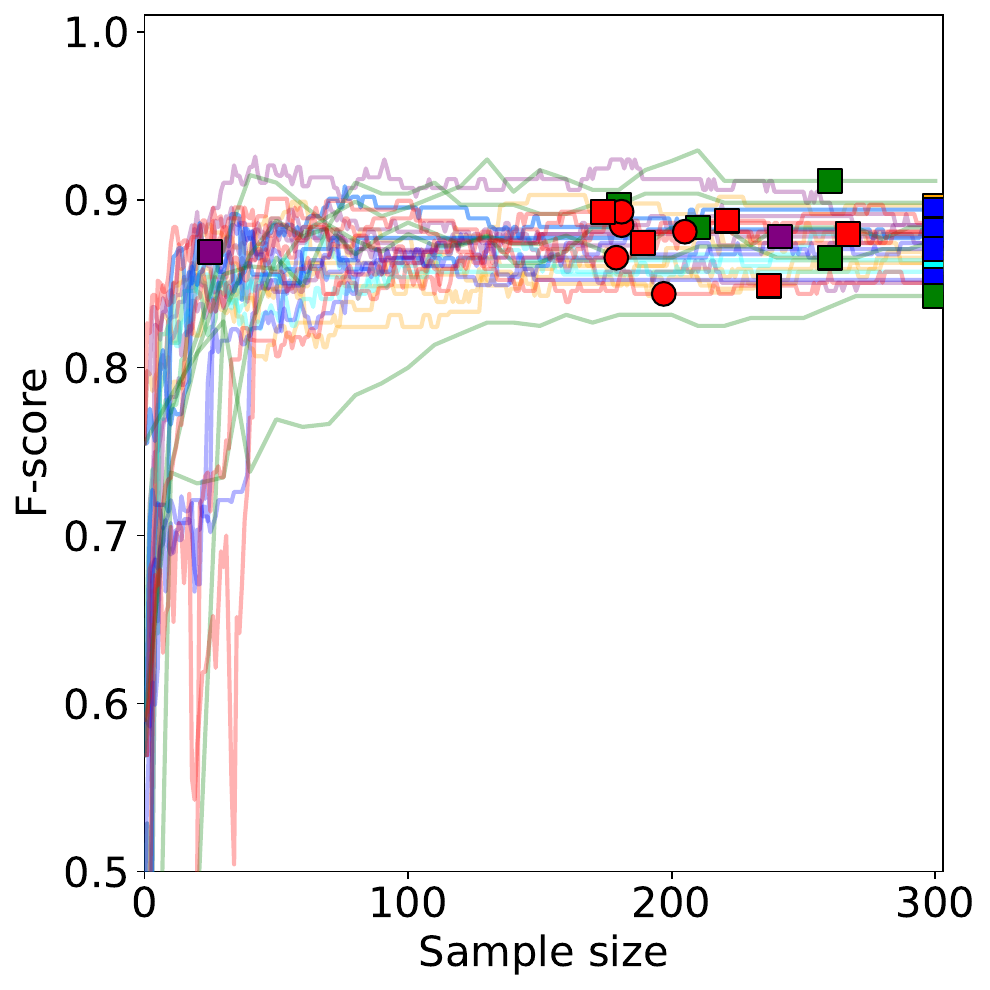}&
    \includegraphics[height=4cm,width=4.2cm]{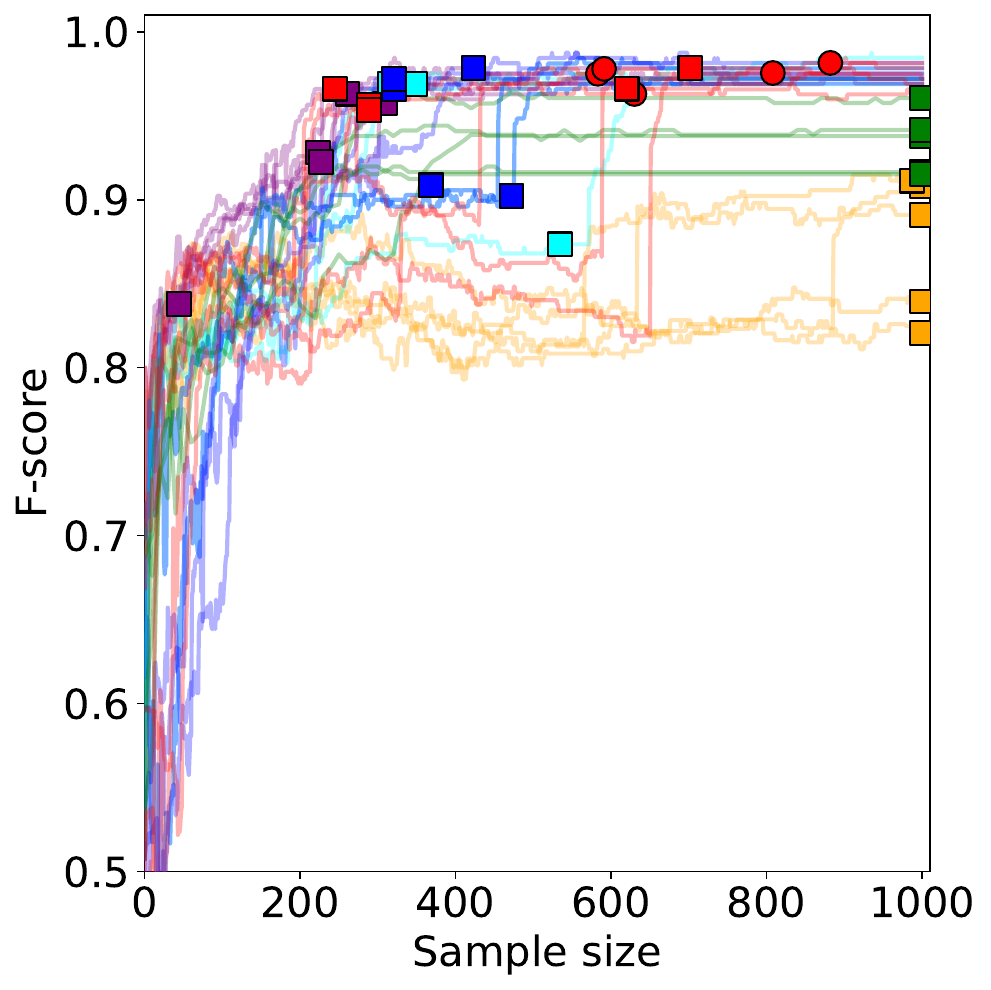}\\
    (d) {\tt{Rosenbrock}}& (e) {\tt{Branin}}& (f) {\tt{Cross in tray}}  \\
    \end{tabular}
    \caption{
    F-scores using each acquisition function and stoped timings with the proposed (Ours), F-score sampling (FS) and fully classified (FC) criteria for test functions. (a)--(c) Stopping time of FC and Ours. (d)--(f) Stopping times of FS and Ours.
    }
    \label{fig_result_test_func}
\end{figure*}

\paragraph{Results}
The F-scores for each acquisition function and the respective stopping timings are shown in Figs.~\ref{fig_result_test_func}. 
Although there are slight differences between individual test functions, no acquisition function, including the proposed method, significantly outperforms the baseline US or is markedly inefficient. Thus, it is crucial to stop LSE at the right moment when the F-scores have converged to enhance the efficiency of LSE. 
Regarding the stopping timings, the fully classified criterion often fails to stop the LSE even when the budget is fully utilized except for MELK in {\tt{Cross in tray}} function. The inability of the FC criterion to stop is due to the occurrence of difficult-to-classify candidate points when function values at candidate points equal the threshold, and classification becomes more challenging as noise is added to the data, distancing the function values from the threshold. 
In MELK, it is sometimes possible to stop LSE even when the fully classified stopping criteria are used, as shown in Fig.~\ref{fig_result_test_func}(c), since it does not reclassify candidate points. However, the F-score of MELK may be lower than that of other methods, as it cannot correct candidate points that have been misclassified. 
In the FS criterion, as shown in Fig.~\ref{fig_result_test_func}(d), when the F-score converges, LSE can be stopped regardless of the acquisition function used. However, as shown in Figs.~\ref{fig_result_test_func}(f), despite the F-score not having converged, FS stops LSE. This is because the desired F-score is set to 0.95 in this experiment. This value was suitable when the F-score converged to 1, as in case {\tt{Rosenbrock}}. However, it was not suitable when the noise was high, and the F-score did not converge to 1, as in cases of {\tt{Branin}} function. Moreover, when noise was low, FS stopped LSE before the F-score converged to 1. Therefore, it is necessary to set the appropriate desired F-score according to the situation in the FS. Furthermore, under the FS criterion, despite setting the desired threshold to $0.95$, the actual F-scores at the stopping time tend to be lower than $0.95$ as shown in Fig.~\ref{fig_result_test_func}(e) and (f). Therefore, in practical applications, the desired value should be set higher than expected.
In contrast, the proposed method, despite using the same parameters, can stop at the time when convergence occurs, regardless of where the F-score converges. This demonstrates that, compared to FS, the proposed method does not require the parameters of the stopping criterion to be finely tuned to the specific problem.

\subsection{Red-zone estimation of silicon ingots}
We demonstrate the effectiveness of the proposed method when using LSE to estimate the red zone in silicon ingots used in solar cells. The objective in this problem is to estimate regions contaminated with impurities (called the red zone) that are unsuitable for solar cell production. Typically, red zone estimation is performed through spatial mapping with measurement points placed in a regular grid, which is very time-consuming. Recently, the efficiency of red zone estimation using LSE has been proposed~\citep{Hozumi2023AdaptiveDA}. 
The data used in the experiments consist of lifetime measurements taken at grid points on two different types of silicon ingots, with each ingot measured at a grid of $161 \times 121$ points~\citep{Kutsukake_2015}. Hereinafter, the lifetime data from the first silicon ingot will be referred to as {\tt{Lifetime1}}, and from the second ingot as {\tt{Lifetime2}}. In both cases, the threshold is set to $\theta=230$.

The performance of LSE methods are evaluated, in the similar manner to the test functions, by observing the F-scores, and the F-score at the stopping timing, with initial values changed randomly five times. The same methodology as for the test functions was used for comparison, but once a candidate point is selected, it is not selected again to estimate the noise because we have only one observation for each point. 
The parameters of the acquisition functions, the GP prior, and the hyperparameter estimation method were employed in the same manner as for the test functions.

\begin{figure*}[th]
    \centering
    \begin{tabular}{cc}
    \includegraphics[height=4cm,width=4.2cm]{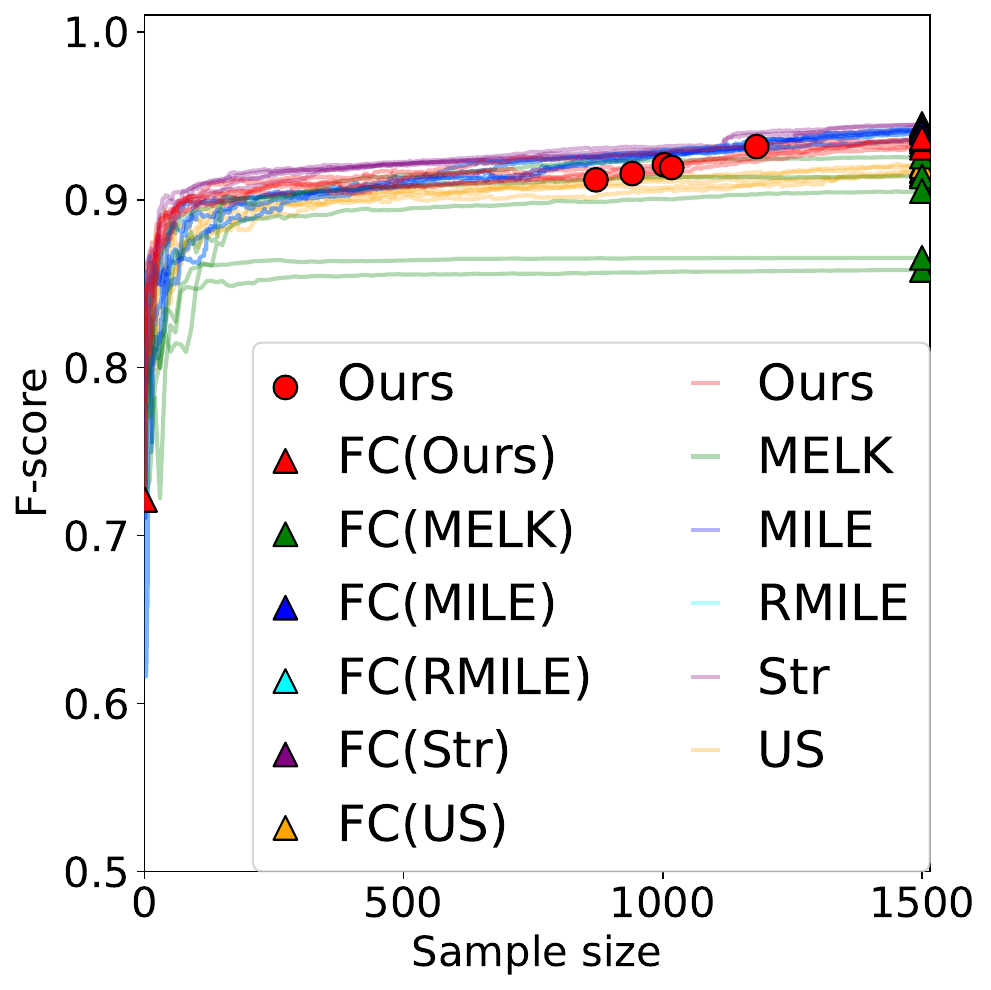}&
    \includegraphics[height=4cm,width=4.2cm]{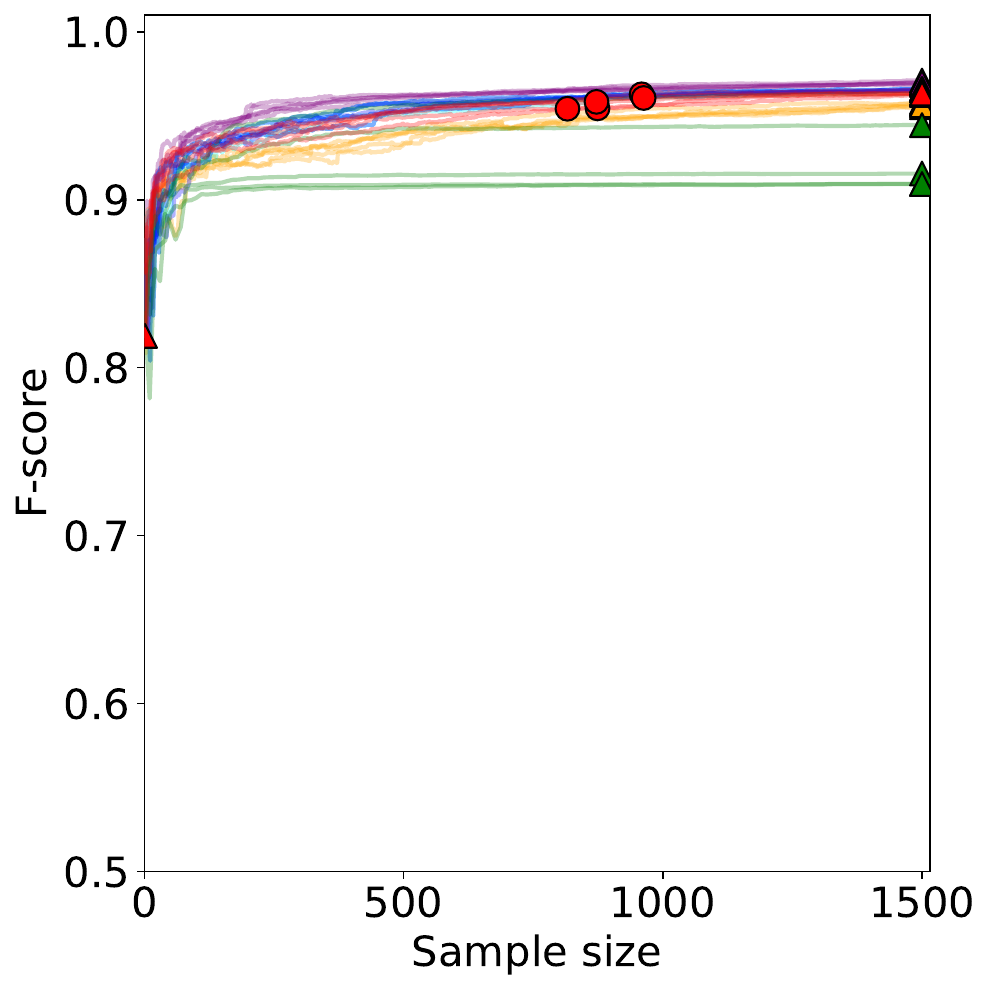} \\
    (a) {\tt{Lifetime1}}& (b) {\tt{Lifetime2}} \\
    \includegraphics[height=4cm,width=4.2cm]{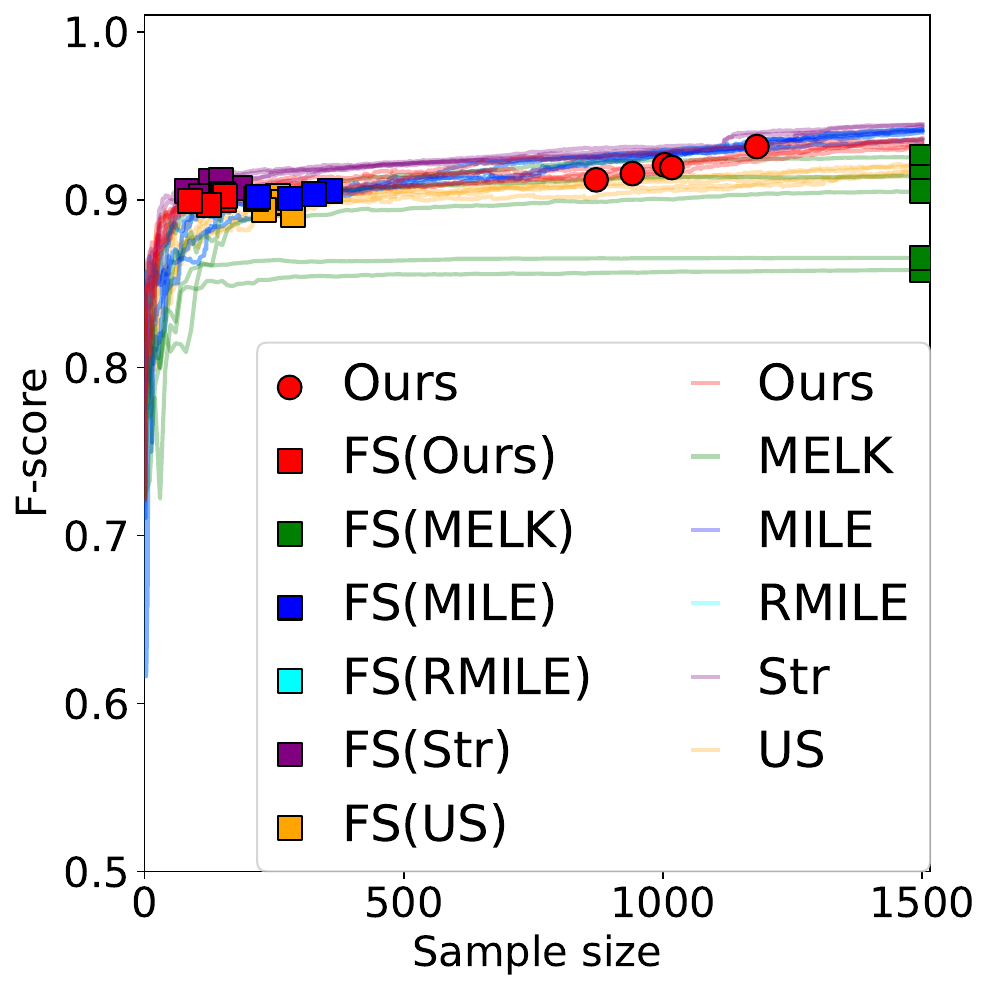}&
    \includegraphics[height=4cm,width=4.2cm]{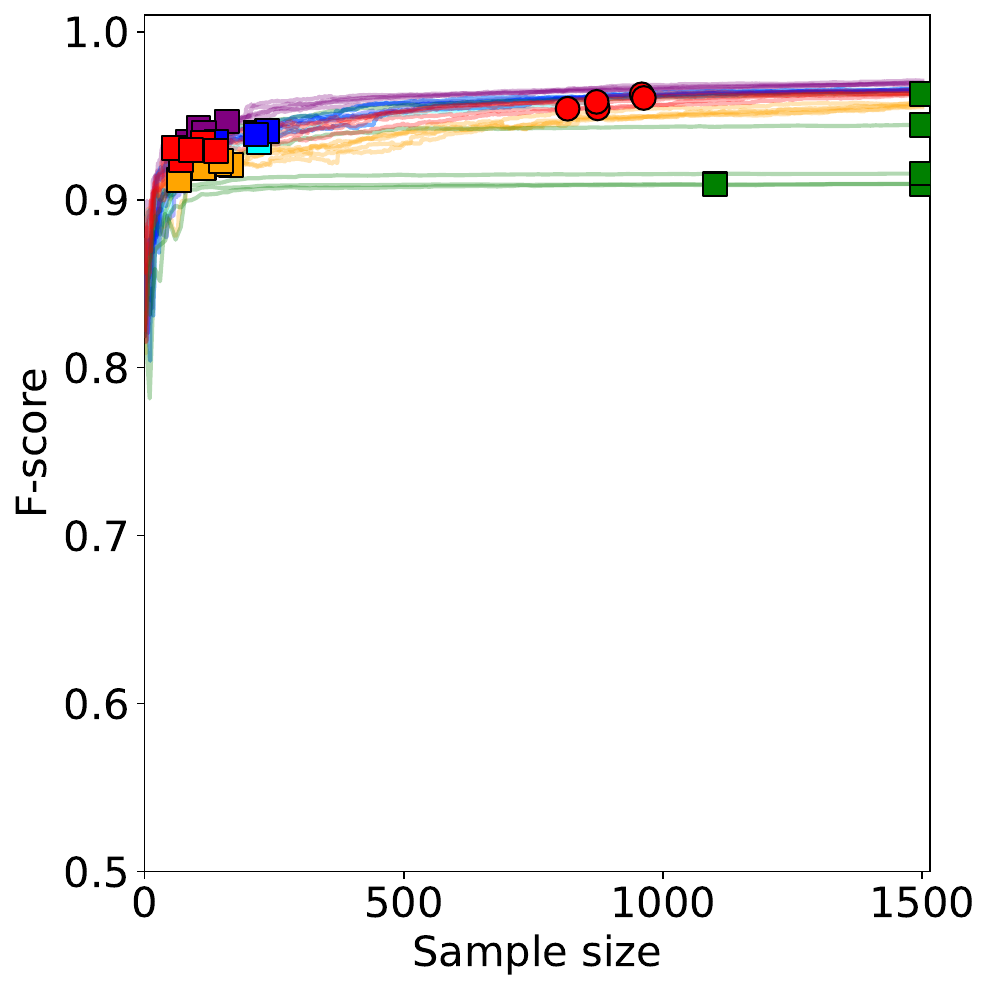} \\
    (c) {\tt{Lifetime1}}&    
    (d) {\tt{Lifetime2}} \\   
    \\
    \end{tabular}
    \caption{
    F-scores using each acquisition function and stoped timings with the proposed (Ours), F-score sampling (FS) and fully classified (FC) criteria for red zone estimation. (a), (b) Stopping time of FC and Ours. (c), (d) Stopping times of FS and Ours.
    }
    \label{fig_result_lifetime}
\end{figure*}

As shown in Figs.~\ref{fig_result_lifetime}, the transition of the F-score shows no significant differences regardless of the acquisition function used, except for MELK. In MELK, the F-score tends to converge to a low value. This is because MELK does not reclassify candidate points.
In the perspective of the stopping timings, the FC stopping criterion fails to stop even after the entire budget is used. On the other hand, the proposed criterion allows for early stopping once the F-scores converge. This is likely due to measurement noise, leading to difficult-to-classify candidate points. Thus, the proposed method effectively stops the LSE in red zone estimation. FS criterion stops LSE earlier than the proposed criterion. However, the F-score continues to gradually increase even after FS stops, and the appropriate stopping point varies depending on the situation.

\section{Discussion and Conclusion}
\label{sec:limitations}
In this paper, we proposed an acquisition function for level set estimation by directly modeling the difficulty of classifying into upper/lower sets. The proposed acquisition function is based on the notion of the $\epsilon$-accuracy~\citep{NIPS2016_ce78d1da}, and an adaptive determination method of the $\epsilon$ parameter is proposed. A stopping criterion for the algorithm was also proposed. When applied to both synthetic and real data, the proposed method performed comparably to existing methods in terms of acquisition function performance and was able to stop the algorithm early even in the presence of observation noise. Empirically, the proposed method tends to be conservative. This behavior can be beneficial in some cases but harmful in others. Balancing theoretical guarantees with more aggressive stopping remains an open problem for future work.
Another direction for future work future direction is extending the method to query-based problems where evaluation points are selected from a continuous domain~\citep{pmlr-v89-shekhar19a}. %
Additionally, extending the method to high-dimensional problems is also an important issue.

\section*{Acknowledgments}
This work was supported by Grants-in-Aid from the Japan Society for the Promotion of Science (JSPS) for Scientific Research (KAKENHI grant nos. JP23K28146 and JP24K20836 to K.M., JP24K15088 to H.I., and JP23K24909, 25H01494 and JPMJMI21G2 to H.H.).

\bibliography{main}
\bibliographystyle{tmlr}

\newpage

\appendix

\section{Proof of Theorem~\ref{thm:stopping_criterion}, Proposition~\ref{prop:lower_bound_performance_measure} and Corollary~\ref{col:prob_ineq}}
\label{app:proofs}

Let $\one_A(a)$ be the indicator function that returns $1$ if a certain input $a$ is included in the set $A$, and $0$ otherwise.
Let binary variables $z(\vx)$ and $w(\vx)$ as
$z(\vx):=\one_{\mathbb{R}^+}(\hat{f}(\vx)-\theta)$, and $ w(\vx):= \one_{\cE}(\hat{f}(\vx) - \theta),
$ respectively. 

To prove the theorem~\ref{thm:stopping_criterion}, proposition~\ref{prop:lower_bound_performance_measure} and corollary~\ref{col:prob_ineq}, we introduce the following two lemmas.
\begin{lemma}
\label{lem:general_prob_ineq}
Let the candidate point set be $\mathcal{X}$, then, the following holds for $\gamma(\vx)=\max\{p^{\rm min}(\vx), \Pr(\vx \in U_{\theta})\}$ and $\eta(\vx)=\one_{\mathbb{R}^+}(\Pr(\vx \in U_{\theta}) - p^{\rm max}(\vx))$:
\begin{equation*}
    \Pr(\forall \vx \in \mathcal{X}, \left|z(\vx) - \bE \left[z(\vx)\right]\right|\leq  \gamma(\vx), w(\vx)\geq \eta(\vx)) \geq  1-\sum_{\vx \in X}r^{\rm min}(\vx).
\end{equation*}
\end{lemma}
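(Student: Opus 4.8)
The plan is a union bound over $\cX$ that reduces the lemma to a per-point estimate. Fix $\vx$ and abbreviate $q_H := \Pr(\vx\in H_\theta)=\bE[z(\vx)]$, $q_L := \Pr(\vx\in L_\theta)=1-q_H$, and $q_U := \Pr(\vx\in U_\theta)=\Pr(w(\vx)=1)$, so that $r^{\rm min}(\vx)=\min\{q_H,q_L,1-q_U\}$. Let $B_\vx$ be the complementary (``bad'') event $\{|z(\vx)-\bE[z(\vx)]|>\gamma(\vx)\}\cup\{w(\vx)<\eta(\vx)\}$. It suffices to show $\Pr(B_\vx)\le r^{\rm min}(\vx)$ for each $\vx$, since then $\Pr(\forall\vx:\ |z(\vx)-\bE[z(\vx)]|\le\gamma(\vx),\ w(\vx)\ge\eta(\vx)) = 1-\Pr\big(\bigcup_{\vx}B_\vx\big)\ge 1-\sum_{\vx}\Pr(B_\vx)\ge 1-\sum_{\vx}r^{\rm min}(\vx)$.

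Next I would identify $B_\vx$ explicitly. Because $z(\vx)\in\{0,1\}$ and $\bE[z(\vx)]=q_H$, the value $|z(\vx)-q_H|$ equals $q_L$ on $\{z(\vx)=1\}$ and $q_H$ on $\{z(\vx)=0\}$; since $\gamma(\vx)\ge p^{\rm min}(\vx)=\min\{q_H,q_L\}$, at most one of the conditions $q_L>\gamma(\vx)$, $q_H>\gamma(\vx)$ can hold, so $\{|z(\vx)-q_H|>\gamma(\vx)\}$ equals $\{z(\vx)=1\}$, or $\{z(\vx)=0\}$, or $\emptyset$. Likewise $\{w(\vx)<\eta(\vx)\}$ equals $\{w(\vx)=0\}$ if $q_U>p^{\rm max}(\vx)$ and $\emptyset$ otherwise. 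The structural point is that the nontrivial alternatives exclude one another: if $q_U>p^{\rm max}(\vx)=\max\{q_H,q_L\}$ then also $q_U>\min\{q_H,q_L\}$, hence $\gamma(\vx)=q_U$, which makes both $q_L>\gamma(\vx)$ and $q_H>\gamma(\vx)$ impossible; conversely $q_L>\gamma(\vx)$ forces $q_L>q_U$ and $q_H<q_L$, hence $p^{\rm max}(\vx)=q_L>q_U$ so $\eta(\vx)=0$ (and symmetrically for $q_H>\gamma(\vx)$). Hence $B_\vx$ is always exactly one of $\{z(\vx)=1\}$, $\{z(\vx)=0\}$, $\{w(\vx)=0\}$, or $\emptyset$, with respective probabilities $q_H$, $q_L$, $1-q_U$, $0$.

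It remains to verify that whichever of these occurs, its probability is $\le r^{\rm min}(\vx)$. Here I would use the finer partition of $\bR$ into $A:\hat{f}(\vx)-\theta\le-\epsilon/2$, $B:\hat{f}(\vx)-\theta\in(-\epsilon/2,0]$, $C:\hat{f}(\vx)-\theta\in(0,\epsilon/2]$, $D:\hat{f}(\vx)-\theta>\epsilon/2$, with posterior probabilities $a,b,c,d$ summing to $1$, so that $q_H=c+d$, $q_L=a+b$, $q_U=b+c$, $1-q_U=a+d$. For instance, if $B_\vx=\{z(\vx)=1\}$ then $q_L>\gamma(\vx)\ge q_U$ forces $a>c$, whence $q_H=c+d<a+d=1-q_U$ and $q_H<q_L$, so $r^{\rm min}(\vx)=q_H=\Pr(B_\vx)$; the case $B_\vx=\{z(\vx)=0\}$ is the mirror image ($d>b$, giving $q_L<q_H$ and $q_L<1-q_U$), and if $B_\vx=\{w(\vx)=0\}$ then $q_U>\max\{q_H,q_L\}$ gives $b>d$ and $c>a$, so $1-q_U=a+d$ lies below both $q_L=a+b$ and $q_H=c+d$. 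In every case $\Pr(B_\vx)\le r^{\rm min}(\vx)$, and combining with the union bound of the first paragraph finishes the proof.

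I expect the main obstacle to be the bookkeeping in the last two steps: tracking how the ordering (and ties) among $q_H$, $q_L$, $q_U$ selects the branch for $B_\vx$, and confirming through the $A/B/C/D$ probabilities that each branch has probability exactly $\min\{q_H,q_L,1-q_U\}$ rather than something larger. The union bound and the reduction to a single candidate point are entirely routine.
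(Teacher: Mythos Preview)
Your proposal is correct. The core ingredients coincide with the paper's proof: the union bound over $\cX$, the reduction to a per-point complementary event, and the four-region partition of the posterior mass (your $a,b,c,d$ are exactly the paper's $p^{00},p^{01},p^{11},p^{10}$).

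The organization, however, is noticeably different. The paper first computes the joint CDF of $(z(\vx),w(\vx))$ and derives a nine-branch formula for $\delta(\vx,\gamma,\eta)=1-\Pr(|z-\bE z|\le\gamma,\,w\ge\eta)$ valid for \emph{arbitrary} $\gamma>0$ and $\eta$, and only at the very end substitutes the specific choices $\gamma(\vx)=\max\{p^{\rm min}(\vx),q_U\}$ and $\eta(\vx)=\one_{\mathbb{R}^+}(q_U-p^{\rm max}(\vx))$ to collapse $\delta$ to $r^{\rm min}(\vx)$. You instead specialize immediately: by case analysis on the ordering of $q_H,q_L,q_U$ you identify the bad event $B_\vx$ as exactly one of $\{z=1\}$, $\{z=0\}$, $\{w=0\}$, or $\emptyset$, and then use the $a,b,c,d$ decomposition only to check that whichever branch is realized has probability $r^{\rm min}(\vx)$. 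This buys you a much shorter argument with no CDF bookkeeping, at the cost of the generality the paper carries (but never uses). Your mutual-exclusion step---that $q_U>p^{\rm max}$ forces the $z$-constraint to be vacuous, and conversely that $q_L>\gamma$ or $q_H>\gamma$ forces $\eta=0$---is the structural observation that makes this shortcut work; the paper reaches the same conclusion implicitly through its case table.
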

\begin{proof}
For notational simplicity, we introduce shorthand notations. $p^0(\vx) := \Pr (\vx \in L_{\theta} ) =\Pr(z(\vx)=0), \; p^1(\vx) := \Pr(z(\vx)=1)=1-p^0(\vx)$ and $q^1(\vx) := \Pr(\vx \in U_{\theta}) = \Pr(w(\vx)=1), q^{0}(\vx) := \Pr(w(\vx)=0)=1-q^1(\vx)$. 
Let's consider the joint distribution of $z(\vx)$ and $w(\vx)$. Specifically, let $p^{00}(\vx):=\Phi\left(\frac{\theta-\epsilon/2-\mu_N(\vx)}{\sigma_N(\vx)}\right)$, $p^{01}(\vx):=\Phi\left(\frac{\theta-\mu_N(\vx)}{\sigma_N(\vx)}\right) - \Phi\left(\frac{\theta-\epsilon/2-\mu_N(\vx)}{\sigma_N(\vx)}\right)$, $p^{11}(\vx):=\Phi\left(\frac{\theta+\epsilon/2-\mu_N(\vx)}{\sigma_N(\vx)}\right) - \Phi\left(\frac{\theta-\mu_N(\vx)}{\sigma_N(\vx)}\right)$, $p^{10}(\vx):=1 - \Phi\left(\frac{\theta+\epsilon/2-\mu_N(\vx)}{\sigma_N(\vx)}\right)$. Then, the joint distribution of $z(\vx)$ and $w(\vx)$ is given as
\begin{equation}
\Pr(z(\vx), w(\vx)) =\prod^{1,1}_{i=0,j=0}(p^{ij}(\vx))^{\delta(i-z(\vx))\delta(j-w(\vx))},
\end{equation}
where $\delta(a) = 1$ if and only if $a=0$. 
This distribution considers the probability of $p(f(\vx)\mid\vy)$ when partitioning the range of $f(\vx)$ into four regions: $f(\vx)\leq\theta-\epsilon/2$, $\theta-\epsilon/2 < f(\vx) \leq \theta$, $\theta < f(\vx) \leq \theta+ \epsilon/2$, and $f(\vx) > \theta+ \epsilon/2$. Integrating the value of $p(f(\vx)\mid\vy)$ over each region results in the probability in that region. Furthermore, considering the marginal probabilities for each region, we can show that $p^0(\vx) = p^{00}(\vx) + p^{01}(\vx)$ and $p^1(\vx) = p^{10}(\vx) + p^{11}(\vx)$. Similarly, $q^0(\vx) = p^{00}(\vx) + p^{10}(\vx)$ and $q^1(\vx) = p^{01}(\vx) + p^{11}(\vx)$ hold.

The possible values for $z(\vx)$ are only $0$ or $1$, and given that $\Pr(z(\vx)=0, w(\vx)=0) = p^{00}(\vx)$ and $\Pr(z(\vx)=1, w(\vx)=0) = p^{10}(\vx)$, the cumulative distribution function for $z(\vx)$ can be expressed as follows:
\begin{align}
    \Pr(z(\vx) \leq b, w(\vx)=0) =& \left\{
    \begin{array}{ll}
    0 & b < 0, \\
    p^{00}(\vx) & 0 \leq b < 1,\\
    q^0(\vx) & b \geq 1. \\
    \end{array}
    \right.
\end{align}
Similarly, we have
\begin{align}
    \Pr(z(\vx) \geq a, w(\vx)=0) =& \left\{
    \begin{array}{ll}
    q^0(\vx) & a \leq 0, \\
    p^{10}(\vx) & 0 < a \leq 1,\\
    0 & a > 1. \\
    \end{array}
    \right.
\end{align}
For $\gamma(\vx)>0$, using the notation $b=\mathbb{E}[z(\vx)]+\gamma(\vx)$, from $\mathbb{E}[z(\vx)]=p^1(\vx)$ we have
\begin{align}
    \Pr(z(\vx) - \mathbb{E}[z(\vx)]\leq \gamma(\vx), w(\vx)=0) =& \left\{
    \begin{array}{ll}
    0 & p^1(\vx)+\gamma(\vx) < 0, \\
    p^{00}(\vx) & 0 \leq p^1(\vx)+\gamma(\vx) < 1,\\
    q^0(\vx) & p^1(\vx)+\epsilon \geq 1, \\
    \end{array}
    \right. \\
    =& \left\{
    \begin{array}{ll}
    0 & \gamma(\vx) < - p^1(\vx), \\
    p^{00}(\vx) & -p^1(\vx) \leq \gamma(\vx) < 1-p^1(\vx),\\
    q^0(\vx) & \gamma(\vx) \geq 1-p^1(\vx). \\
    \end{array}
    \right.
\end{align}
From $\gamma(\vx)>0$ and $p^0(\vx)=1-p^1(\vx)$, we also have
\begin{align}
    \Pr(z(\vx) - \mathbb{E}[z(\vx)]\leq \gamma(\vx), w(\vx)=0)
    =& \left\{
    \begin{array}{ll}
    p^{00}(\vx) & 0 < \gamma(\vx) < p^0(\vx),\\
    q^0(\vx) & \gamma(\vx) \geq p^0(\vx). \\
    \end{array}
    \right.
\end{align}
Similarly, with $a=\mathbb{E}[z(\vx)]-\gamma(\vx)$, we have
\begin{align}
    \Pr(z(\vx) - \mathbb{E}[z(\vx)]\geq -\gamma(\vx), w(\vx)=0)
    =& \left\{
    \begin{array}{ll}
    q^0(\vx) & p^1(\vx)-\gamma(\vx) \leq 0, \\
    p^{10}(\vx) & 0 < p^1(\vx)-\gamma(\vx) \leq 1,\\
    0 & p^1(\vx)-\gamma(\vx) > 1 \\
    \end{array}
    \right. \\
    =& \left\{
    \begin{array}{ll}
    q^0(\vx) & \gamma(\vx) \geq p^1(\vx), \\
    p^{10}(\vx) & -p^0(\vx) \leq \gamma(\vx) < p^1(\vx),\\
    0 & \gamma(\vx) < -p^0(\vx). \\
    \end{array}
    \right.
\end{align}
Then, from $\gamma(\vx)>0$, 
\begin{align}
    \Pr(z(\vx)-\mathbb{E}[z(\vx)] \geq -\gamma(\vx), w(\vx)=0) =& \left\{
    \begin{array}{ll}
    q^0(\vx) & \gamma(\vx) \geq p^1(\vx), \\
    p^{10}(\vx) & 0 < \gamma(\vx) < p^1(\vx),
    \end{array}
    \right.
\end{align}
We also have the following relationship:
\begin{align}
    \Pr(a \leq z(\vx) \leq b, w(\vx)=0) + \Pr(z(\vx)<a, w(\vx)=0) + \Pr(z(\vx)>b, w(\vx)=0) =&\; \Pr(w(\vx)=0), \\
    \Pr(z(\vx)<a, w(\vx)=0) + \Pr(z(\vx) \geq a, w(\vx)=0) 
    =& \; \Pr(w(\vx)=0), \\
    \Pr(z(\vx)>b, w(\vx)=0) + \Pr(z(\vx) \leq b, w(\vx)=0) =& \; \Pr(w(\vx)=0).
\end{align}
From these equations, we have
\begin{equation}
    \Pr(a \leq z(\vx) \leq b, w(\vx)=0) = \Pr(z(\vx) \geq a, w(\vx)=0) + \Pr(z(\vx) \leq b, w(\vx)=0) - \Pr(w(\vx)=0),
\end{equation}
and $k=\argmax_{i\in \{0,1\}}\{p^i(\vx)\}$ leads us to
\begin{align}
    \Pr(|z(\vx) -\mathbb{E}[z(\vx)]|\leq \gamma(\vx), w(\vx)=0) =& \left\{
    \begin{array}{ll}
    p^{10}(\vx) + p^{00}(\vx) - q^0(\vx) & 0 < \gamma(\vx) < p^{\rm min}(\vx),\\
    p^{k0}(\vx)+q^0(\vx)-q^0(\vx) & p^{\rm min}(\vx) \leq \gamma(\vx) < p^{\rm max}(\vx),\\
    q^0(\vx)+q^0(\vx)-q^0(\vx) & \gamma(\vx) \geq p^{\rm max}(\vx) \\
    \end{array}
    \right. \notag \\
    =& \left\{
    \begin{array}{ll}
    0 & 0 < \gamma(\vx) < p^{\rm min}(\vx),\\
    p^{k0}(\vx) & p^{\rm min}(\vx) \leq \gamma(\vx) < p^{\rm max}(\vx),\\
    q^0(\vx) & \gamma(\vx) \geq p^{\rm max}(\vx). \\
    \end{array}
    \right. \label{eq:concent_ineq_0}
\end{align}
With the similar argument for $\Pr(|z(\vx) -\mathbb{E}[z(\vx)]|\leq \gamma(\vx), w(\vx)=1)$, we have
\begin{equation}
    \Pr(|z(\vx) -\mathbb{E}[z(\vx)]|\leq \gamma(\vx), w(\vx)=1) = \left\{
    \begin{array}{ll}
    0 & 0 < \gamma < p^{\rm min}(\vx),\\
    p^{k1}(\vx) & p^{\rm min}(\vx) \leq \gamma < p^{\rm max}(\vx),\\
    q^1(\vx) & \gamma(\vx) \geq p^{\rm max}(\vx). \\
    \end{array}
    \right. \label{eq:concent_ineq_1}
\end{equation}
Then, from Eqs.~\eqref{eq:concent_ineq_0}and~\eqref{eq:concent_ineq_1}, defining
\begin{equation}
    \delta(\vx, \gamma(\vx), \eta(\vx)) = \left\{
    \begin{array}{ll}
    1 & (0 < \gamma(\vx) < p^{\rm min}(\vx)) \land (\eta(\vx) \leq 0),\\
    1 & (0 < \gamma(\vx) < p^{\rm min}(\vx)) \land (0 < \eta(\vx) \leq 1),\\
    1 & (0 < \gamma(\vx) < p^{\rm min}(\vx)) \land (\eta_m > 1),\\
    p^{\rm min}(\vx) & (p^{\rm min}(\vx) \leq \gamma(\vx) < p^{\rm max}(\vx)) \land (\eta(\vx) \leq 0),\\
    1-p^{k1}(\vx) & (p^{\rm min}(\vx) \leq \gamma(\vx) < p^{\rm max}(\vx)) \land (0 < \eta(\vx) \leq 1),\\
    1 & (p^{\rm min}(\vx) \leq \epsilon < p^{\rm max}(\vx)) \land (\eta(\vx) >1),\\
    0 & (\gamma(\vx) \geq p^{\rm max}(\vx)) \land (\eta(\vx) \leq 0), \\
    q^0(\vx) & (\gamma(\vx) \geq p^{\rm max}(\vx)) \land (0 < \eta(\vx) \leq 1), \\
    1 & (\gamma(\vx) \geq p^{\rm max}(\vx)) \land (\eta(\vx) > 1),   
    \end{array}
    \right.
\end{equation}
we have
\begin{equation*}
    \Pr(|z(\vx) -\mathbb{E}[z(\vx)]|\leq \gamma(\vx), w(\vx)\geq\eta(\vx))=1-\delta(\vx, \gamma(\vx), \eta(\vx)).
\end{equation*}
Let $E(\vx)$ be the event $\{(|z(\vx) -\mathbb{E}[z(\vx)]|\leq \gamma(\vx)) \cap (w(\vx)\geq\eta(\vx))\}$, and its complement is denoted by $\overline{E(\vx)}$. Since $\Pr(E(\vx))=1-\Pr(\overline{E(\vx)})$, we have
\begin{equation}
    \Pr(\overline{E(\vx)}) = \delta\left(\vx, \gamma(\vx),\eta(\vx)\right).
\end{equation}

By Boole's inequality, the following inequalities hold:
\begin{equation}
\Pr\left(\cup_{\vx \in \cX}\overline{E(\vx)}\right)\leq \sum_{\vx \in \cX}\delta(\vx,\gamma(\vx), \eta(\vx)).
\end{equation}
By De Morgan's laws, we have
\begin{equation}
    \Pr\left(\cap_{\vx\in \cX}E(\vx)\right) \geq 1 - \sum_{\vx \in \cX}\delta(\vx,\gamma(\vx), \eta(\vx)).
\end{equation}
Therefore, the following inequality is established:
\begin{equation*}
    \Pr\left(\forall \vx \in \cX, \left|z(\vx) - \bE\left[z(\vx)\right]\right|\leq \gamma(\vx), w(\vx)\geq\eta(\vx)\right) \geq 1-\sum^M_{m=1}\delta(\vx,\gamma(\vx), \eta(\vx)).
\end{equation*}
Here, let $\gamma(\vx) = \max\{p^{\rm min}(\vx), q^1(\vx)\}$ and $\eta(\vx) = \one_{\mathbb{R}^+}(q^1(\vx) - p^{\rm max}(\vx))$, then $\delta(\vx, \gamma(\vx), \eta(\vx)) = r^{\rm min}(\vx)$. Thus, the lemma is proved.

\end{proof}

\begin{lemma}
\label{lem:specific_events_in_each_case}
Let $r^{\rm max}(\vx) = \max\{\Pr(\vx \in H_\theta), \Pr(\vx \in L_\theta), \Pr(\vx \in U_\theta)\}$. For any event $E(\vx):=\{(\left|z(\vx) - \bE\left[z(\vx)\right]\right|\leq \gamma(\vx))\cap (w(\vx)\geq\eta(\vx))\}$, the following relationships hold:
\begin{enumerate}
    \item If $r^{\rm max}(\vx)=\Pr(\vx \in H_\theta)$, then $E(\vx)=(f(\vx)-\theta > 0)$.
    \item If $r^{\rm max}(\vx)=\Pr(\vx \in L_\theta)$, then $E(\vx)=(f(\vx)-\theta\leq 0)$.
    \item If $r^{\rm max}(\vx)=\Pr(\vx \in U_\theta)$, then $E(\vx)=\left(-\frac{\epsilon}{2}<f(\vx) - \theta \leq \frac{\epsilon}{2}\right)$.
\end{enumerate}
\end{lemma}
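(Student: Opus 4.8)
The plan is a direct case analysis. The key observation is that $z(\vx)=\one_{\mathbb{R}^+}(f(\vx)-\theta)$ and $w(\vx)=\one_{\cE}(f(\vx)-\theta)$ are both deterministic functions of the posterior draw $f(\vx)$, so $E(\vx)$ is merely the set of $f(\vx)$ on which a condition holds. Following the proof of Lemma~\ref{lem:general_prob_ineq}, I would split the range of $f(\vx)$ into the four regions $f(\vx)\leq\theta-\epsilon/2$, $\theta-\epsilon/2<f(\vx)\leq\theta$, $\theta<f(\vx)\leq\theta+\epsilon/2$, $f(\vx)>\theta+\epsilon/2$, on which $(z(\vx),w(\vx))$ equals $(0,0)$, $(0,1)$, $(1,1)$, $(1,0)$ respectively. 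Since $\bE[z(\vx)]=\Pr(\vx\in H_\theta)=:p^1(\vx)$, the event $E(\vx)$ contains a given region if and only if, for that region's values of $z\in\{0,1\}$ and $w\in\{0,1\}$, both $|z-p^1(\vx)|\leq\gamma(\vx)$ and $w\geq\eta(\vx)$ hold. Writing also $p^0(\vx):=\Pr(\vx\in L_\theta)$ and $q^1(\vx):=\Pr(\vx\in U_\theta)$, recall $\gamma(\vx)=\max\{p^{\rm min}(\vx),q^1(\vx)\}$ and $\eta(\vx)=\one_{\mathbb{R}^+}(q^1(\vx)-p^{\rm max}(\vx))$, so these two checks reduce to elementary comparisons among $p^0,p^1,q^1$.

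I would then treat the three cases. In Case~1, $r^{\rm max}(\vx)=p^1(\vx)$ is the largest of the three probabilities, so $p^{\rm max}(\vx)=p^1(\vx)\geq q^1(\vx)$ gives $\eta(\vx)=0$ (the $w$-constraint is vacuous), $p^{\rm min}(\vx)=p^0(\vx)$, and $\gamma(\vx)=\max\{p^0(\vx),q^1(\vx)\}$. For $z=1$ the inequality $|1-p^1(\vx)|=p^0(\vx)\leq\gamma(\vx)$ holds, while for $z=0$ the inequality $p^1(\vx)\leq\gamma(\vx)$ fails since $p^1(\vx)$ strictly exceeds both $p^0(\vx)$ and $q^1(\vx)$; hence $E(\vx)$ is exactly the union of the two regions carrying $z=1$, that is, $E(\vx)=(f(\vx)-\theta>0)$. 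Case~2 is the mirror image under the swap $p^0\leftrightarrow p^1$, giving $E(\vx)=(f(\vx)-\theta\leq0)$. In Case~3, $r^{\rm max}(\vx)=q^1(\vx)$ dominates both $p^0(\vx)$ and $p^1(\vx)$, so $\gamma(\vx)=q^1(\vx)$ and the $z$-constraint $|z-p^1(\vx)|\leq q^1(\vx)$ holds for both $z=0$ and $z=1$; meanwhile $\eta(\vx)=1$ forces $w(\vx)=1$; hence $E(\vx)$ is the union of the two regions carrying $w=1$, that is, $-\epsilon/2<f(\vx)-\theta\leq\epsilon/2$. A short table of the four regions against the three cases records all of this.

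The only delicate point is the treatment of ties in $r^{\rm max}(\vx)$. The three exclusions used above --- the $z=0$ region in Case~1, the $z=1$ region in Case~2, and the forcing $\eta(\vx)=1$ in Case~3 --- rely on the asserted maximizer being the \emph{strict} maximum among $\Pr(\vx\in H_\theta)$, $\Pr(\vx\in L_\theta)$, $\Pr(\vx\in U_\theta)$; if two of these simultaneously attain the maximum, $E(\vx)$ strictly contains the claimed event. I would dispose of this either by noting that such equalities among the $\Phi$-based probabilities hold only on a measure-zero set of posterior parameters $(\mu_N(\vx),\sigma_N(\vx))$, or by fixing a tie-breaking priority consistent with the classification rule \eqref{eq:classification_rule_H}--\eqref{eq:classification_rule_D}. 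Everything else is the routine bookkeeping of the case table, so I anticipate no substantive obstacle.
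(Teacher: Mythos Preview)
Your proposal is correct and follows essentially the same approach as the paper: a direct case split on which of $\Pr(\vx\in H_\theta)$, $\Pr(\vx\in L_\theta)$, $\Pr(\vx\in U_\theta)$ attains the maximum, computing $\gamma(\vx)$ and $\eta(\vx)$ in each case, and then checking which values of the binary pair $(z(\vx),w(\vx))$ satisfy both constraints. Your explicit flagging of the tie case is a genuine refinement --- the paper simply writes the strict inequalities $\gamma(\vx)<p^{\rm max}(\vx)$ (Cases~1--2) and $\gamma(\vx)>p^{\rm max}(\vx)$ (Case~3) in parentheses without comment, so your remark that ties must be handled by a measure-zero argument or a fixed tie-breaking convention consistent with \eqref{eq:classification_rule_H}--\eqref{eq:classification_rule_D} is a useful clarification.
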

\begin{proof}
We first prove that $E(\vx)=(f(\vx)-\theta > 0)$ holds when $r^{\rm max}(\vx)=\Pr(\vx \in H_\theta)$. Since $r^{\rm max}(\vx)=\Pr(\vx \in H_\theta)$, we can derive  following inequalities:
\begin{align*}
p^{\rm max}(\vx)=& \Pr(\vx \in H_\theta), \\
p^{\rm min}(\vx)=& \Pr(\vx \in L_\theta),\\
\gamma(\vx)=&\max\{p^{\rm min}(\vx), \Pr(\vx\in U_\theta)\} \\
=&\max\{\Pr(\vx\in L_\theta), \Pr(\vx\in U_\theta)\} (< p^{\rm max}(\vx)),\\
\eta(\vx)=&\mathbbm{1}_{\mathbb{R}^+}(\Pr(\vx\in U_\theta) - \Pr(\vx\in H_\theta)) \\
=&0.
\end{align*}
From these, the event $E(\vx)$ can be transformed as follows:
\begin{equation*}
    E(\vx)=\{(\left|z(\vx) - \mathbb{E} \left[z(\vx)\right]\right|\leq  \max\{\Pr(\vx\in L_\theta), \Pr(\vx\in U_\theta)\}) \cap (w(\vx)\geq 0)\}.
\end{equation*}
Since $w(\vx) \in \{0,1\}$, $w(\vx)\geq 0$ is always true. Hence,
\begin{align*}
E(\vx) =&\{\left|z(\vx) - \mathbb{E} \left[z(\vx)\right]\right|\leq  \max\{\Pr(\vx\in L_\theta), \Pr(\vx\in U_\theta)\}\} \\
=&\{\mathbb{E}\left[z(\vx)\right]-\max\{\Pr(\vx\in L_\theta), \Pr(\vx\in U_\theta)\} \leq z(\vx) \leq   \mathbb{E}\left[z(\vx)\right]+\max\{\Pr(\vx\in L_\theta), \Pr(\vx\in U_\theta)\}\}.
\end{align*}
Applying $\mathbb{E} \left[z(\vx)\right]=\Pr(\vx\in H_\theta)$, we have
\begin{align*}
E(\vx) =&\{\Pr(\vx\in H_\theta)-\max\{\Pr(\vx\in L_\theta), \Pr(\vx\in U_\theta)\} \leq z(\vx) \leq   \max\{1, \Pr(\vx\in H_\theta)+\Pr(\vx\in U_\theta)\}\}
\end{align*}
Since $\max\{\Pr(\vx\in L_\theta), \Pr(\vx\in U_\theta)\} < \Pr(\vx\in H_\theta)$, the following inequalities hold.
\begin{align*}
\Pr(\vx\in H_\theta)-\max\{\Pr(\vx\in L_\theta), \Pr(\vx\in U_\theta)\}  > 0,\\
\max\{1, \Pr(\vx\in H_\theta)+\Pr(\vx\in U_\theta)\} \geq 1.
\end{align*}
From the above, the event $E(\vx)$ is always true only when $z(\vx)=1$. Since $z(\vx):=\one_{\mathbb{R}^+}(\hat{f}(\vx)-\theta)$, $z(\vx)=1$ holds when $f(\vx)-\theta>0$. Therefore, when $r^{\rm max}(\vx)=\Pr(\vx \in H_\theta)$, the following relationship holds.
\begin{equation*}
E(\vx)=\{f(\vx)-\theta>0\}
\end{equation*}

Similarly, when $r^{\rm max}(\vx)=\Pr(\vx \in L_\theta)$, we have the following equations.
\begin{align*}
p^{\rm max}(\vx)=& \Pr(\vx \in L_\theta), \\
p^{\rm min}(\vx)=& \Pr(\vx \in H_\theta),\\
\gamma(\vx)=&\max\{p^{\rm min}(\vx),  = \Pr(\vx\in U_\theta)\}, \\
=&\max\{\Pr(\vx\in H_\theta), \Pr(\vx\in U_\theta)\} (< p^{\rm max}(\vx)),\\
\eta(\vx)=&\mathbbm{1}_{\mathbb{R}^+}(\Pr(\vx\in U_\theta) - \Pr(\vx\in L_\theta)) \\
=&0.
\end{align*}
By applying a derivation similar to the previous one, we have
\begin{equation*}
E(\vx)=\{f(\vx)-\theta \leq 0\}.
\end{equation*}

Next, we prove that $E(\vx)=\left(-\frac{\epsilon}{2}<f(\vx) - \theta \leq \frac{\epsilon}{2}\right)$ when $r^{\rm max}(\vx)=\Pr(\vx \in U_\theta)$. From the assumption, the following equations hold.
\begin{align*}
\gamma(\vx)=&\max\{p^{\rm min}(\vx), \Pr(\vx\in U_\theta)\}, \\
=&\Pr(\vx\in U_\theta) (>p^{\rm max}(\vx))\\
\eta(\vx)=&\mathbbm{1}_{\mathbb{R}^+}(\Pr(\vx\in U_\theta) - p^{\rm max}(\vx)) \\
=&1.
\end{align*}
By using $\mathbb{E} \left[z(\vx)\right]=\Pr(\vx\in H_\theta)$, $\left|z(\vx) - \mathbb{E} \left[z(\vx)\right]\right|\leq  \Pr(\vx\in U_\theta)$ is transformed as follows:
\begin{equation*}
\Pr(\vx\in H_\theta)- \Pr(\vx\in U_\theta) \leq z(\vx)\leq  \Pr(\vx\in H_\theta) + \Pr(\vx\in U_\theta).
\end{equation*}
Since $\Pr(\vx\in U_\theta) > p^{\rm max}(\vx) = \max\{\Pr(\vx\in H_\theta), \Pr(\vx\in L_\theta)\}$, the following inequalities hold.
\begin{align*}
\Pr(\vx\in H_\theta) - \Pr(\vx\in U_\theta) <& 0, \\
\Pr(\vx\in H_\theta) + \Pr(\vx\in U_\theta) >& 1.
\end{align*}
From these relationships, $\left|z(\vx) - \mathbb{E} \left[z(\vx)\right]\right|\leq  \Pr(\vx \in U_\theta)$ is always true. Therefore, the event $E(\vx)$ holds when $w(\vx) = 1$.
Since $w(\vx):= \one_{\cE}(\hat{f}(\vx) - \theta)$, we can derive as follows:
\begin{equation*}
E(\vx)=\left\{-\frac{\epsilon}{2} < f(\vx) - \theta \leq \frac{\epsilon}{2}\right\}.
\end{equation*}
Thus, the lemma has been proven.

\end{proof}
\subsection{Proof of theorem~\ref{thm:stopping_criterion}}
Events $f(x)-\theta >0$, $f(x)-\theta < 0$ and $-\epsilon/2 < f(x)-\theta \leq \epsilon/2$ are denoted by $E_{H_\theta}(\vx)$, $E_{L_\theta}(\vx)$, and $E_{U_\theta}(\vx)$, respectively. From the definition of the $\epsilon$-accuracy, the following relationship holds.
\begin{equation}
\Pr((\tilde{H}_{\theta},\tilde{L}_{\theta},\tilde{U}_{\theta}) \text{ is $\epsilon$-accurate}) = \Pr\left(\left(\bigcap_{\vx \in \tilde{H}_\theta} E_{H_\theta}(\vx)\right)\cap\left(\bigcap_{\vx \in \tilde{L}_\theta} E_{L_\theta}(\vx)\right) \cap \left(\bigcap_{\vx \in \tilde{U}_\theta} E_{U_\theta}(\vx)\right)\right).
\label{eq:epsilon_accurate}
\end{equation}

On the other hand, from the lemma~\ref{lem:general_prob_ineq}, the following equation holds.
\begin{equation}
    \label{eq:general_ineq}
    p\left(\forall \vx \in \cX, \left|z(\vx) - \bE\left[z(\vx)\right]\right|\leq \gamma(\vx), w(\vx)\geq\eta(\vx)\right) \geq 1-\sum_{\vx\in \cX}r^{\rm min}(\vx).
\end{equation}
Therefore, we just prove the equivalence between the left side hand of Eq.\eqref{eq:general_ineq} and the right side hand of Eq.~\eqref{eq:epsilon_accurate}. The left side hand of Eq.\ref{eq:general_ineq} is transformed as follows:
\begin{equation*}
\Pr(\forall \vx \in \cX, \left|z(\vx) - \mathbb{E} \left[z(\vx)\right]\right|\leq  \gamma(\vx), w(\vx)\geq \eta(\vx))=\Pr\left(\bigcap_{\vx \in \cX} E(\vx)\right).
\end{equation*}
If we determine $(\tilde{H}_\theta, \tilde{L}_\theta, \tilde{U}_\theta)$ using Eqs.~\eqref{eq:classification_rule_H}, \eqref{eq:classification_rule_L}, \eqref{eq:classification_rule_D}, we can divide the $\cX$ to $(\tilde{H}_\theta, \tilde{L}_\theta, \tilde{U}_\theta)$ as follows:
\begin{equation*}
\Pr\left(\bigcap_{\vx \in \cX} E(\vx)\right) = \Pr\left(\left(\bigcap_{\vx \in \tilde{H}_\theta} E(\vx)\right)\cap\left(\bigcap_{\vx \in \tilde{L}_\theta} E(\vx)\right) \cap \left(\bigcap_{\vx \in \tilde{U}_\theta} E(\vx)\right)\right).
\end{equation*}
Since $(\tilde{H}_\theta, \tilde{L}_\theta, \tilde{U}_\theta)$ is determined by Eqs.~\eqref{eq:classification_rule_H}, \eqref{eq:classification_rule_L}, and \eqref{eq:classification_rule_D}, we can apply lemma~\ref{lem:specific_events_in_each_case} to the above equation. Therefore, the following equation holds.
\begin{align*}
&\Pr\left(\left(\bigcap_{\vx \in \tilde{H}_\theta} E(\vx)\right)\cap\left(\bigcap_{\vx \in \tilde{L}_\theta} E(\vx)\right) \cap \left(\bigcap_{\vx \in \tilde{U}_\theta} E(\vx)\right)\right) \\
=& \Pr\left(\left(\bigcap_{\vx \in \tilde{H}_\theta} E_{H_\theta}(\vx)\right)\cap\left(\bigcap_{\vx \in \tilde{L}_\theta} E_{L_\theta}(\vx)\right) \cap \left(\bigcap_{\vx \in \tilde{U}_\theta} E_{U_\theta}(\vx)\right)\right) \\
=&\Pr\left((\tilde{H}_{\theta},\tilde{L}_{\theta},\tilde{U}_{\theta}) \text{ is $\epsilon$-accurate}\right).
\end{align*}
From the above, we proved that the theorem~\ref{thm:stopping_criterion}.

\subsection{Proof of proposition~\ref{prop:lower_bound_performance_measure}}
In this subsection, we show the lower bound of performance measures such as accuracy, recall, precision, specificity, and F-score. For the performance measures, the following relationships hold.
\begin{proposition}
If we assume that $\tilde{H}_{\theta}$,$\tilde{L}_{\theta}$ and $\tilde{U}_{\theta}$ are determined by using the classification rule of Eqs.~\eqref{eq:classification_rule_H},\eqref{eq:classification_rule_L} and \eqref{eq:classification_rule_D}, then the following inequalities hold with probability $1-\sum_{\vx \in \cX}r^{\rm min}(\vx)$.
\begin{align*}
\text{Accuracy} \geq& \frac{|\tilde{H}_\theta|+|\tilde{L}_\theta|}{|\tilde{H}_\theta| + |\tilde{L}_\theta| + |\tilde{U}_\theta|}, \\
\text{Precision} \geq& \frac{|\tilde{H}_\theta|}{|\tilde{H}_\theta| + |\tilde{U}_\theta|}, \\
\text{Recall} \geq& \frac{|\tilde{H}_\theta|}{|\tilde{H}_\theta| + |\tilde{U}_\theta|}, \\
\text{Specificity} \geq& \frac{|\tilde{L}_\theta|}{|\tilde{L}_\theta|+|\tilde{U}_\theta|}, \\
\text{F-score} \geq& \frac{2 |\tilde{H}_\theta|}{2|\tilde{H}_\theta| + |\tilde{U}_\theta|}.
\end{align*}
\end{proposition}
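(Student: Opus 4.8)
\medskip
\noindent\textbf{Proof proposal.}
The plan is to reduce everything to a deterministic statement that holds on the event $\mathcal{A}:=\{(\tilde H_\theta,\tilde L_\theta,\tilde U_\theta)\text{ is }\epsilon\text{-accurate}\}$, whose probability is at least $1-\sum_{\vx\in\cX}r^{\rm min}(\vx)$ by Theorem~\ref{thm:stopping_criterion}. So I would fix a realization of $f$ lying in $\mathcal{A}$; by the definition of $\epsilon$-accuracy this gives the inclusions $\tilde H_\theta\subseteq H_\theta$, $\tilde L_\theta\subseteq L_\theta$, $\tilde U_\theta\subseteq U_\theta$, while $\tilde H_\theta,\tilde L_\theta,\tilde U_\theta$ partition $\cX$ (ties in the classification rule, if any, being broken arbitrarily).

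Second, I would set up the confusion matrix of the classifier that predicts ``positive'' (i.e.\ membership in $H_\theta$) on $\tilde H_\theta$, ``negative'' on $\tilde L_\theta$, and assigns each $\vx\in\tilde U_\theta$ to one of the two classes by some rule (e.g.\ the sign of $\mu_N(\vx)-\theta$, as in the experiments; the bound will not depend on the rule). On $\mathcal A$ every point of $\tilde H_\theta$ is a true positive and every point of $\tilde L_\theta$ a true negative, so all false positives and false negatives must come from $\tilde U_\theta$. Letting $a,b,c,d\ge0$ count the points of $\tilde U_\theta$ that are, respectively, (true positive, predicted positive), (true negative, predicted positive), (true positive, predicted negative), (true negative, predicted negative), we get $a+b+c+d=|\tilde U_\theta|$ and $\text{TP}=|\tilde H_\theta|+a$, $\text{TN}=|\tilde L_\theta|+d$, $\text{FP}=b$, $\text{FN}=c$.

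Third, I would substitute these into the standard formulas and estimate. Accuracy is immediate: $\frac{\text{TP}+\text{TN}}{|\cX|}=\frac{|\tilde H_\theta|+|\tilde L_\theta|+a+d}{|\tilde H_\theta|+|\tilde L_\theta|+|\tilde U_\theta|}\ge\frac{|\tilde H_\theta|+|\tilde L_\theta|}{|\tilde H_\theta|+|\tilde L_\theta|+|\tilde U_\theta|}$. For precision, $a+b\le|\tilde U_\theta|$ gives $\frac{\text{TP}}{\text{TP}+\text{FP}}=\frac{|\tilde H_\theta|+a}{|\tilde H_\theta|+a+b}\ge\frac{|\tilde H_\theta|+a}{|\tilde H_\theta|+|\tilde U_\theta|}\ge\frac{|\tilde H_\theta|}{|\tilde H_\theta|+|\tilde U_\theta|}$; recall follows identically using $a+c\le|\tilde U_\theta|$, and specificity using $d+b\le|\tilde U_\theta|$, yielding $\frac{|\tilde L_\theta|}{|\tilde L_\theta|+|\tilde U_\theta|}$.

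The step I expect to cause the most trouble is the F-score bound, because $t\mapsto\frac{2t}{2t+s}$ is increasing in $t$ and decreasing in $s$ while $t=\text{TP}$ and $s=\text{FP}+\text{FN}$ cannot be varied independently: correctly reclassifying an undetermined point raises $t$ by $1$ and lowers $s$ by $1$. Writing $\text{F-score}=\frac{2(|\tilde H_\theta|+a)}{2(|\tilde H_\theta|+a)+b+c}$ and using $b+c=|\tilde U_\theta|-a-d\le|\tilde U_\theta|-a$, it is enough to show $\frac{2(|\tilde H_\theta|+a)}{2|\tilde H_\theta|+a+|\tilde U_\theta|}\ge\frac{2|\tilde H_\theta|}{2|\tilde H_\theta|+|\tilde U_\theta|}$ for $0\le a\le|\tilde U_\theta|$; clearing denominators reduces this to $a\,(|\tilde H_\theta|+|\tilde U_\theta|)\ge0$, which is true. (Equivalently, the worst case is $a=d=0$, every undetermined point being an error, where the F-score equals exactly $\frac{2|\tilde H_\theta|}{2|\tilde H_\theta|+|\tilde U_\theta|}$.) Since all five inequalities hold on $\mathcal A$ and $\Pr(\mathcal A)\ge1-\sum_{\vx\in\cX}r^{\rm min}(\vx)$ by Theorem~\ref{thm:stopping_criterion}, the proposition follows.
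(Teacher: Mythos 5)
Your proposal is correct and takes essentially the same route as the paper's proof: condition on the $\epsilon$-accuracy event guaranteed by Theorem~\ref{thm:stopping_criterion}, count $\tilde{H}_\theta$ as true positives and $\tilde{L}_\theta$ as true negatives, and bound the contribution of $\tilde{U}_\theta$ by the worst case in which every undetermined point becomes an error. Your explicit $a,b,c,d$ bookkeeping and the monotonicity check for the F-score simply make rigorous the worst-case step that the paper asserts directly.
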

\begin{proof}
From theorem~\ref{thm:stopping_criterion}, the following equation holds with probability $1-\sum_{\vx \in \cX}r^{\rm min}(\vx)$,
\begin{equation*}
    \Pr\left((\forall x \in \tilde{H}_\theta, f(x)-\theta >0) \cap (\forall x \in \tilde{L}_\theta, f(x)-\theta < 0) \cap (\forall x \in \tilde{U}_\theta, -\epsilon/2<f(x)-\theta \leq \epsilon/2)\right).
\end{equation*}
We assume that candidate points are considered positive if it is included in the upper-level set and negative if it is included in the lower-level set. Then, the event $(\forall x \in \tilde{H}_\theta, f(x)-\theta >0)$ means that all candidate points predicted as positive are truly positive. Therefore, $|\tilde{H}_\theta|$ is counted as true positive (TP). Similarly, the event  $(\forall x \in \tilde{L}_\theta, f(x)-\theta < 0)$ means that all candidate points predicted as negative are truly negative. Therefore, $|\tilde{L}_\theta|$ is counted as true negative (TN). It is unclear whether the elements of $\tilde{U}_\theta$ are counted as TP, TN, False Positive (FP), or False Negative (FN). In the worst-case scenario, all elements of $|\tilde{U}_\theta|$ are included in either FP or FN. Then, $|\tilde{U}_\theta|$ is counted as $FP + FN$. Therefore, we can derive the following lower bound.
\begin{equation*}
\text{F-score} = \frac{2 \times {\rm Precision} \times {\rm Recall}}{{\rm Precision} + {\rm Recall}} = \frac{2 {\rm TP}}{2{\rm TP} + {\rm FP} + {\rm FN}}
\geq \frac{2|\tilde{H}_\theta|}{2|\tilde{H}_\theta|+|\tilde{U}_\theta|}.
\end{equation*}
Since $(\tilde{H}_\theta, \tilde{L}_\theta, \tilde{U}_\theta)$ is $\epsilon$-accurate with probability $1-\sum_{\vx\in \cX}r^{\rm min}(\vx)$, the above lower bound also holds with probability $1-\sum_{\vx\in \cX}r^{\rm min}(\vx)$.

Similarly, for Accuracy, the following inequalities hold with $1-\sum_{\vx\in \cX}r^{\rm min}(\vx)$.
\begin{align*}
\text{Accuracy} =& \frac{{\rm TP} + {\rm TN}}{{\rm TP} + {\rm TN} + {\rm FP} + {\rm FN}}
\geq \frac{|\tilde{H}_\theta|+|\tilde{L}_\theta|}{|\tilde{H}_\theta|+|\tilde{L}_\theta|+|\tilde{U}_\theta|}, \\
\end{align*}

For the recall, we assume that all elements of $\tilde{U}_\theta$ are included in FN as the worst-case scenario. Then, the following lower bound holds with probability $1-\sum_{\vx\in \cX}r^{\rm min}(\vx)$,
\begin{equation*}
\text{Recall} = \frac{{\rm TP}}{{\rm TP} + {\rm FN}}
\geq \frac{|\tilde{H}_\theta|}{|\tilde{H}_\theta|+|\tilde{U}_\theta|}.
\end{equation*}
Similarly, assuming that all elements of $\tilde{U}_\theta$ are included in FP as the worst-case scenario for the precision and specificity, the following inequality holds with probability $1-\sum_{\vx\in \cX}r^{\rm min}(\vx)$,
\begin{align*}
\text{Precision} =& \frac{{\rm TP}}{{\rm TP} + {\rm FP}}
\geq \frac{|\tilde{H}_\theta|}{|\tilde{H}_\theta|+|\tilde{U}_\theta|}, \\
\text{Specificity} =& \frac{{\rm TN}}{{\rm FP} + {\rm TN}}
\geq \frac{|\tilde{L}_\theta|}{|\tilde{L}_\theta|+|\tilde{U}_\theta|}.
\end{align*}

Therefore, the proposition is proved.
\end{proof}

\subsection{Proof of corollary~\ref{col:prob_ineq}}

In the standard classification rule, $\vx$ is classified into $\tilde{H}_\theta$ when $\mu_N(\vx) - \beta \sigma_N(\vx) > \theta$, $\vx$ is classified into $\tilde{L}_\theta$ when $\mu_N(\vx) + \beta \sigma_N(\vx) < \theta$, and $\vx$ is classified into $\tilde{U}_\theta$ when the both of conditions are not satisfied. Regarding $\Phi(\beta)$ as $\Pr(\vx \in U_\theta)$, the standard classification rule is equivalent to the classification rule of Eqs.~\eqref{eq:classification_rule_H}, \eqref{eq:classification_rule_L}, and \eqref{eq:classification_rule_D} under the assumption that $\Phi(\beta)>0.5$. Let $\tilde{r}^{\rm min}(\vx)=\min\{\Pr(\vx \in H_{\theta}), \Pr(\vx \in L_{\theta}), 1-\Phi(\beta)\}$. Then, $\beta$ can be converted to $\epsilon(\vx) = g\inv(\Phi(\beta \mid \vx)$. In this case, the theorem~\ref{thm:stopping_criterion} is reformulated as follows:
\begin{equation*}
\Pr((\tilde{H}_{\theta},\tilde{L}_{\theta},\tilde{U}_{\theta}) \text{ is $\epsilon(\vx)$-accurate}) \geq  1-\sum_{\vx \in \cX}\tilde{r}^{\rm min}(\vx).
\end{equation*}

Therefore, the corollary~\ref{col:prob_ineq} is proved.

\section{Detailed Algorithm of the Proposed Method}
\subsection{How to determine the margin}
\label{seq:determine_margin}
We have considered $\epsilon$ as a given, but in practice, $\epsilon$ is not necessarily provided and must be determined based on some criteria. However, the appropriate value for $\epsilon$ can change depending on the variance of the prior distribution and the magnitude of the noise, making it challenging to set $\epsilon$ appropriately without prior knowledge of these factors. In the following, we describe a method to adaptively determine $\epsilon$ according to the variance of the prior distribution and the size of the noise, even when no prior knowledge about $\epsilon$ is available. When stopping LSE based on Eq.~\eqref{eq:stopping_criterion}, the average probability $r^{\rm avg}$ that must be ensured per candidate point is $r^{\rm avg} = \frac{1 - \delta}{|\cX|}$. Moreover, from $r^{\rm min}(\vx):=\min\{\Pr(\vx \in H_{\theta}), \Pr(\vx \in L_{\theta}), \Pr( \vx \notin U_{\theta})\}$, since the margin affects only $\Pr( \vx \notin U_{\theta})$, focusing solely on $\Pr( \vx \notin U_{\theta})$, for it to be $\frac{1 - \delta}{|\cX|}$, the probability that the true function is included in the margin, $\Pr( \vx \in U_{\theta})$, must be $1-\frac{1 - \delta}{|\cX|}$. 
Since the most difficult points for classification are those where the surrogate model's mean function values are near the threshold, considering the worst-case scenario where $\theta=\mu(\vx)$, we have
\begin{align}
    \epsilon = 2\sigma_N(\vx)\Phi^{-1}\left(1-\frac{1 - \delta}{2|\cX|}\right).
\end{align}
By setting $\epsilon$ such that this equation holds, $\epsilon$ can be set to make $\Pr( \vx \in U_{\theta})$ larger than $1-\frac{1 - \delta}{|\cX|}$ when $\sigma_N(\vx)$ reaches the desired level.

Since $\sigma_N(\vx)$ is influenced by the variance of the prior distribution and the precision parameter of observation noise $\lambda$, it is difficult to predetermine the desirable $\sigma_N(\vx)$. However, the posterior variance depends solely on the input and is fixed once the input is determined. Thus, the posterior variance of $\vx$ can be considered as an indicator of {\it{how much data have been collected}} at $\vx$. This is an idea similar to the effective sample size (ESS) used in the context of MCMC~\citep{doi:10.1080/10618600.1998.10474787} and survey sampling~\citep{kish1965survey}.
If the user specifies the number of measurements allowed per point, the corresponding posterior variance can be derived. 
This calculation is complex, but the relationship between the data count-like quantity and the posterior variance can be considered by looking at the decrease in the posterior variance $\sigma_L(\vx)$ when observing the same candidate point $L$ times. In this situation, $\sigma_L(\vx)$ is equivalent to the posterior variance of the 1D Gaussian distribution when $N$ samples are observed with the variance of the prior distribution as $k(\vx, \vx)$ and the accuracy parameter of the observation noise as $\lambda$, that is, $\sigma^2_L(\vx)=\frac{\lambda\inv k(\vx, \vx)}{\lambda\inv +L k(\vx, \vx)}$. 
Finally, $\epsilon$ is estimated using the following equation.
\begin{equation*}
    \epsilon(\vx) = 2 \sqrt{\frac{\lambda\inv k(\vx, \vx)}{\lambda\inv +L k(\vx, \vx)}}\Phi^{-1}\left(1-\frac{1-\delta}{2|\cX|}\right).
\end{equation*}
If $k$ is a non-stationary kernel, $\epsilon$ varies depending on the candidate point, whereas for a stationary kernel, it remains the same across all candidate points.
$L$ is a parameter that the user decides according to the problem; if the user wishes to reduce data acquisition costs at the expense of prediction accuracy, $L$ should be set smaller, and if the user wants to ensure prediction accuracy even at higher data acquisition costs, $L$ should be set larger. 
Empirically, for the 2D case, which is common in LSE, we confirmed that $L$ values between 1 and 5 work effectively.

In the Appendix~\ref{app:eps}, we demonstrate that our method for determining $\epsilon$ is less dependent on objective functions and noise variances, compared to directly setting $\epsilon$. Furthermore, we present the differences in LSE efficiency and compare the stopping times as $L$, the parameter used to determine $\epsilon$, is varied.

\subsection{The pseudo code}
\label{seq:pseudocode}
The pseudo code of the proposed method based on the method for determining $\epsilon$ is shown in Algorightm~\ref{alg:proposed_method}.

\begin{algorithm}[t]
\caption{Proposed LSE Algorithm with Stopping Criterion}
\label{alg:proposed_method}
\begin{algorithmic}
\REQUIRE observed data $S_N$, GP prior, a set of candidate points $\cX$, and hyperparameters $\delta>0$ and $L\in \mathbb{N}$ (or $\varepsilon >0$). 
\STATE {\bfseries Initialize:} $\tilde{H}_{\theta}, \tilde{L}_{\theta} \leftarrow \emptyset$, $\tilde{U}_{\theta} \leftarrow \cX$, $\delta \leftarrow 0$
\WHILE{$\delta < \tilde{\delta}$}
\STATE{\bfseries Step~1} Construct the predictive mean $\mu_N(\vx)$ and standard deviation $\sigma_N(\vx)$ defined in~\eqref{eq:posterior_mean} and \eqref{eq:posterior_variance}. 
Define $r^{\rm max}(\vx):=\max\{\Pr(\vx \in H_{\theta}), \Pr(\vx \in L_{\theta}), \Pr(\vx \in U_{\theta})\}$.
\STATE{\bfseries Step~2} Classify for all $\vx \in \cX$ as follows: 
\IF{$r^{\rm max}(\vx)=\Pr(\vx \in H_{\theta})$}
\STATE update the upper-level set: $\tilde{H}_{\theta} \leftarrow \tilde{H}_{\theta} \cup \{\vx \}$
\ELSIF{$r^{\rm max}(\vx)=\Pr(\vx \in L_{\theta})$}
\STATE update the lower-level set: $\tilde{L}_{\theta} \leftarrow \tilde{L}_{\theta} \cup \{\vx \}$
\ELSE
\STATE update the unclassified set $\tilde{U}_{\theta} \leftarrow \tilde{U}_{\theta} \cup \{\vx \}$.
\ENDIF
\STATE{\bfseries Step~3} Calculate $r^{\rm min}(\vx)$ for all $\vx \in \cX$ where
\begin{itemize}
    \item $r^{\rm min}(\vx) = \min\{\Pr (\vx \in H_{\theta}), \Pr (\vx \in L_{\theta}), \Pr (\vx \in U_{\theta}))\}$
    \item $\Pr (\vx \in H_{\theta} )= 1 - \Phi\left(\frac{\theta-\mu_N(\vx)}{\sigma_N(\vx)}\right)$
    \item $\Pr (\vx \in L_{\theta} ) = \Phi\left(\frac{\theta-\mu_N(\vx)}{\sigma_N(\vx)}\right)$
    \item $\Pr (\vx \notin U_{\theta} ) = 1 - \Phi\left(\frac{\theta+\epsilon/2-\mu_N(\vx)}{\sigma_N(\vx)}\right)+\Phi\left(\frac{\theta-\epsilon/2-\mu_N(\vx)}{\sigma_N(\vx)}\right)$
    \item $\epsilon = 2\sqrt{\frac{\lambda^{-1} k(\vx, \vx)}{\lambda^{-1} +L k(\vx, \vx)}}\Phi^{-1}\left(1-\frac{1 - \delta}{2|\cX|}\right)$
\end{itemize}
\STATE{\bfseries Step~4} Select the next evaluation point by maximizing the acquisition function as $\vx_N = {\rm arg}\max_{\vx\in \cX} r^{\rm min}(\vx)$ and observe $y_N = f(\vx_N) + \eta$, $\eta \sim \mathcal{N}(0, \lambda^{-1})$. 
\STATE{\bfseries Step~5} Update the dataset: $S_N \leftarrow S_N \cup \{(\vx_{N}, y_{N})\}$ 
\STATE{\bfseries Step~6} Calculate $\tilde{\delta} \leftarrow 1 - \sum_{\vx\in \cX}r^{\rm min}(\vx)$
\ENDWHILE
\end{algorithmic}
\end{algorithm}

\section{Additional experimental results}
\label{app:exp}

\subsection{The relationship between true probability of Theorem~\ref{thm:stopping_criterion} and its lower bound}
\label{app:thm_and_lb}

\begin{figure}[th!]
    \centering
    \begin{tabular}{ccc}
    \includegraphics[height=4cm,width=4cm]{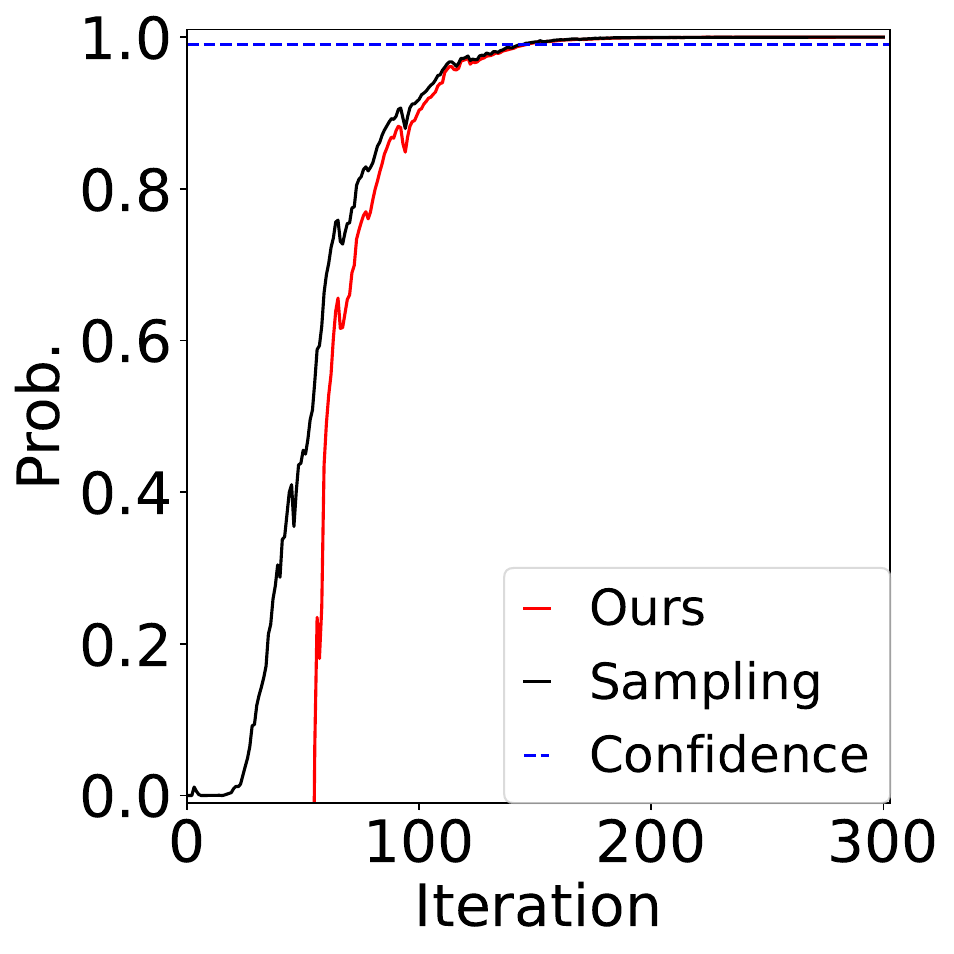}&    
    \includegraphics[height=4cm,width=4cm]{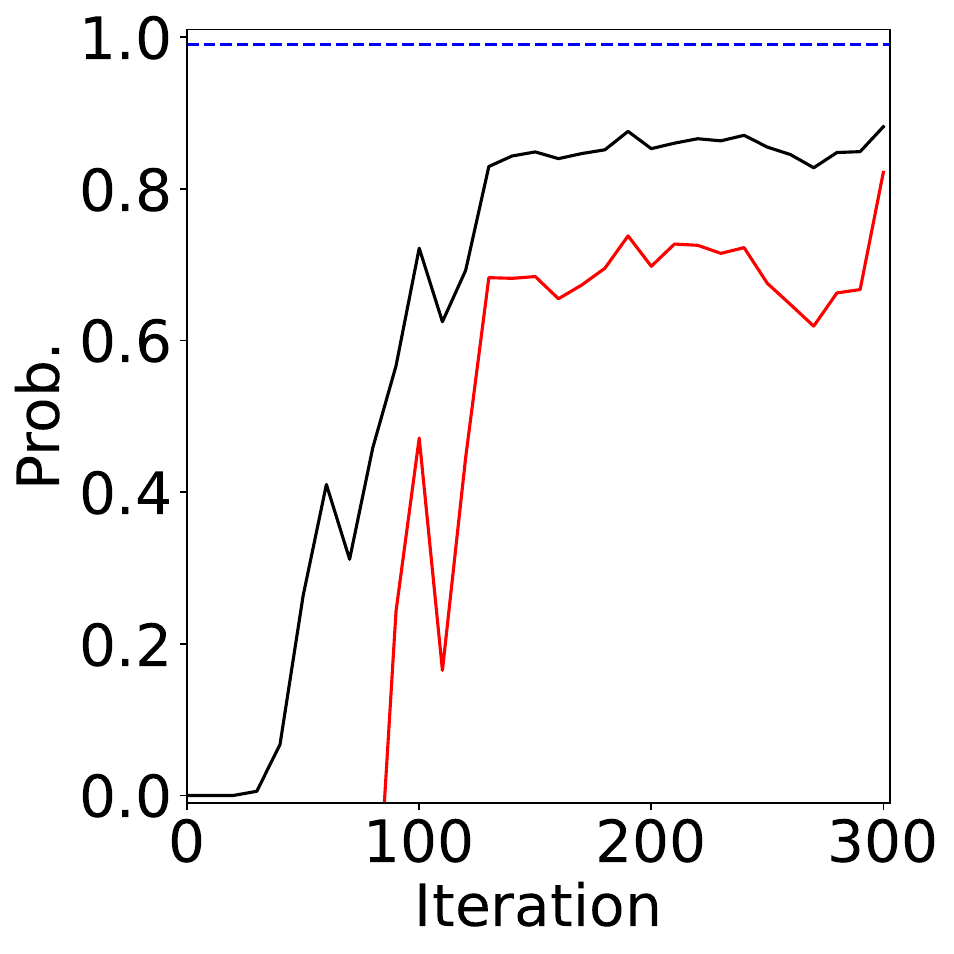}&
    \includegraphics[height=4cm,width=4cm]{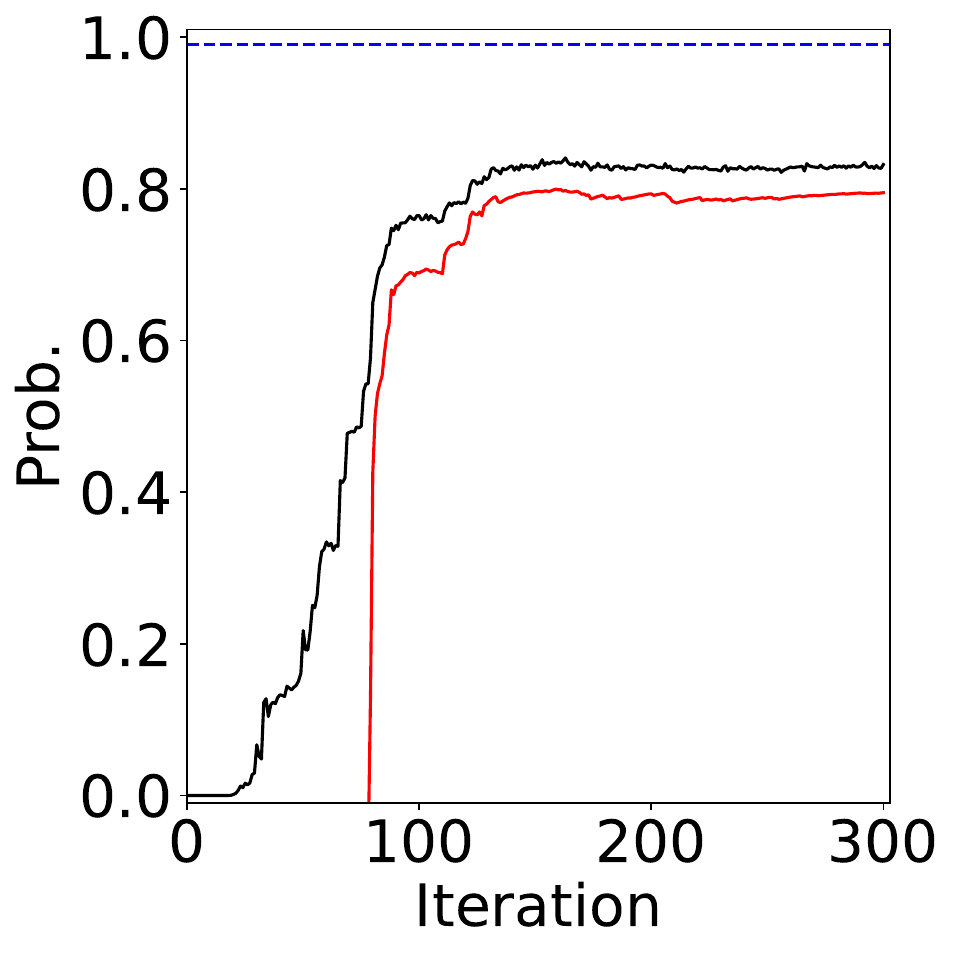}\\
    (a) Ours& (b) MELK& (c) MILE  \\
    \includegraphics[height=4cm,width=4cm]{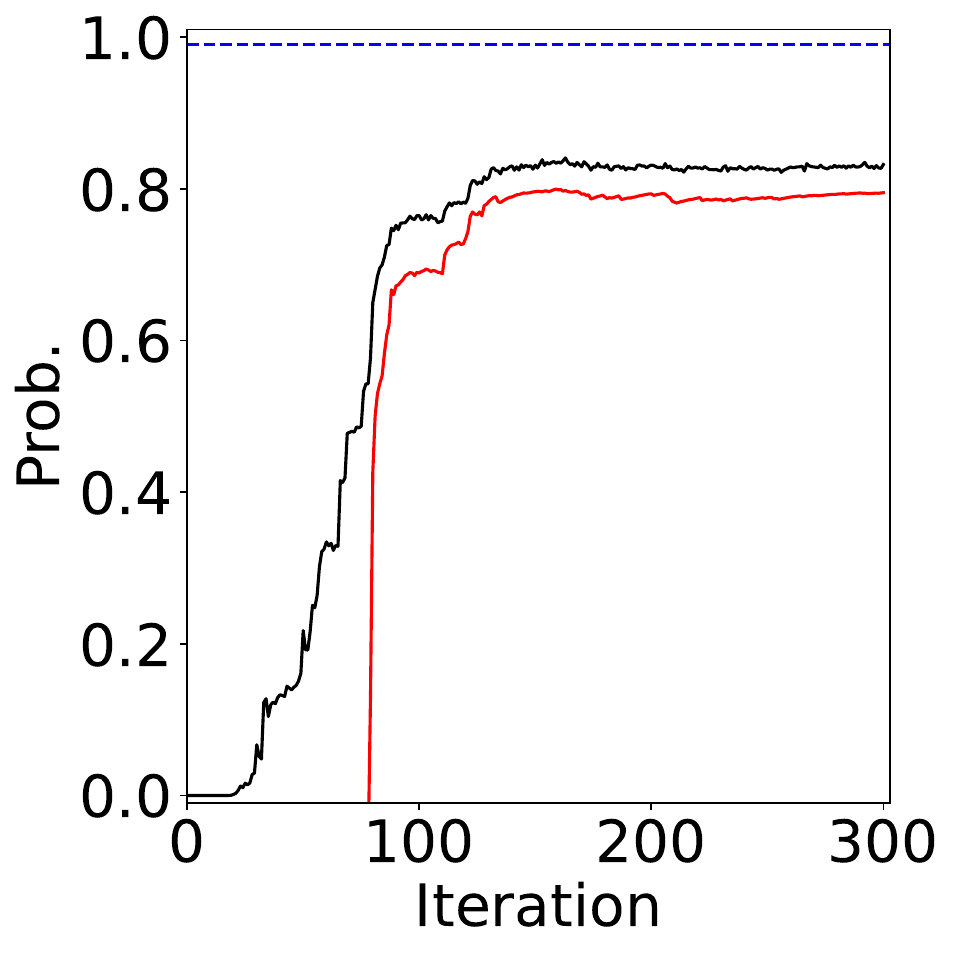}&    
    \includegraphics[height=4cm,width=4cm]{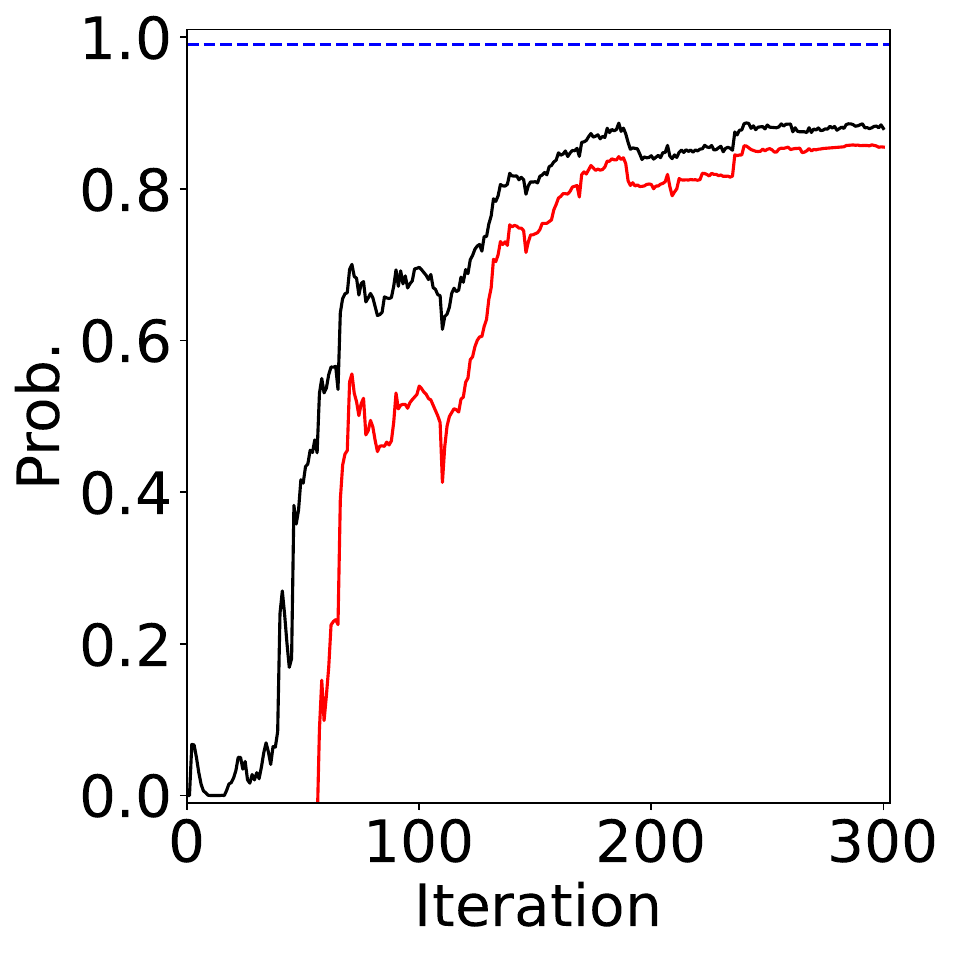}&
    \includegraphics[height=4cm,width=4cm]{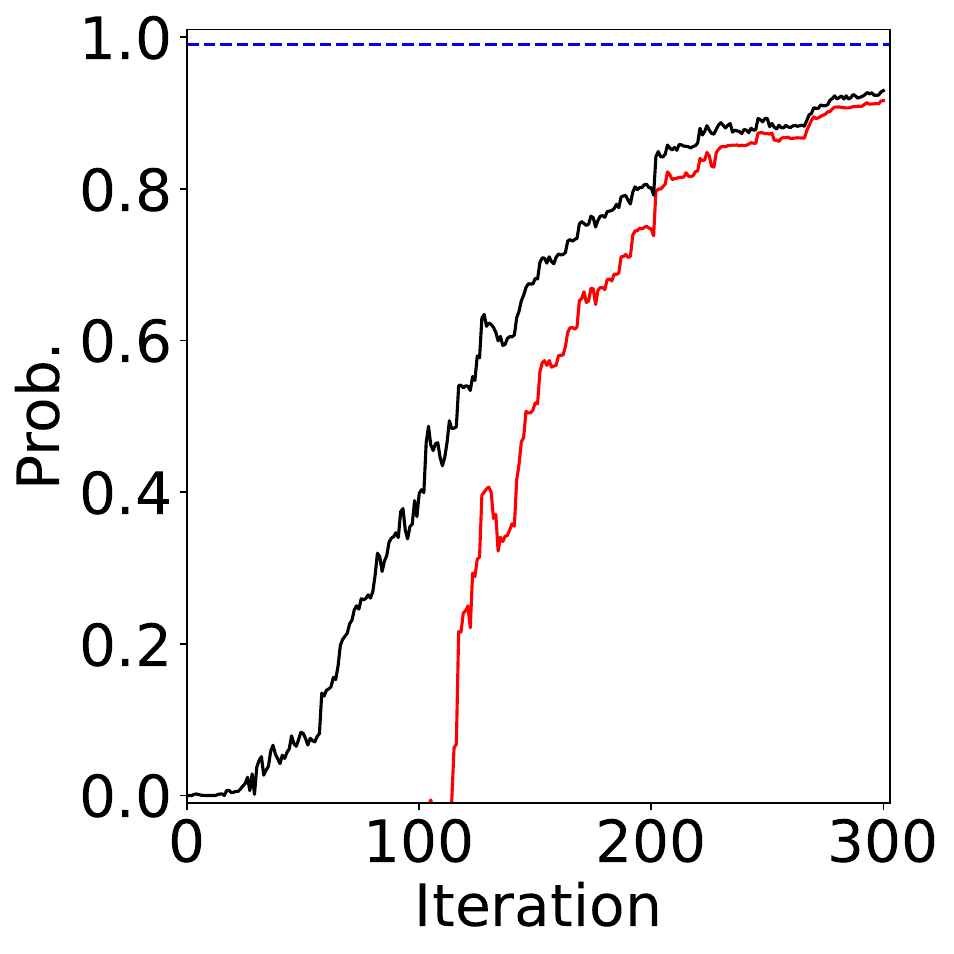}\\
    (d) RMILE& (e) Straddle& (f) US \\
    \end{tabular}
    \caption{
    True probability and its lower bound in {\tt{Branin}} function.}
    \label{fig_prob_ineq_branin}
\end{figure}

\begin{figure}[th!]
    \centering
    \begin{tabular}{ccc}
    \includegraphics[height=4cm,width=4cm]{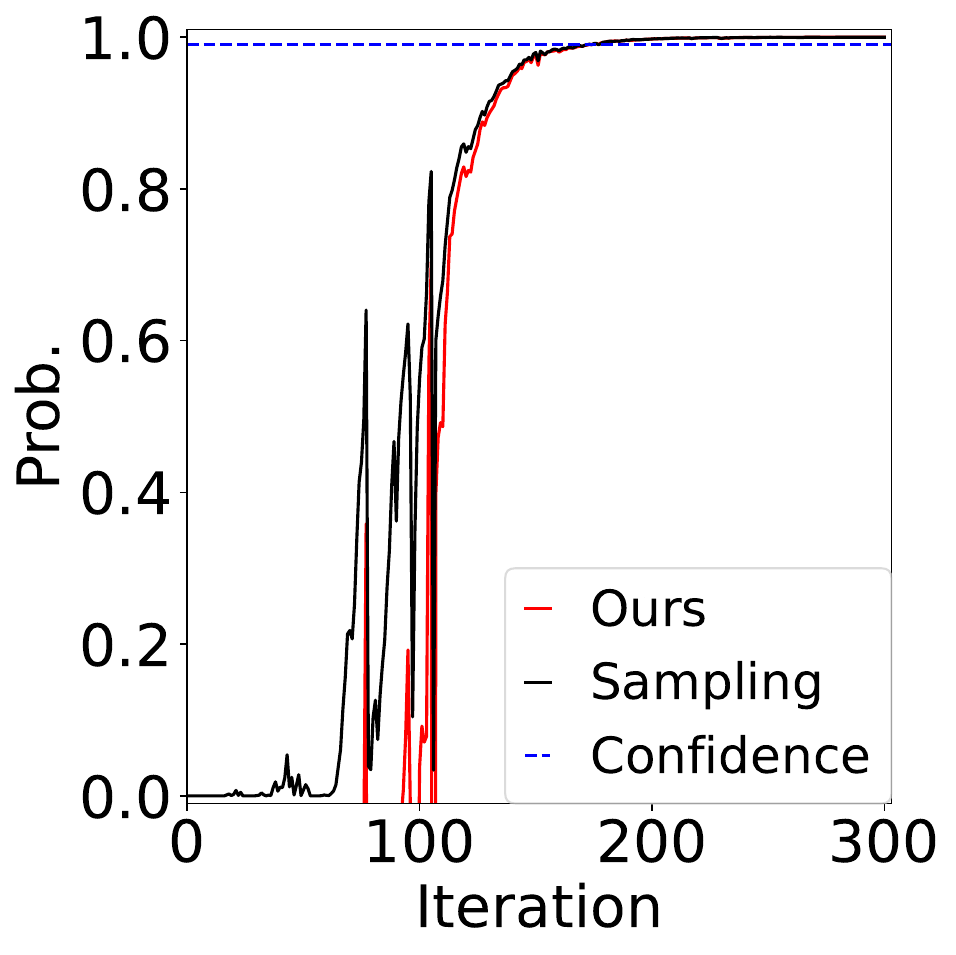}&    
    \includegraphics[height=4cm,width=4cm]{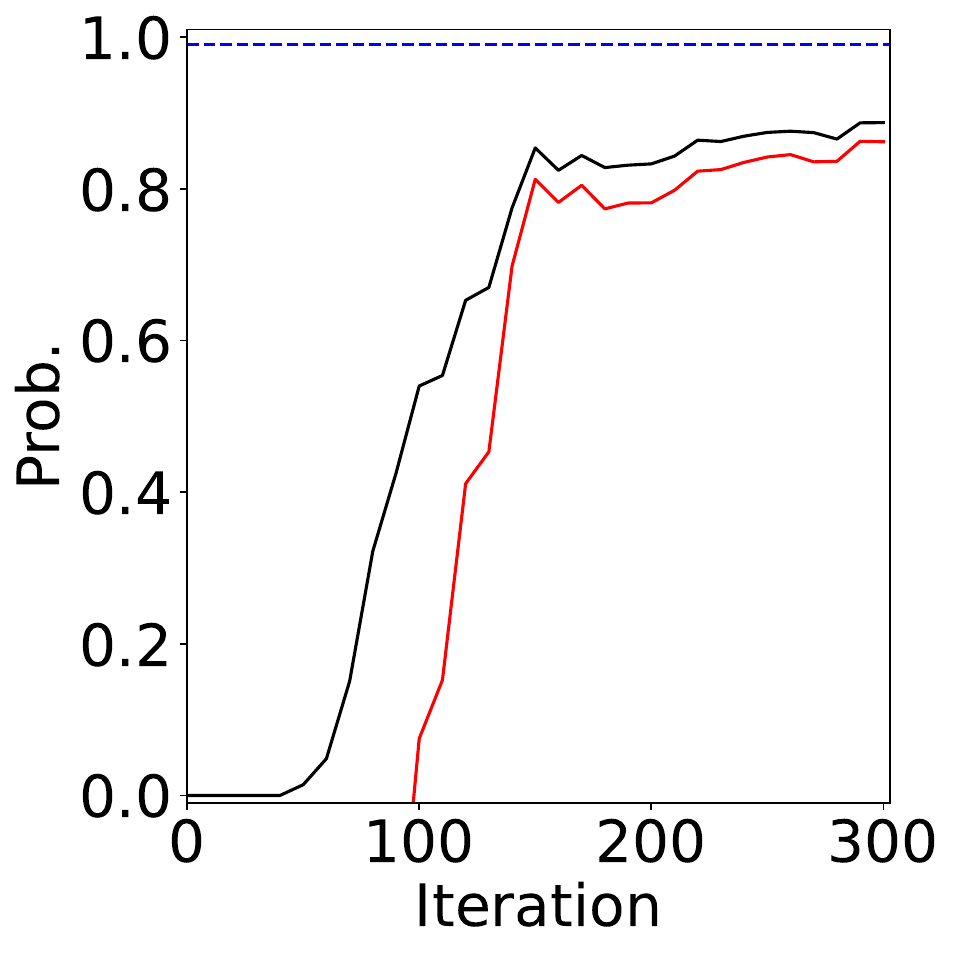}&
    \includegraphics[height=4cm,width=4cm]{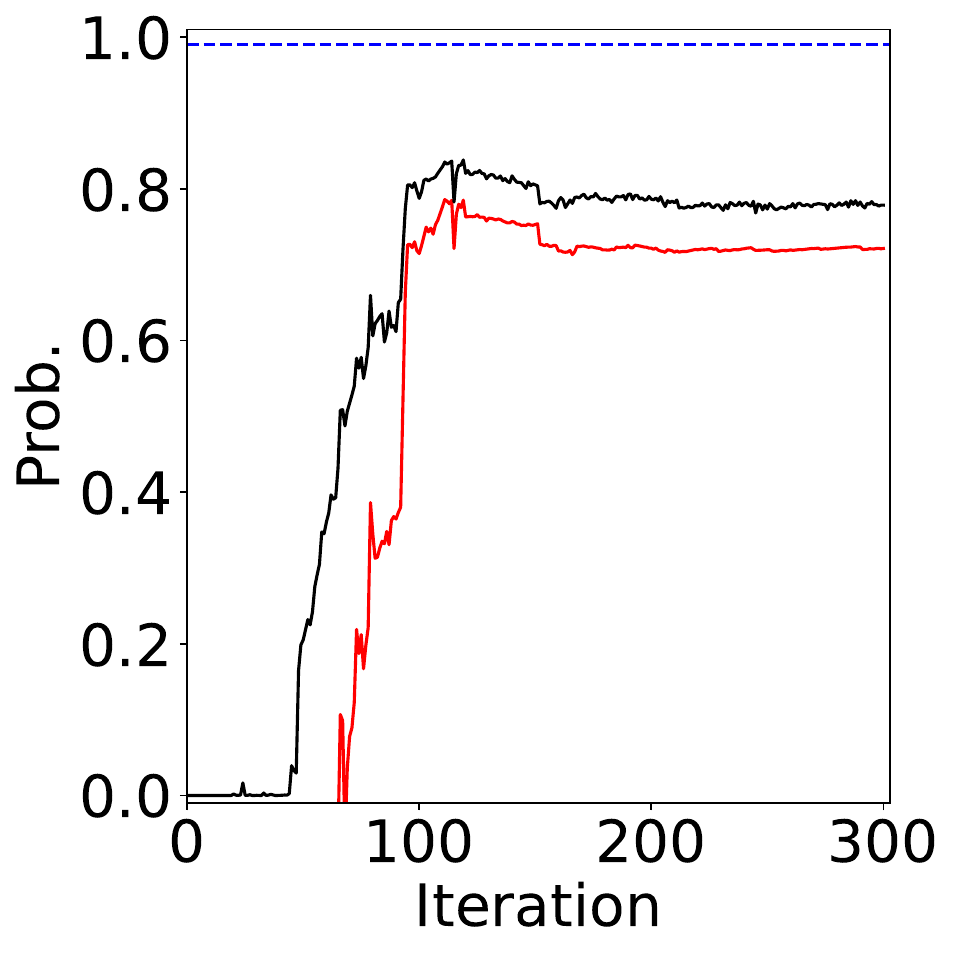}\\
    (a) Ours& (b) MELK& (c) MILE  \\
    \includegraphics[height=4cm,width=4cm]{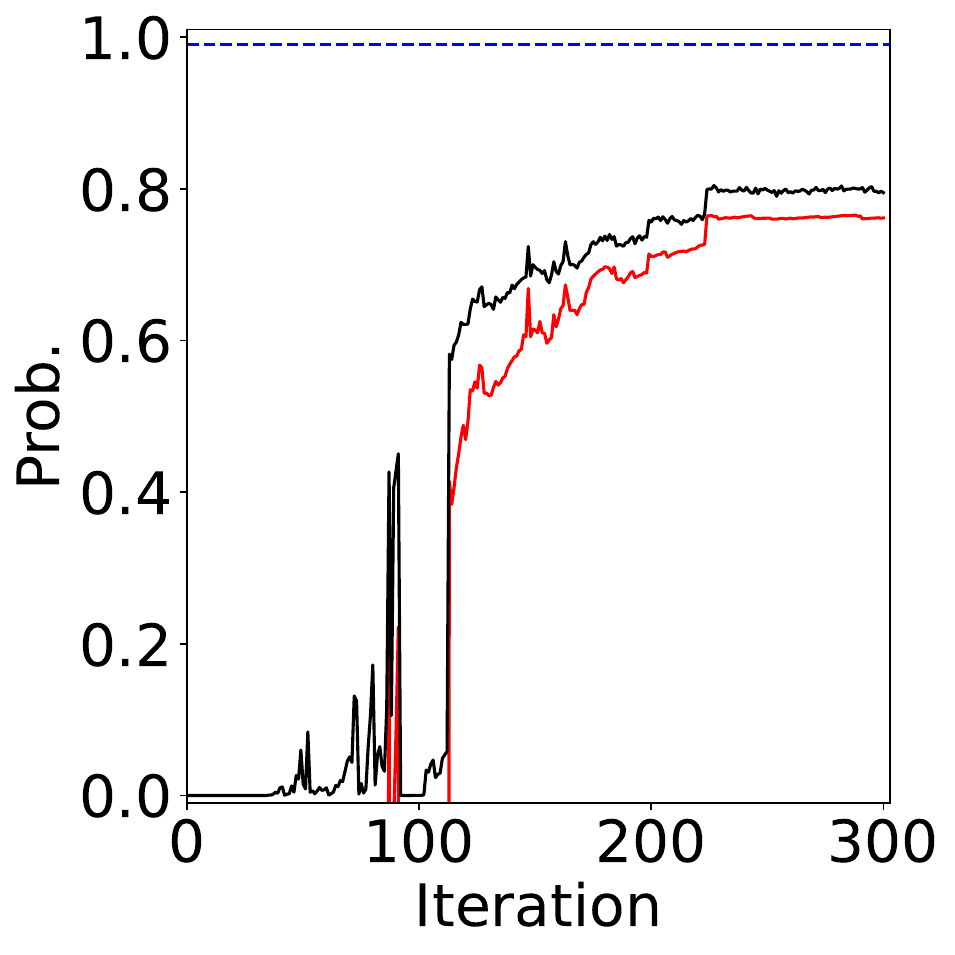}&    
    \includegraphics[height=4cm,width=4cm]{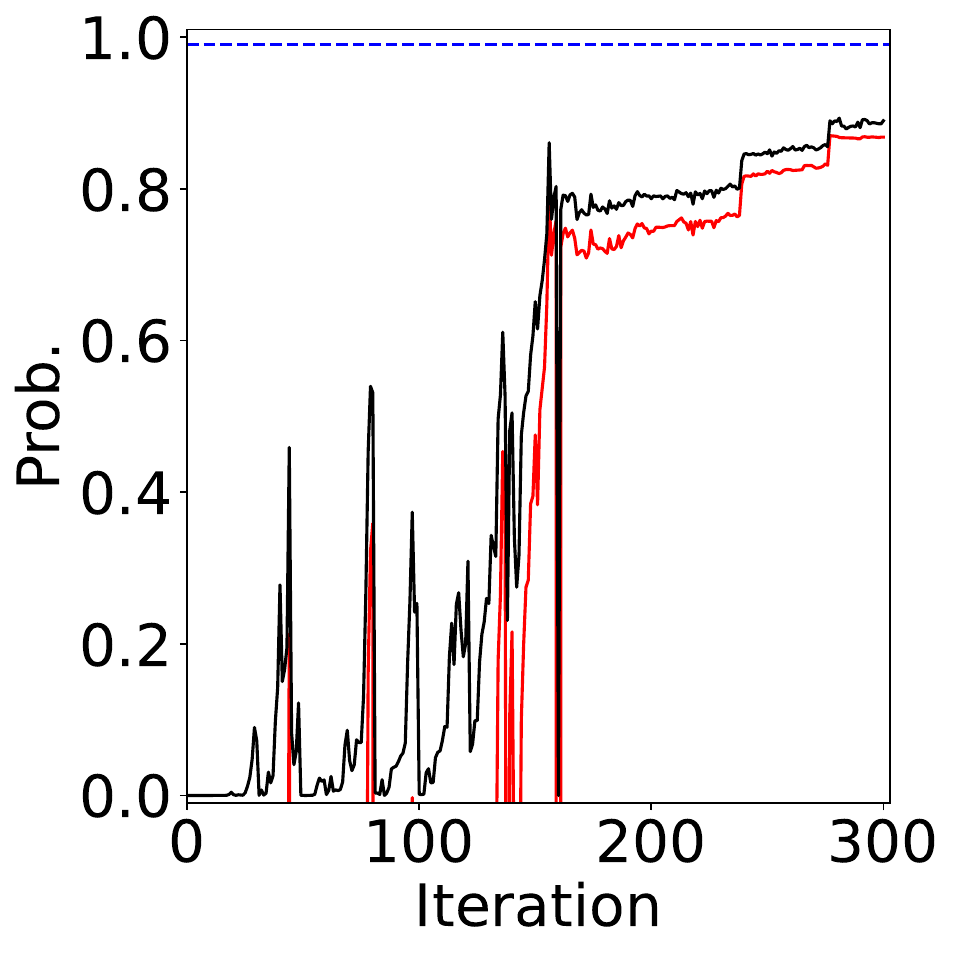}&
    \includegraphics[height=4cm,width=4cm]{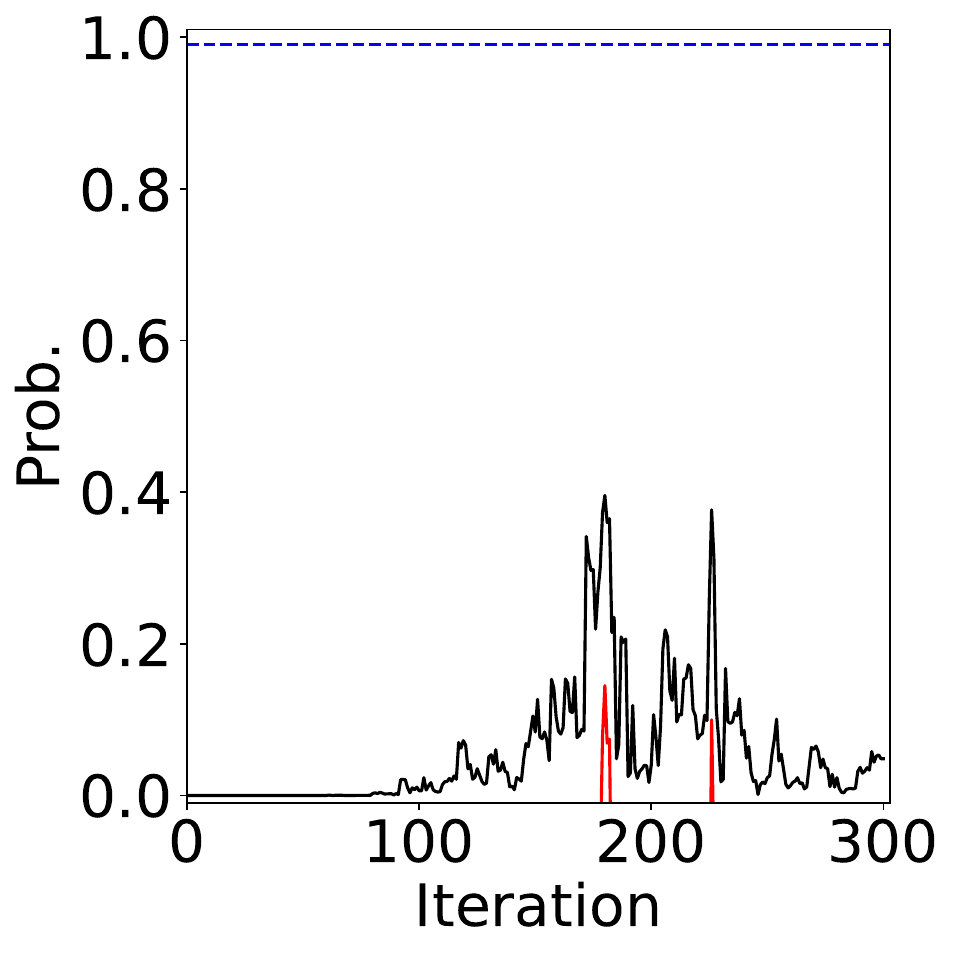}\\
    (d) RMILE& (e) Straddle& (f) US \\
    \end{tabular}
    \caption{
    True probability and its lower bound in {\tt{Rosenbrock}} function.}
    \label{fig_prob_ineq_rosenbrock}
\end{figure}

From Lemma~\ref{lem:general_prob_ineq}, the left-hand side of Eq.~\eqref{eq:ineq_of_sc} can be calculated by evaluating the following equation,
\begin{equation*}
\Pr(\forall \vx \in \mathcal{X}, \left|z(\vx) - \bE \left[z(\vx)\right]\right|\leq  \gamma(\vx), w(\vx)\geq \eta(\vx)).
\end{equation*}
This can be approximated by generating sample paths according to the Gaussian process posterior distribution. Specifically, by considering the generated sample paths as the true function and counting the number of times the equation is satisfied, we can compute this probability. In this subsection, we compare the probability obtained from sampling with its lower bound proposed in this study and discuss the tightness of the proposed lower bound.

In this experiment, we use two test functions, {\tt{Branin}} and {\tt{Rosenbrock}}, and compare the left-hand side and right-hand side of Eq.~\eqref{eq:ineq_of_sc} for each acquisition function. The experimental settings are the same as the experiment in the main text, and we set the number of sample paths generated for evaluating the left-hand side to $10,000$. There are numerical considerations when calculating the left-hand side of Eq.~\eqref{eq:ineq_of_sc}. Since $z(\vx)$ is a binary variable, $\bE[z(\vx)] = \Pr(\vx \in H_\theta)$, meaning that $|z(\vx) - \bE[z(\vx)]|$ can only take $\Pr(\vx \in H_\theta)$ or $\Pr(\vx \in L_\theta)$. Therefore, for $\gamma(\vx) = p^{\rm min}(\vx)$, $|z(\vx) - \bE[z(\vx)]| \leq \gamma(\vx)$ only holds when the equality holds. Although it is correct to use Eq.~\eqref{eq:ineq_of_sc} theoretically, calculating whether equality holds numerically can lead to numerical errors. To avoid these errors, we modify the actual calculation by setting $\gamma(\vx) = \max\{(p^{\rm min}(\vx) + p^{\rm max}(\vx))/2, \Pr(\vx \in U_\theta)\}$.

The experimental results are shown in Figs.~\ref{fig_prob_ineq_branin} and \ref{fig_prob_ineq_rosenbrock}. The results indicate that when the value of the left-hand side is small, the lower bound becomes loose, but as the left-hand side approaches 1, the lower bound becomes tighter. Notably, when the lower bound reaches the confidence parameter $\delta = 0.99$ used in this study, the bound is considerably tight, and there is no difference in the stopping timing whether it is computed approximately using sampling or using the lower bound. This is because the lower bound becomes tighter as the probabilities of each candidate point become sufficiently high, given that no bounds other than the uniform bound are used to calculate the lower bound.

Next, we compare the left-hand side of Eq.~\eqref{eq:ineq_of_sc} when using the proposed acquisition function versus other acquisition functions. In this case, the left-hand side of Eq.~\eqref{eq:ineq_of_sc} converges to 1 for the proposed acquisition function, while it does not for other acquisition functions. Especially, in the case of US for {\tt{Rosenbrock}}, even if LSE progresses, the true probability remains small. Therefore, its lower bound takes a negative value, resulting in a trivial lower bound. This is because other acquisition functions do not search further to improve the classification accuracy of candidate points once it exceeds a certain threshold, whereas the proposed acquisition function continues to search to increase the probability of $\epsilon$-accuracy, even for candidate points that have already achieved a certain level of classification accuracy. Actually, we show that the proposed stopping criterion cannot stop LSE when we use the other acquisition functions in Appendix~\ref{app:otherAF}. Therefore, it can be said that the combination of the proposed acquisition function and stopping criterion realizes efficient LSE. 

\subsection{The effect of the number of candidate points on the stopping timing}
\label{app:effect_n_candidates}

\begin{figure}[th!]
    \centering
    \begin{tabular}{cc}
    \includegraphics[height=4cm,width=4cm]{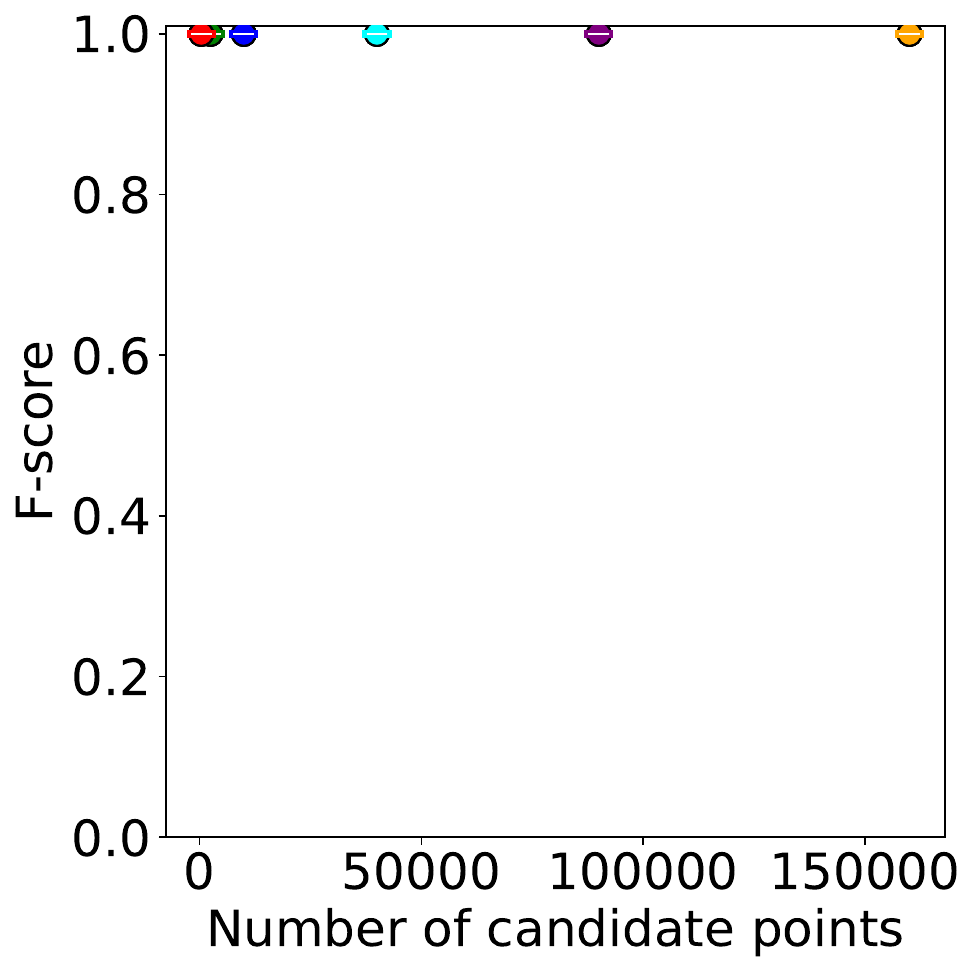}&    
    \includegraphics[height=4cm,width=4cm]{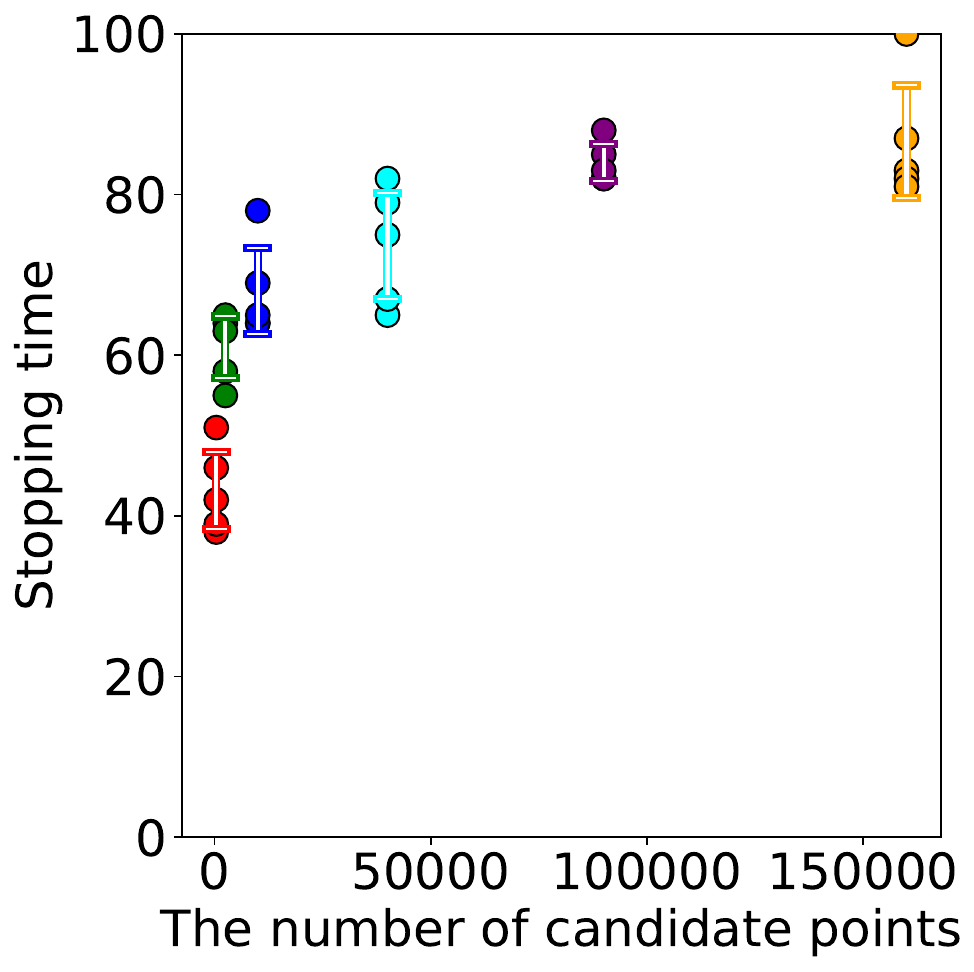} \\
    (a) F1 score($\sigma_{\rm noise}=0$)& (b) Stopping time($\sigma_{\rm noise}=0$)  \\
    \includegraphics[height=4cm,width=4cm]{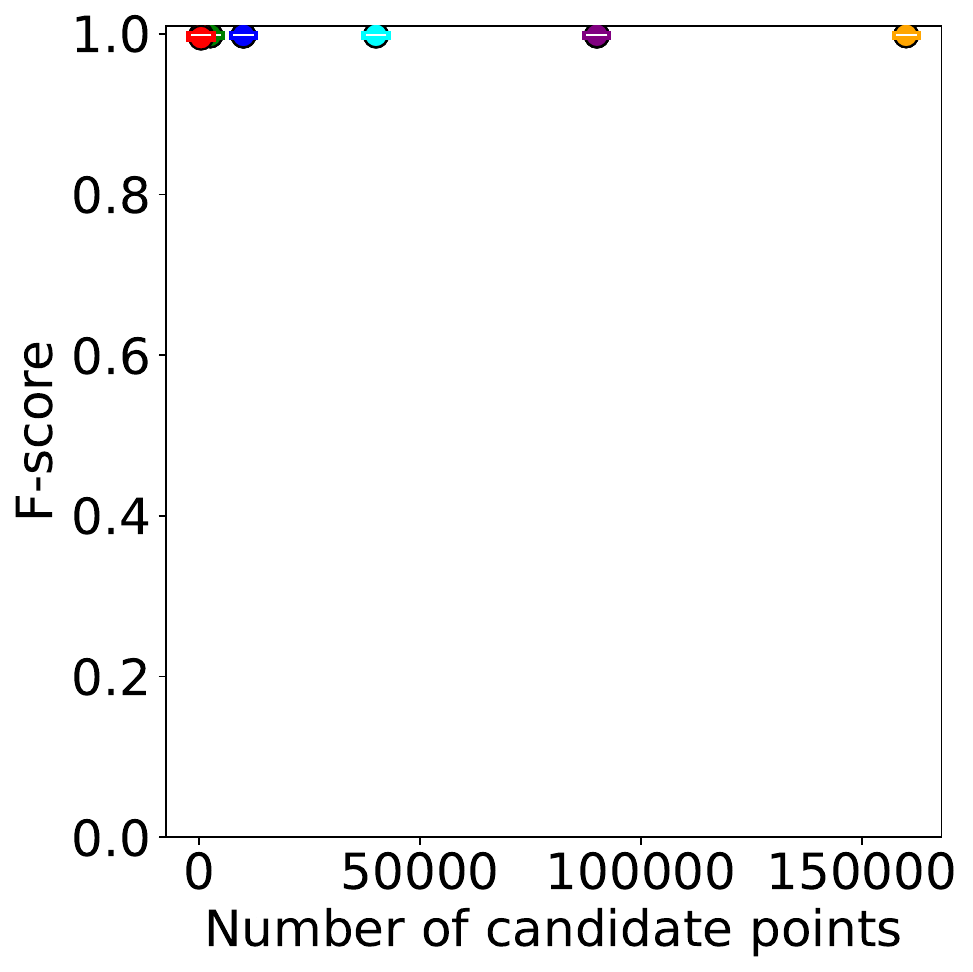}&    
    \includegraphics[height=4cm,width=4cm]{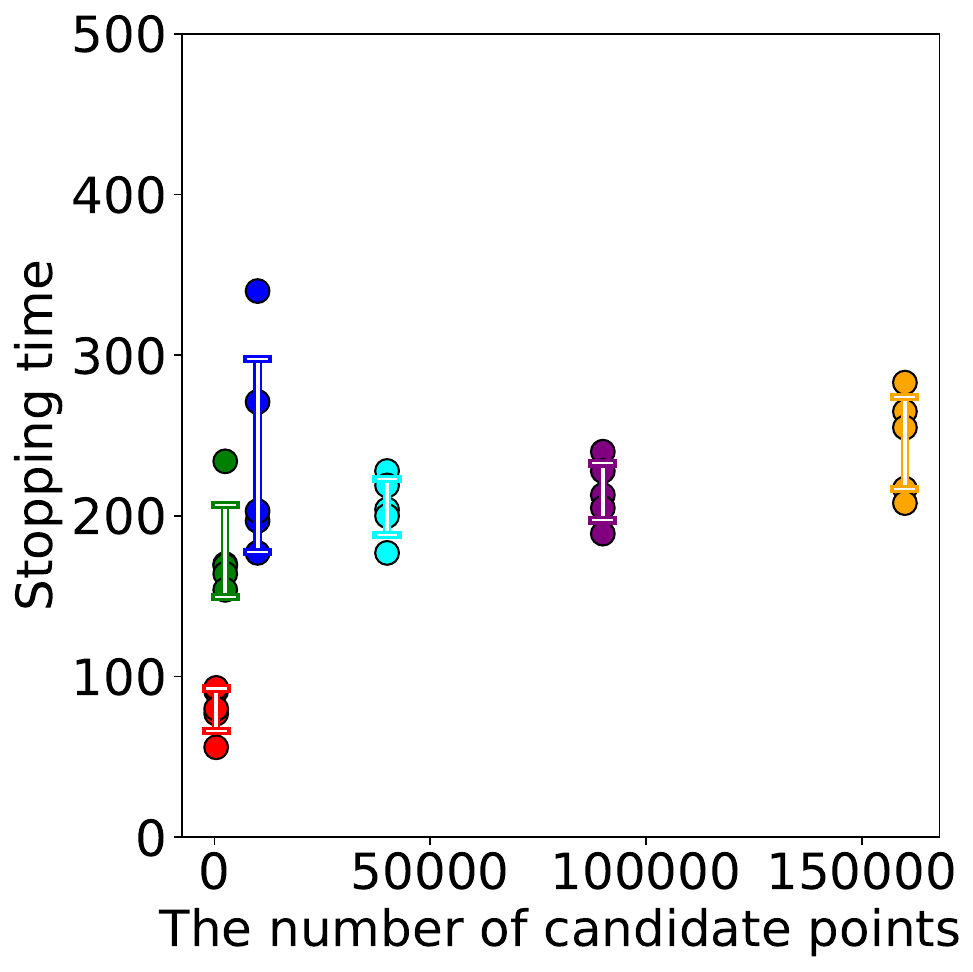}\\
    (c) F1 score($\sigma_{\rm noise}=10$)& (b) Stopping time($\sigma_{\rm noise}=10$) \\
    \end{tabular}
    \caption{
    True probability and its lower bound in {\tt{Rosenbrock}} function.}
    \label{fig_increase_candidate_points}
\end{figure}

Since the proposed stopping criterion uses Boolean inequalities, it is possible that the stopping timing may be delayed as the number of candidate points increases. In this experiment, we evaluate the impact of increasing the number of candidate points on the stopping timing of LSE and demonstrate that the proposed method can still stop even as the number of candidates increases. For this experiment, the {\tt rosenbrock} function is used as the test function, and the LSE threshold is set to $\theta = 100$. Additionally, two cases are evaluated: one without observation noise and the other with Gaussian noise having a variance of $\sigma^2_{\rm noise} = 10^2$.

The experimental results are shown in Fig.~\ref{fig_increase_candidate_points}. From these results, it can be observed that the F-score of the stopping timing shows almost no difference even as the number of candidate points increases, and the increase in stopping timing is slight with respect to the increase in candidate points. This is because the proposed stopping criterion does not rely on any lower bounds other than Boole's inequalities. Specifically, the probability that each individual candidate point is $\epsilon$-accurate can be calculated precisely. When the probability that each candidate point is $\epsilon$-accurate is sufficiently high, the lower bound using Boolean inequalities also becomes tight. Therefore, it can be concluded that the proposed stopping criterion is slight affected by the increase in the number of candidate points.

\subsection{Other test functions}
\label{app:testfunc}

\begin{figure}[th!]
    \centering
    \begin{tabular}{ccc}
    \includegraphics[height=4cm,width=4cm]{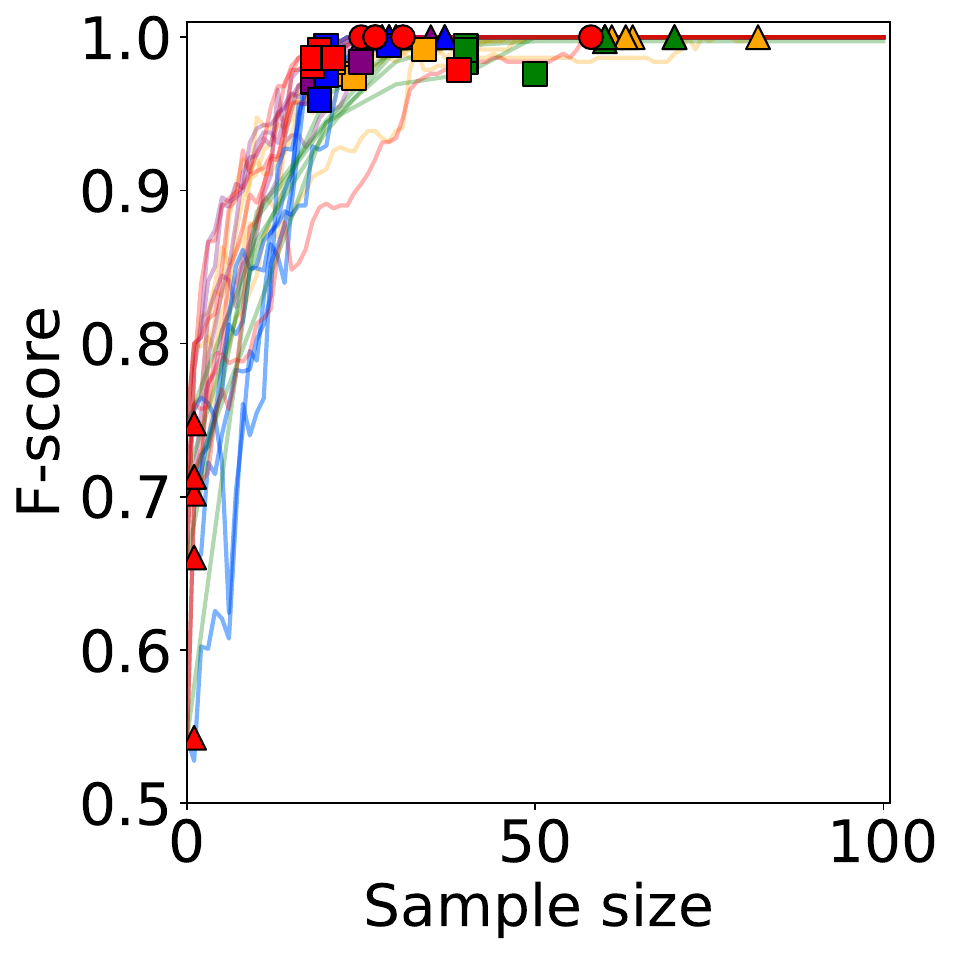}&    
    \includegraphics[height=4cm,width=4cm]{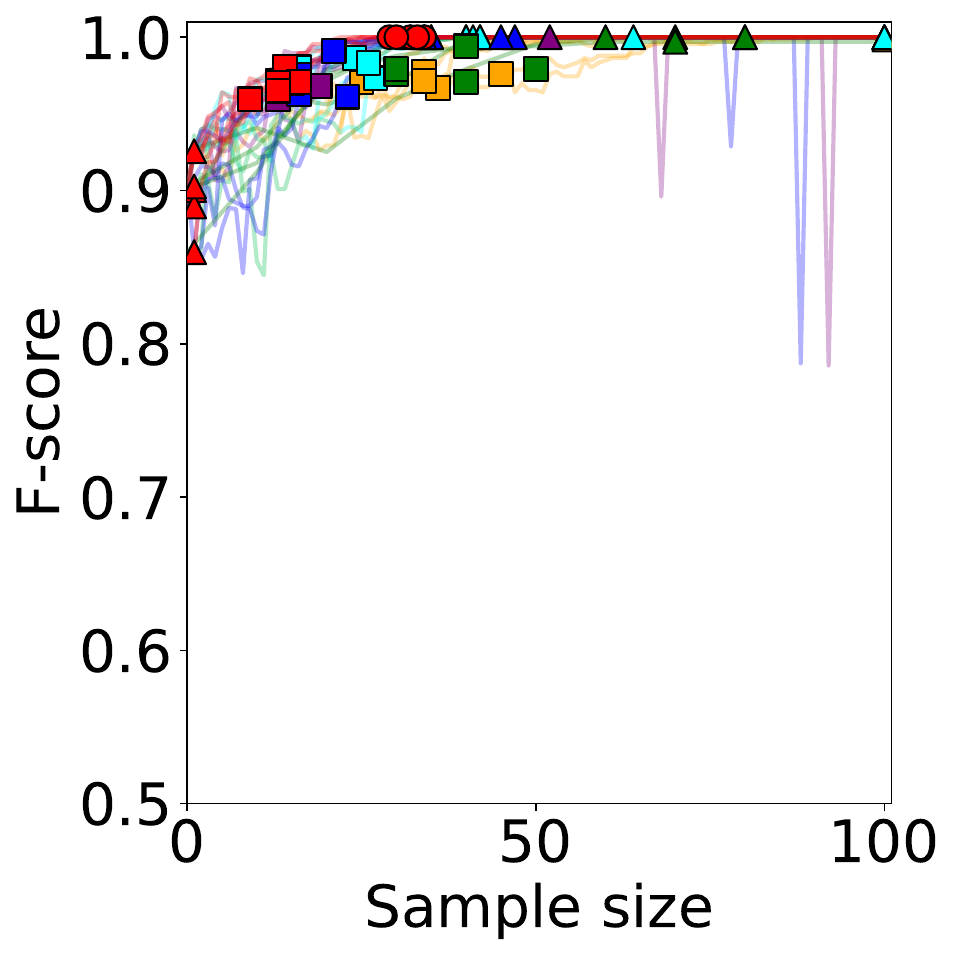}&
    \includegraphics[height=4cm,width=4cm]{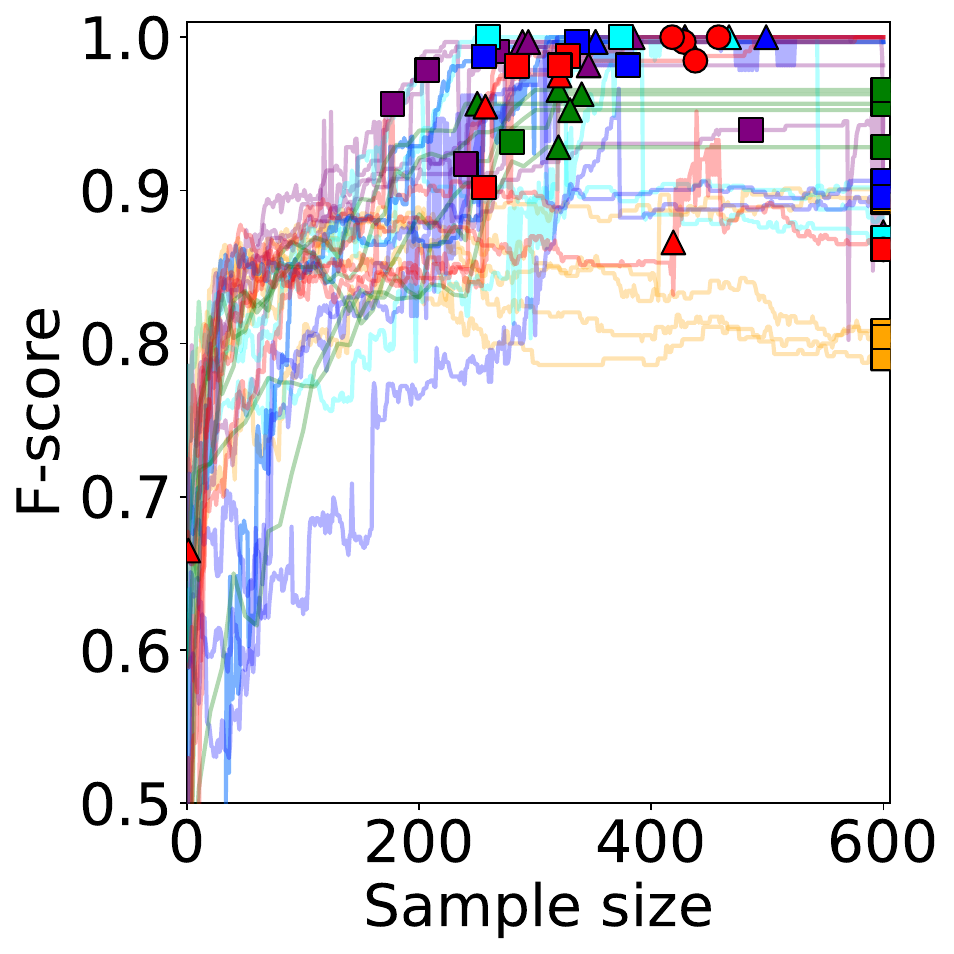}\\
    (a) {\tt{Sphere}}($\sigma_{\rm noise}=0$)& (b) {\tt{Rosenbrock}}($\sigma_{\rm noise}=0$)& (c) {\tt{Cross in tray}}($\sigma_{\rm noise}=0$)  \\
    \includegraphics[height=4cm,width=4cm]{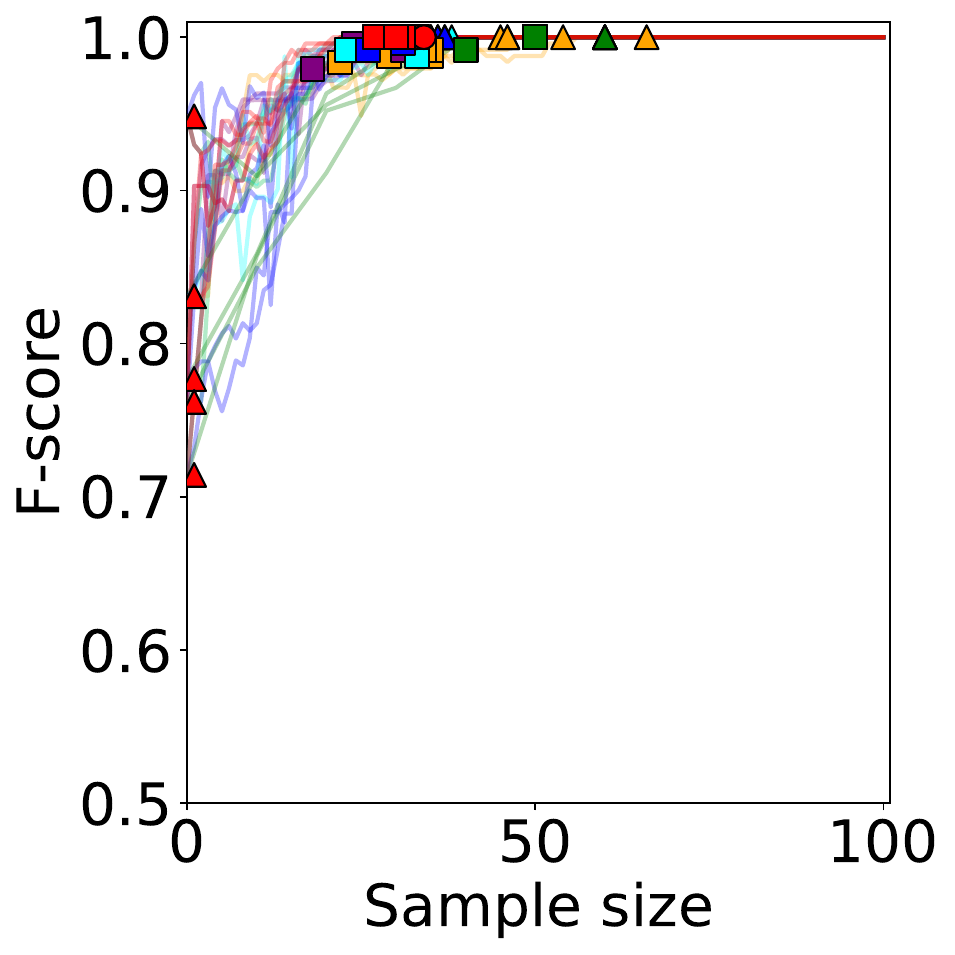}&    
    \includegraphics[height=4cm,width=4cm]{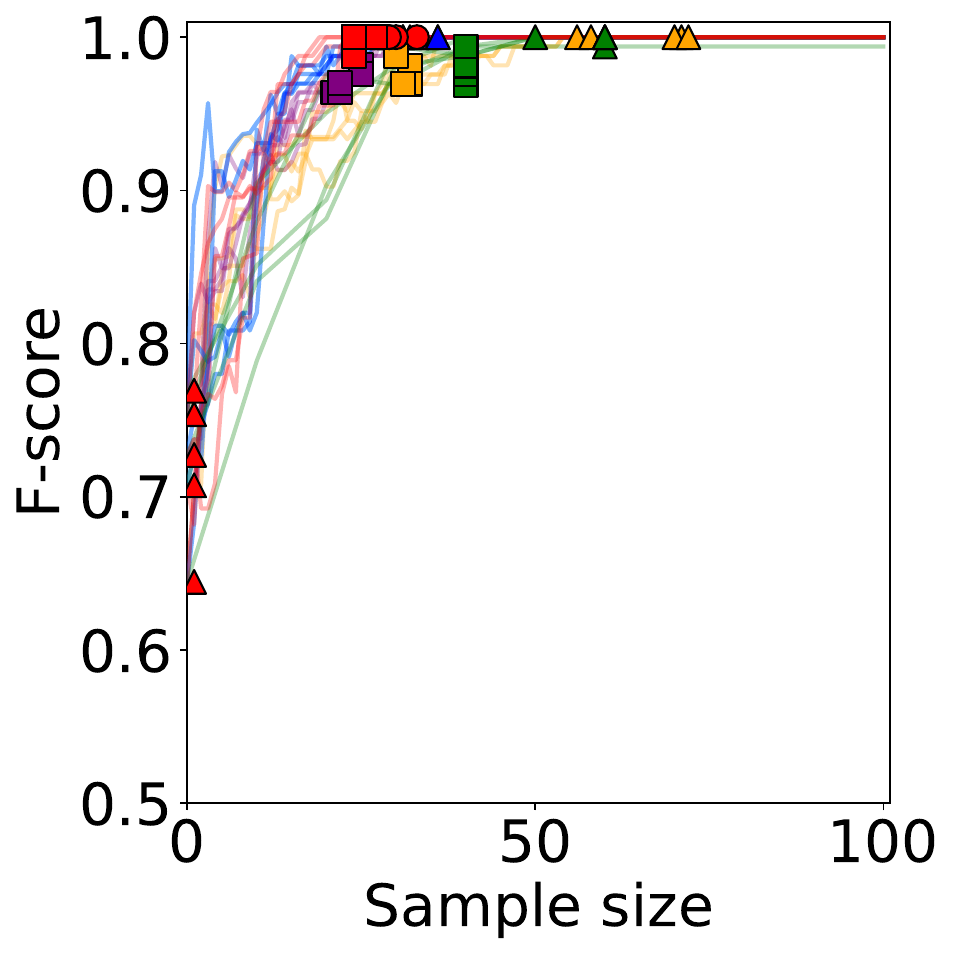}&
    \includegraphics[height=4cm,width=4cm]{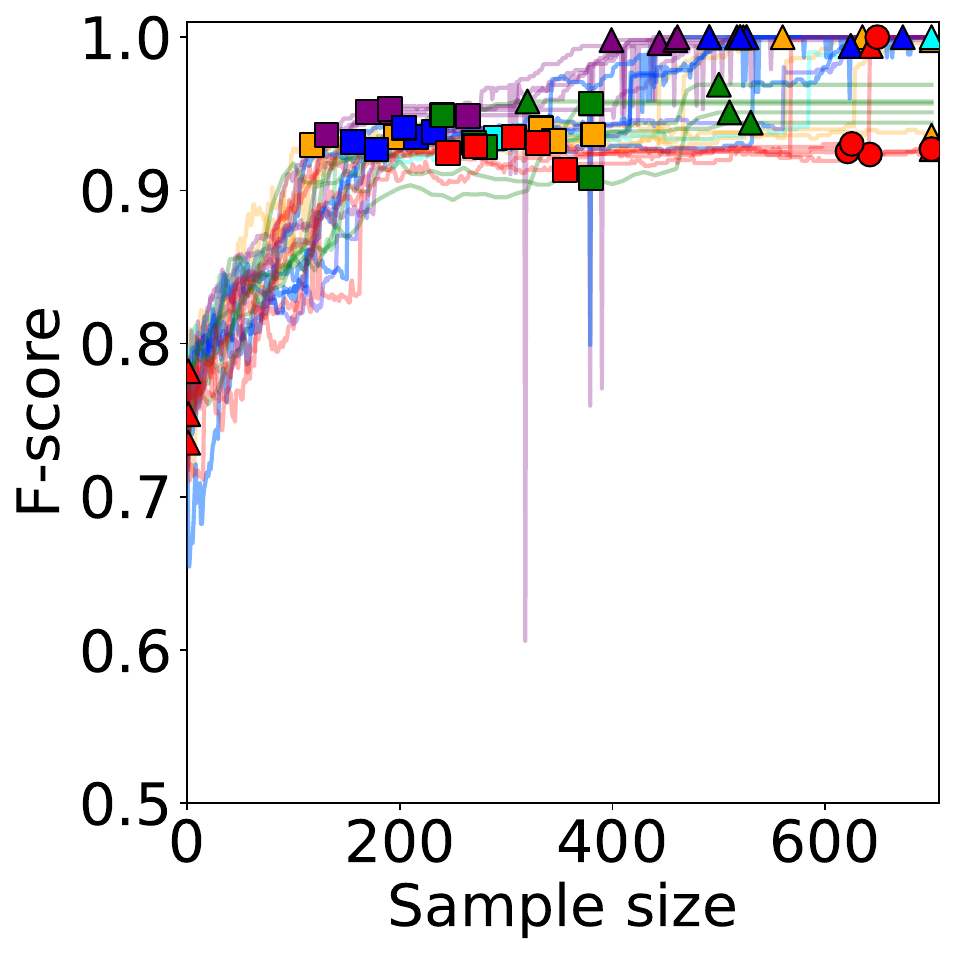}\\
    (d) {\tt{Booth}}($\sigma_{\rm noise}=0$)& (e) {\tt{Branin}}($\sigma_{\rm noise}=0$)& (f) {\tt{Holder table}}($\sigma_{\rm noise}=0$)  \\
    \includegraphics[height=4cm,width=4cm]{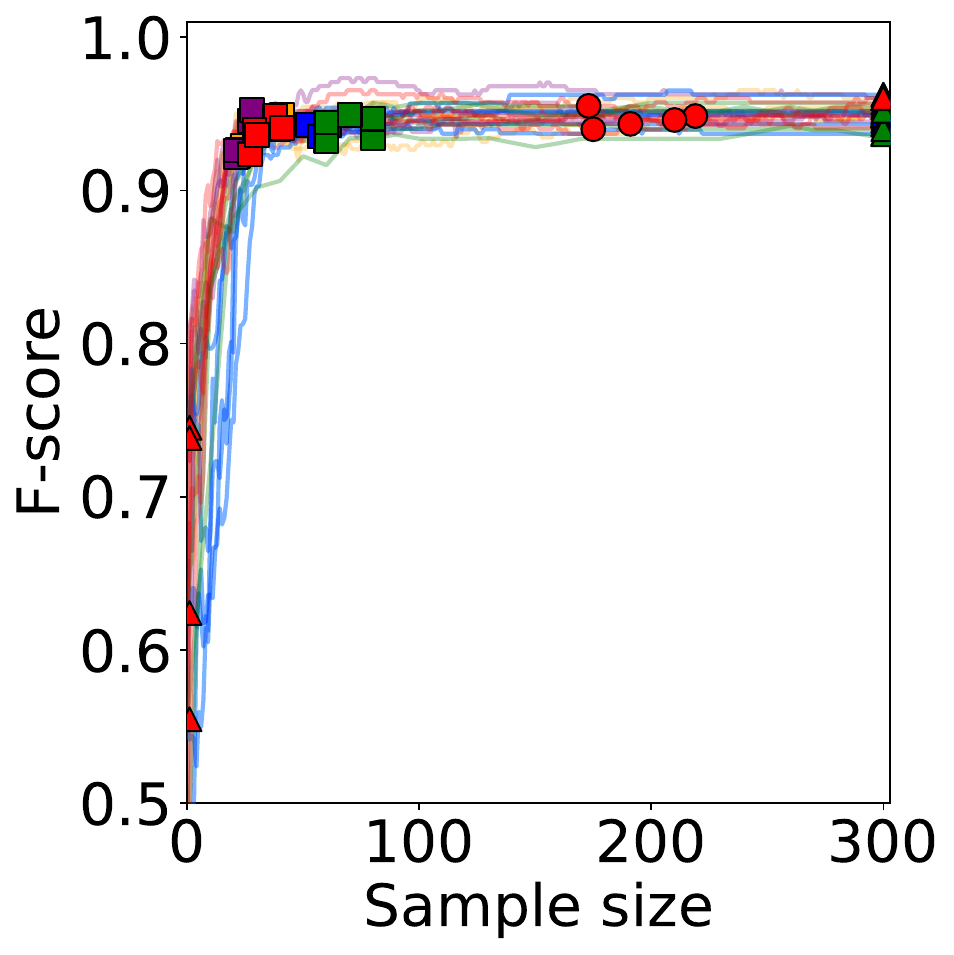}&    
    \includegraphics[height=4cm,width=4cm]{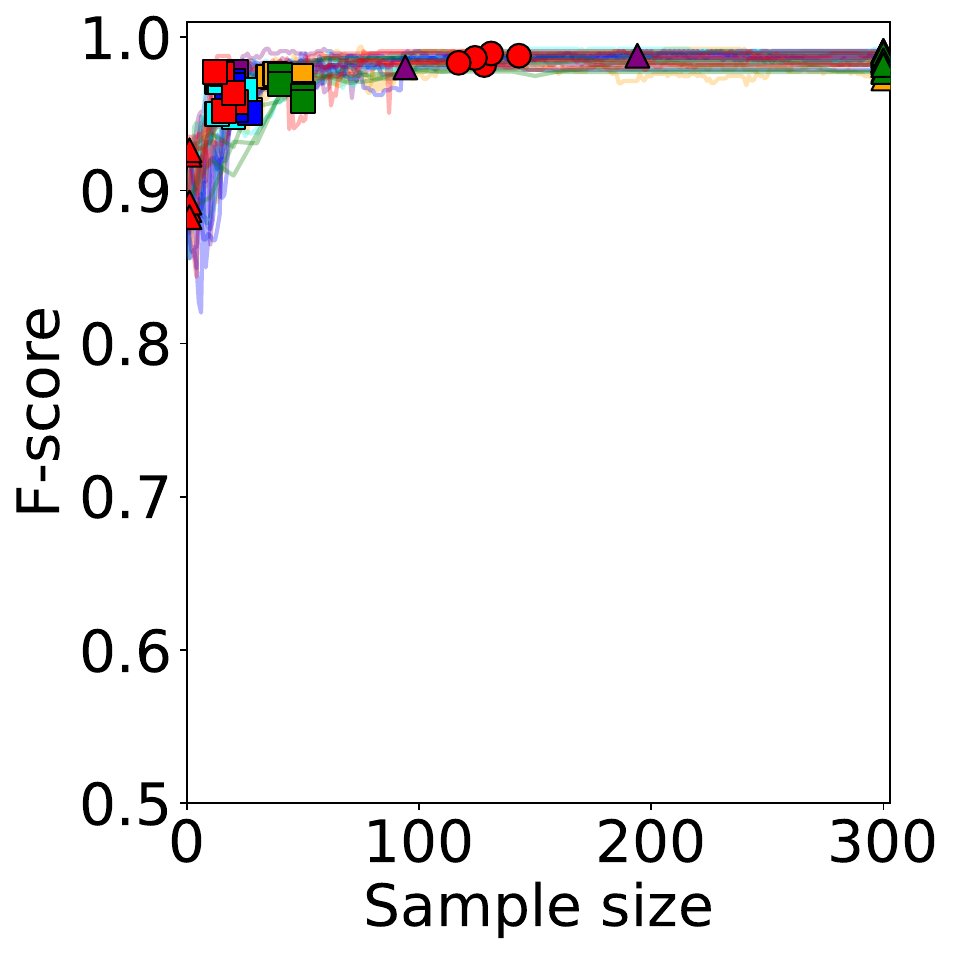}&
    \includegraphics[height=4cm,width=4cm]{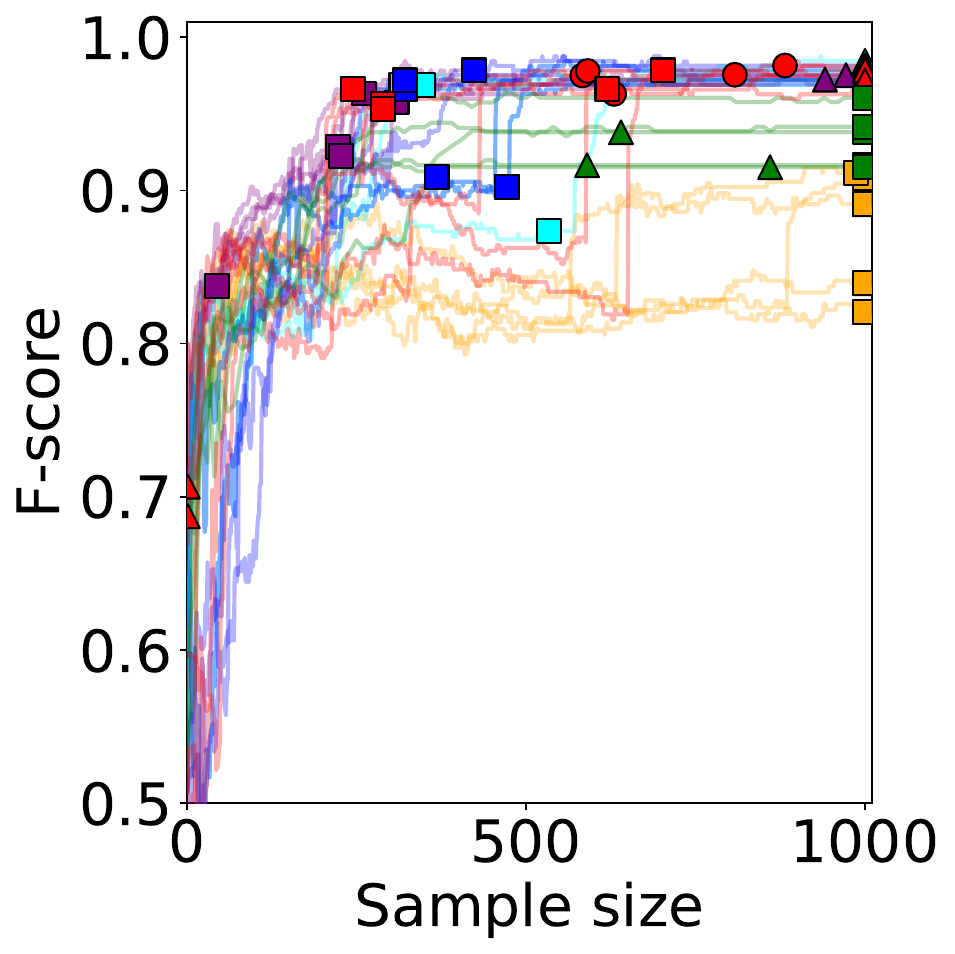}\\
    (g) {\tt{Sphere}}($\sigma_{\rm noise}=0.5$)& (h) {\tt{Rosenbrock}}($\sigma_{\rm noise}=30$)& (i) {\tt{Cross in tray}}($\sigma_{\rm noise}=0.01$)  \\
    \includegraphics[height=4cm,width=4cm]{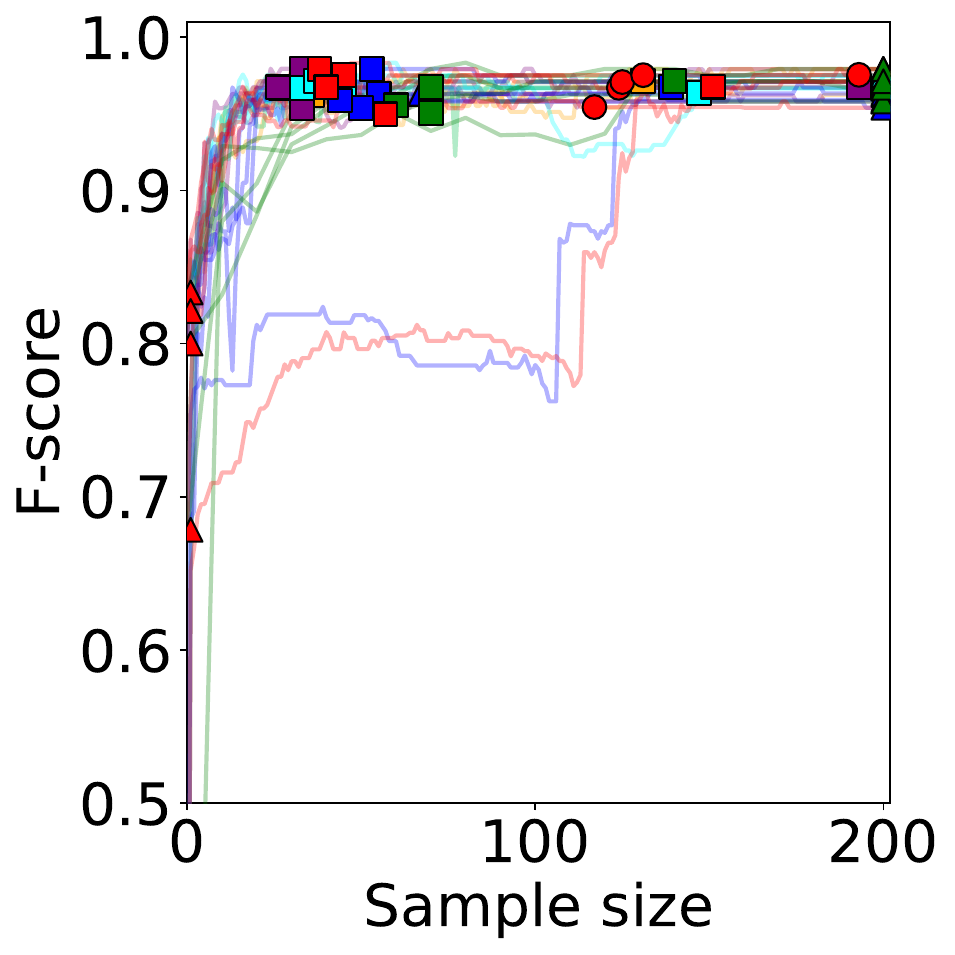}&    
    \includegraphics[height=4cm,width=4cm]{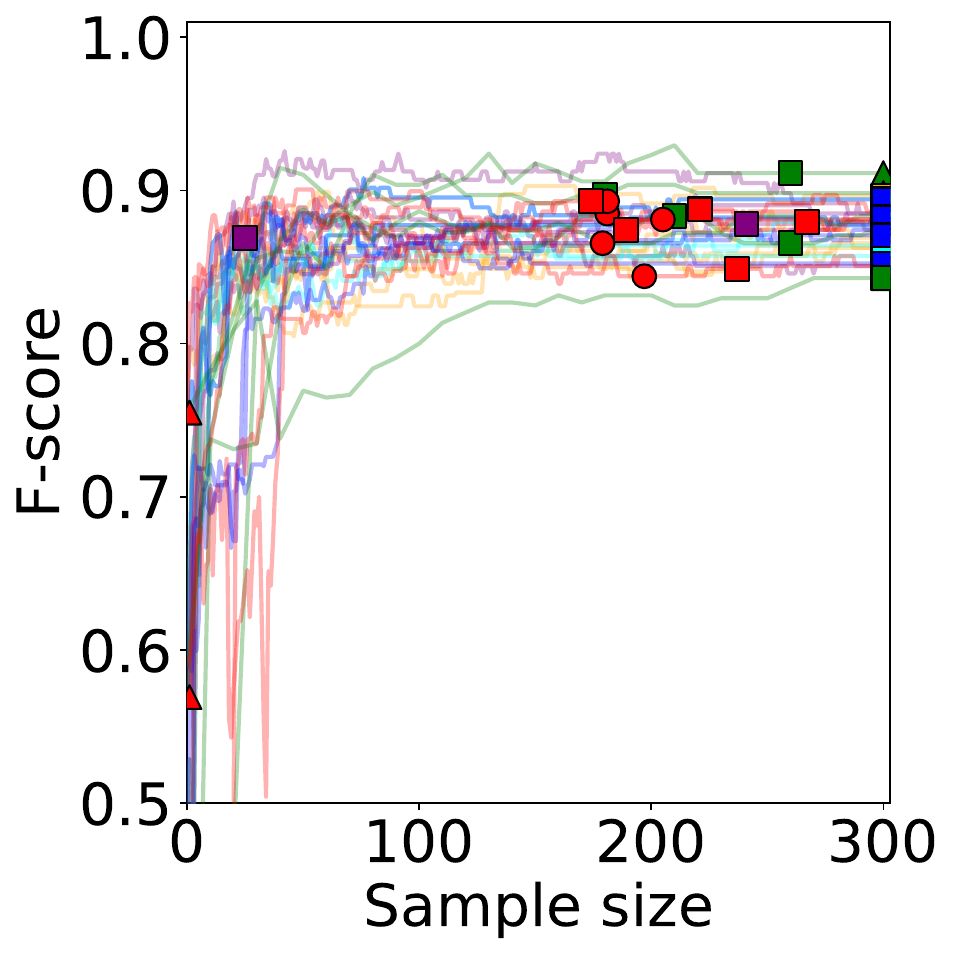}&
    \includegraphics[height=4cm,width=4cm]{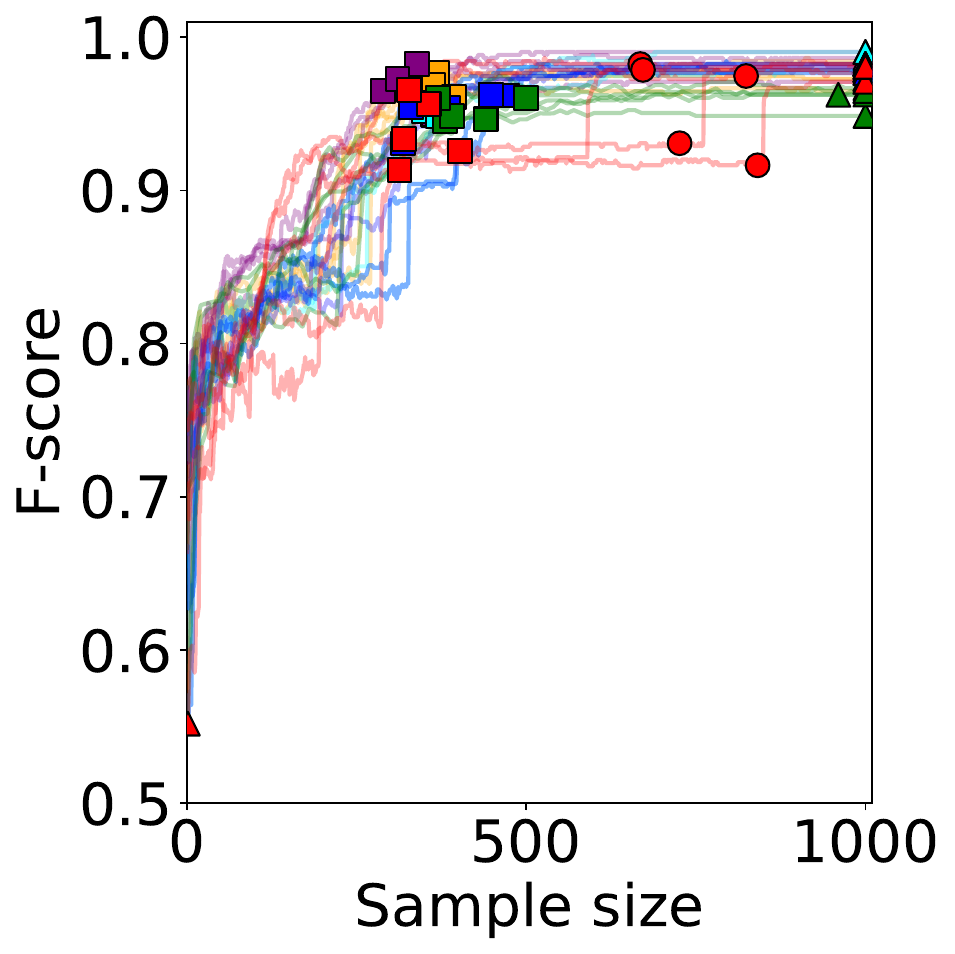}\\
    (j) {\tt{Booth}}($\sigma_{\rm noise}=30$)& (k) {\tt{Branin}}($\sigma_{\rm noise}=10$)& (l) {\tt{Holder table}}($\sigma_{\rm noise}=0.3$)
    \end{tabular}
    \caption{
    F-score and stopped time using each acquisition function with proposed (Our) and fully classified (FC) criteria. (a) -- (f) noise-free case. (g) -- (l) noise addition case. Error bars mean standard deviation. Each label has the same meaning as in Fig.~\ref{fig_result_test_func}
    }
    \label{fig_result_test_func_fs}
\end{figure}

In this subsection, we demonstrate that the proposed method can stop LSE when applied to various test functions. We also show results without adding observation noise, illustrating that the standard stopping criterion (fully classified: FC) can stop LSE in noise-free scenarios, but fails when noise is present. In this experiment, we use the {\tt{Sphere}} function, {\tt{Rosenbrock}} function, and {\tt{Cross in tray}} function, as well as the {\tt{Booth}} function, {\tt{Branin}} function, and {\tt{Holder table}} function. The thresholds for these test functions are set as follows: $\theta=20$ for the {\tt{Sphere}} function, $\theta=100$ for the {\tt{Rosenbrock}} function, $\theta=-1.5$ for the {\tt{Cross in tray}} function, $\theta=500$ for the {\tt{Booth}} function, $\theta=100$ for the {\tt{Branin}} function, and $\theta=-3$ for the {\tt{Holder table}} function. When adding observation noise, the noise levels are set to $\sigma_{\rm noise}=2$ for the {\tt{Sphere}} function, $\sigma_{\rm noise}=30$ for the {\tt{Rosenbrock}} function, $\sigma_{\rm noise}=0.01$ for the {\tt{Cross in tray}} function, $\sigma_{\rm noise}=30$ for the {\tt{Booth}} function, $\sigma_{\rm noise}=20$ for the {\tt{Branin}} function, and $\sigma_{\rm noise}=0.3$ for the {\tt{Holder table}} function. Other experimental settings remain the same as in the main text.

Figure~\ref{fig_result_test_func_fs} shows the F-scores for the iteration (number of evaluation) when different acquisition function is used for exploration. From these results, it can be seen that there are no significant differences between the acquisition functions, except for the baseline method US. Additionally, comparing the cases with and without noise, while the convergence point of the F-score changes due to noise, the efficiency of the acquisition functions is not significantly affected by the presence of noise. Regarding the timing of each stop criterion, all stopping criteria can stop LSE in the noise-free case. On the other hand, in the presence of noise, similar to the experimental results in the main text, the FC criterion fails to stop LSE, whereas both the FS criterion and the proposed criterion can stop it. While the FS criterion tends to stop LSE aggressively, there are cases, such as in Fig.~\ref{fig_result_test_func_fs}(k), where it fails to do so. In contrast, the proposed criterion stops LSE more conservatively but is able to stop LSE even in cases like Fig.~\ref{fig_result_test_func_fs}(k).

\subsection{Applicability of the proposed stopping criterion to other acquisition functions}
\label{app:otherAF}

The proposed stopping criterion terminates LSE when the classification probability $p^{\rm max}(\vx)=\max\{\Pr (\vx \in H_{\theta} ),\Pr (\vx \in L_{\theta} )\}$ of all candidate points becomes large enough, or the probability that the true function is included in $\cE$ is large enough. Since this stopping criterion does not assume anything about the acquisition function, it can be, in principle, applied to other acquisition functions as well. In this subsection, we verify whether LSE can be stopped using the proposed stopping criterion with other acquisition functions. The test functions, noise levels, and other experimental settings are the same as in the Appendix~\ref{app:testfunc}.

Figure~\ref{fig_result_test_func_other_acq_func} shows the stopping times for each acquisition function. The results indicate that, in the absence of noise, the proposed stopping criterion can stop LSE at the timing when the F-score converges, similar to the standard stopping criterion (FC). However, when observation noise is added, most acquisition functions, except the proposed one, fail to stop LSE. This is because the proposed stopping criterion is conservative. Specifically, in this experiment, the threshold for the stopping criterion is set to $\delta=0.99$, meaning that the average classification probability for each candidate point needs to be $1 - (1 - \delta)/|\cX| = 0.999975$ for LSE to stop. 

In contrast, typical LSE classifies a candidate point to the upper level set when $\mu_N(\vx) - \beta\sigma_N(\vx) > \theta$, which is equivalent to classifying the point when $\Pr (\vx \in H_{\theta} )= \Phi((\mu_N(\vx) - \theta)/\sigma_N(\vx)) > \Phi(\beta)$. Similarly, for the lower level set, a point is classified when $\Pr (\vx \in L_{\theta} ) = \Phi((\theta - \mu_N(\vx))/\sigma_N(\vx)) > \Phi(\beta)$. In this experiment, $\beta = 1.96$, so a candidate point is classified into the upper or lower level set when its classification probability exceeds $\Phi(\beta) \approx 0.975$. 

Typically, once a candidate point is classified into the upper or lower level set, it no longer needs to be explored, making it difficult to meet the stopping criterion requirements, even with repeated exploration. Therefore, it is hard for acquisition functions other than the proposed method, which prioritizes exploring the candidate points with the lowest classification probability, to stop LSE using the proposed stopping criterion.

\begin{figure}[th!]
    \centering
    \begin{tabular}{ccc}
    \includegraphics[height=4cm,width=4cm]{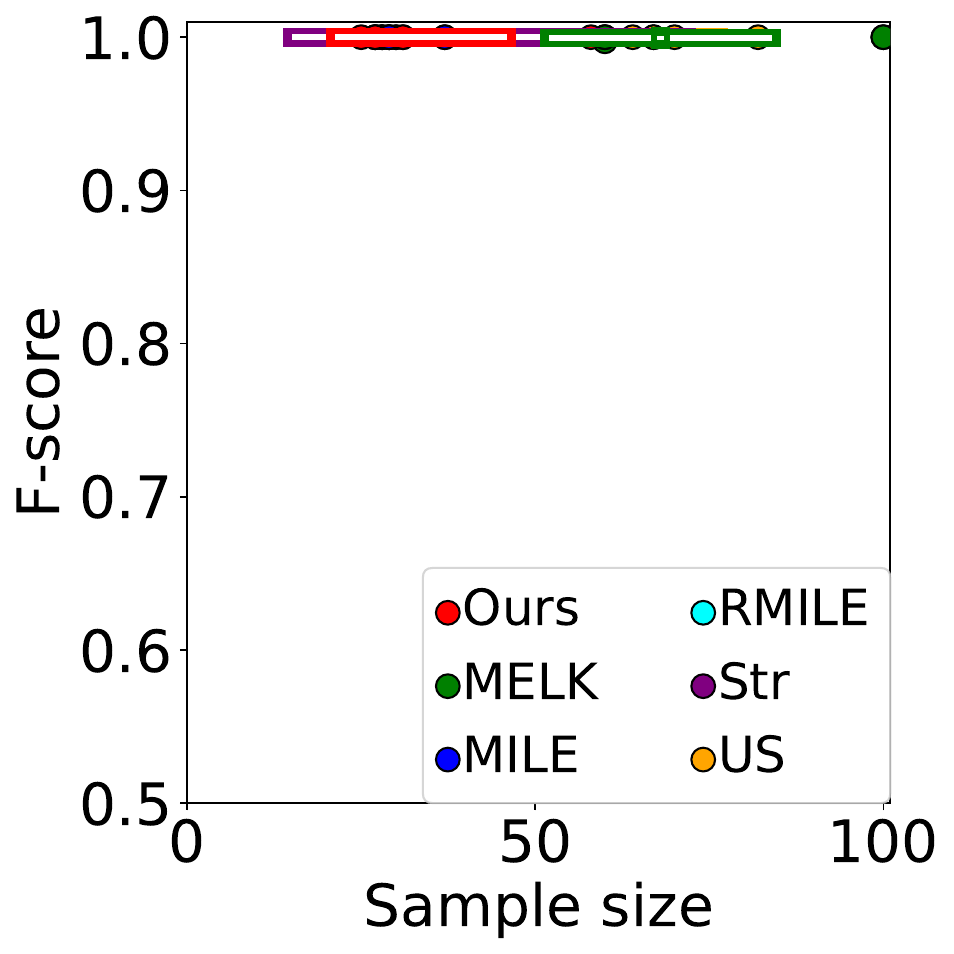}&    
    \includegraphics[height=4cm,width=4cm]{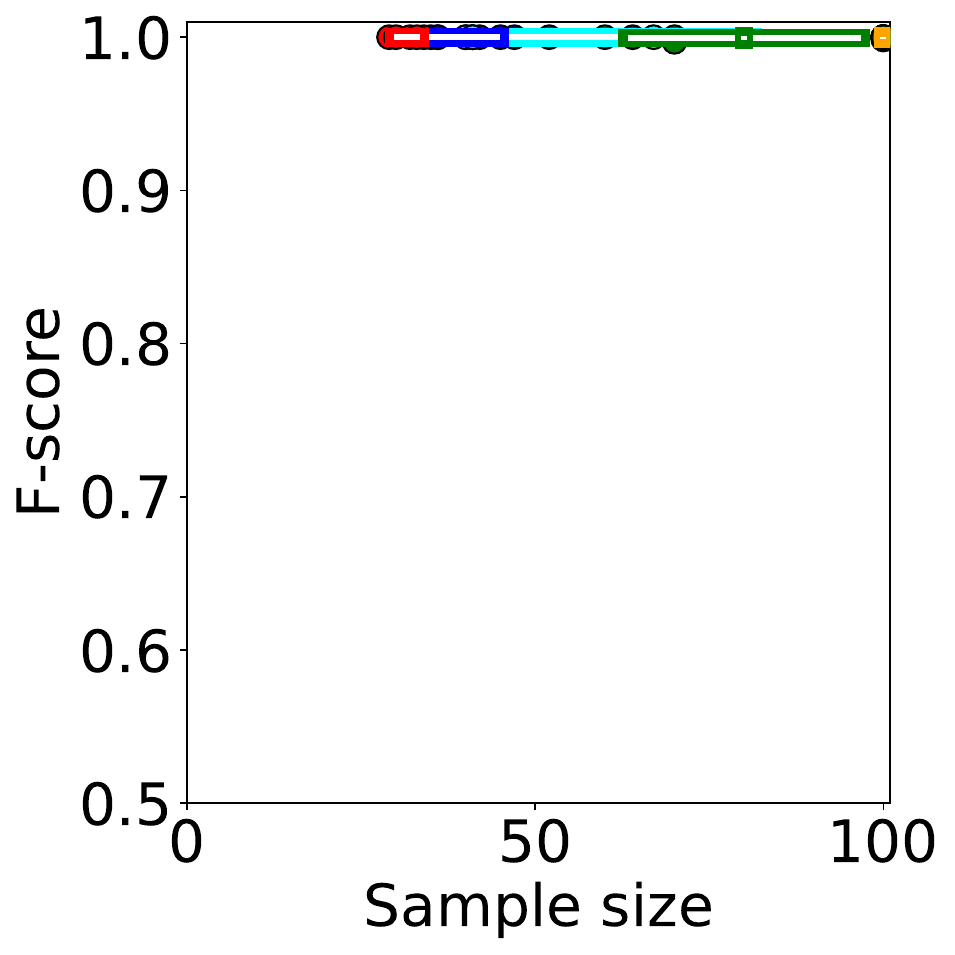}&
    \includegraphics[height=4cm,width=4cm]{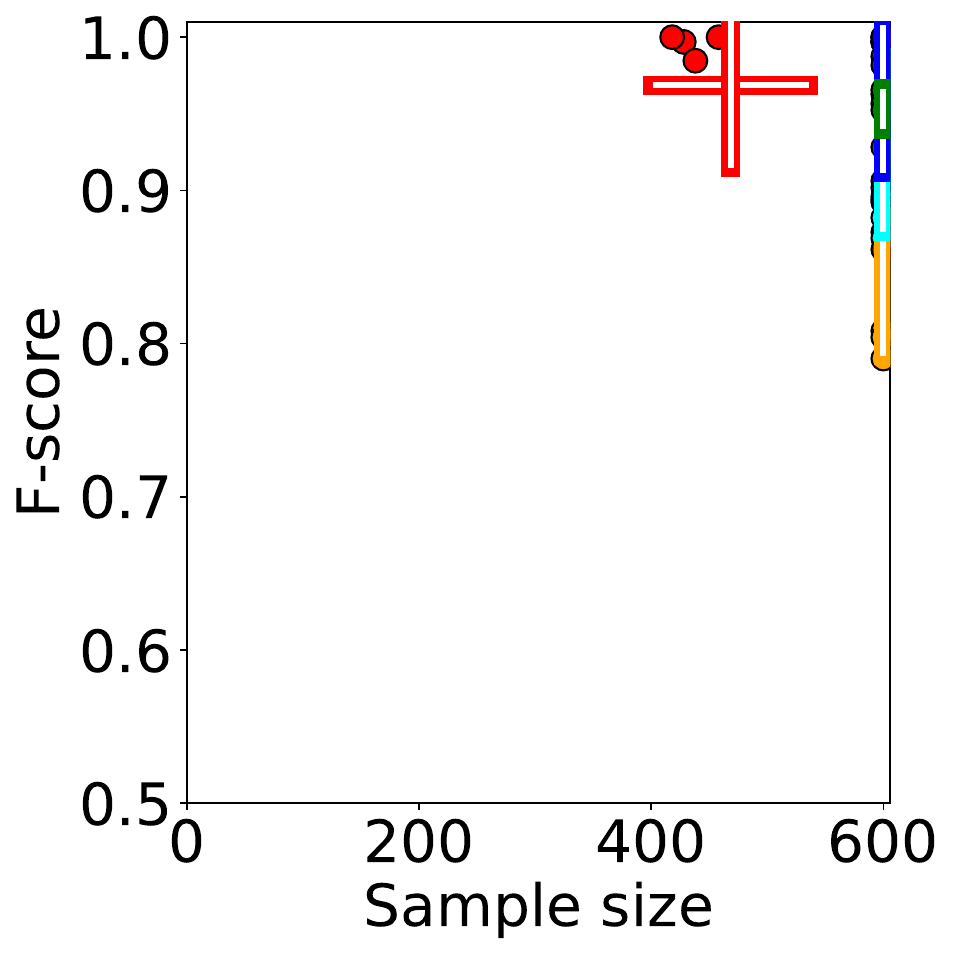}\\
     (a) {\tt{Sphere}}($\sigma_{\rm noise}=0$)& (b) {\tt{Rosenbrock}}($\sigma_{\rm noise}=0$)& (c) {\tt{Cross in tray}}($\sigma_{\rm noise}=0$)  \\
   \includegraphics[height=4cm,width=4cm]{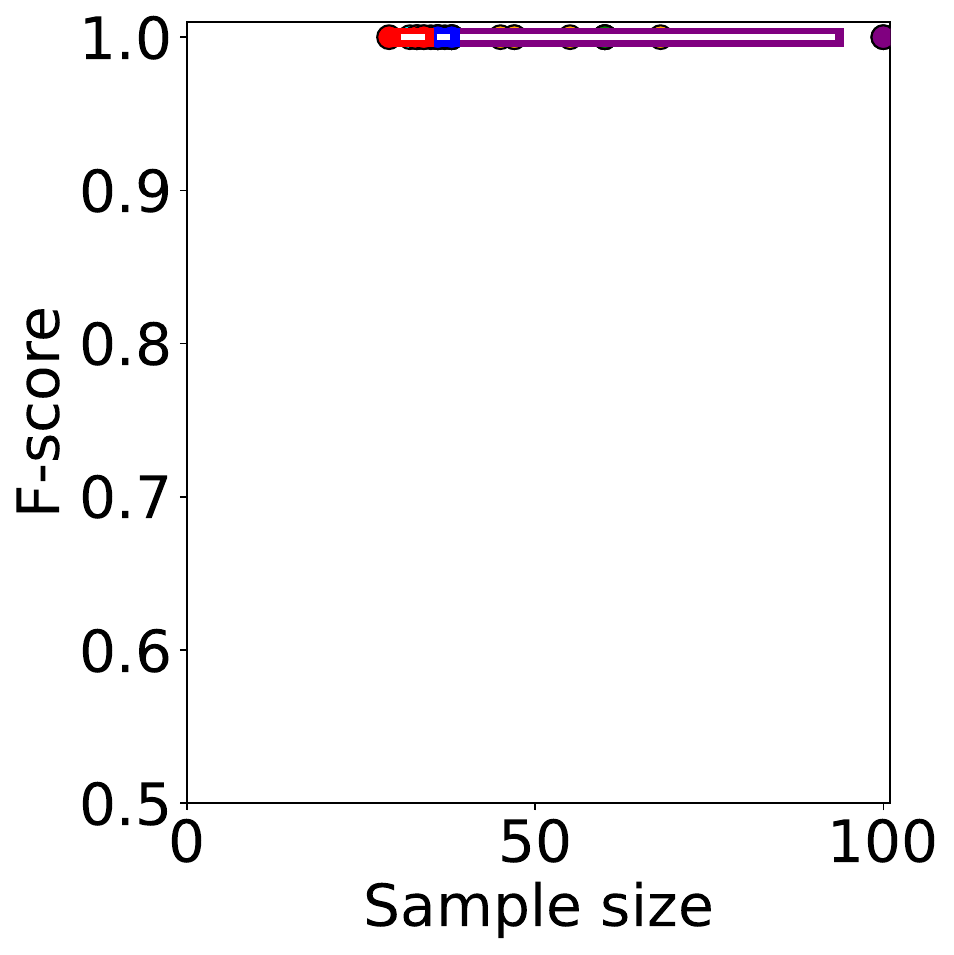}&    
    \includegraphics[height=4cm,width=4cm]{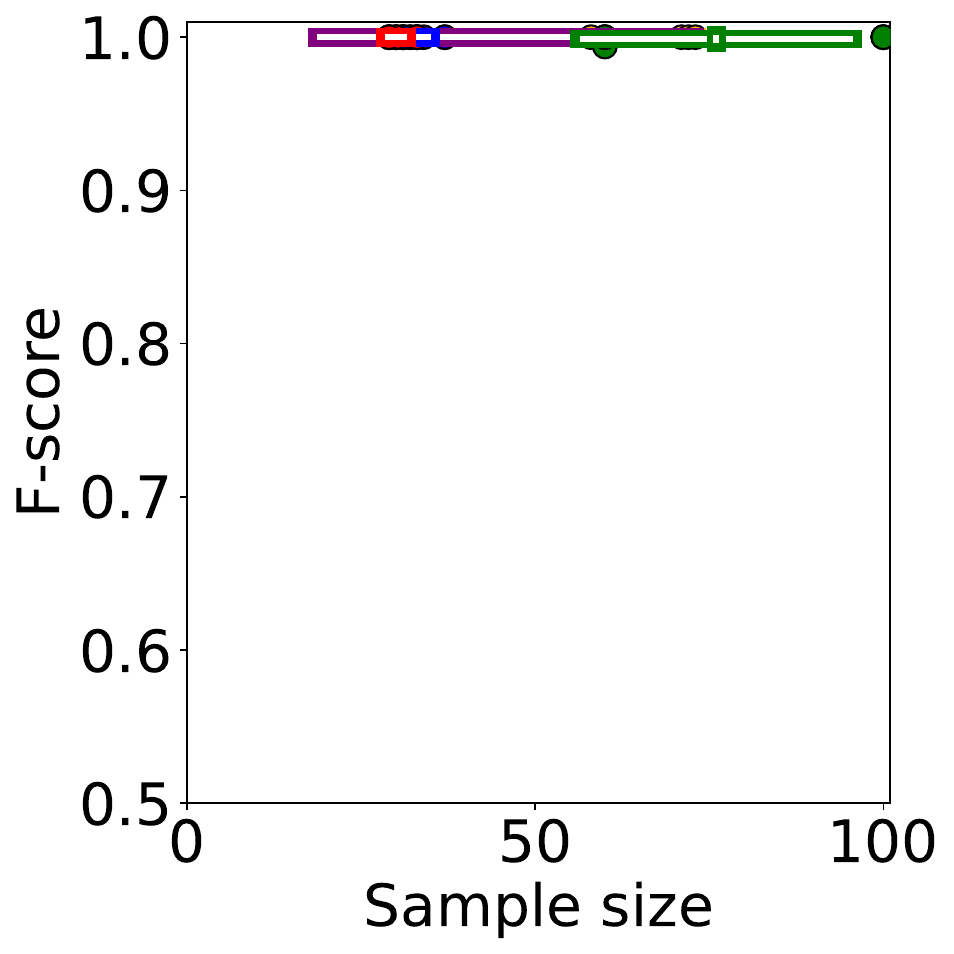}&
    \includegraphics[height=4cm,width=4cm]{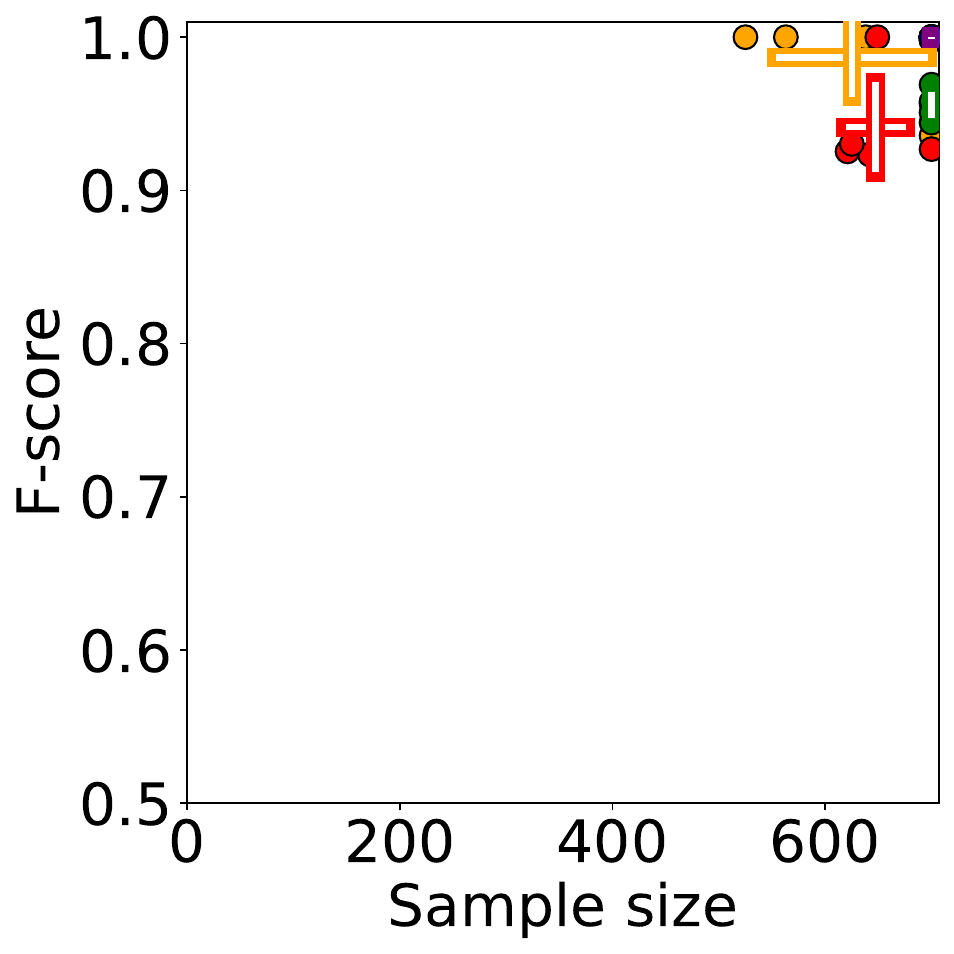}\\
    (d) {\tt{Booth}}($\sigma_{\rm noise}=0$)& (e) {\tt{Branin}}($\sigma_{\rm noise}=0$)& (f) {\tt{Holder table}}($\sigma_{\rm noise}=0$)  \\
    \includegraphics[height=4cm,width=4cm]{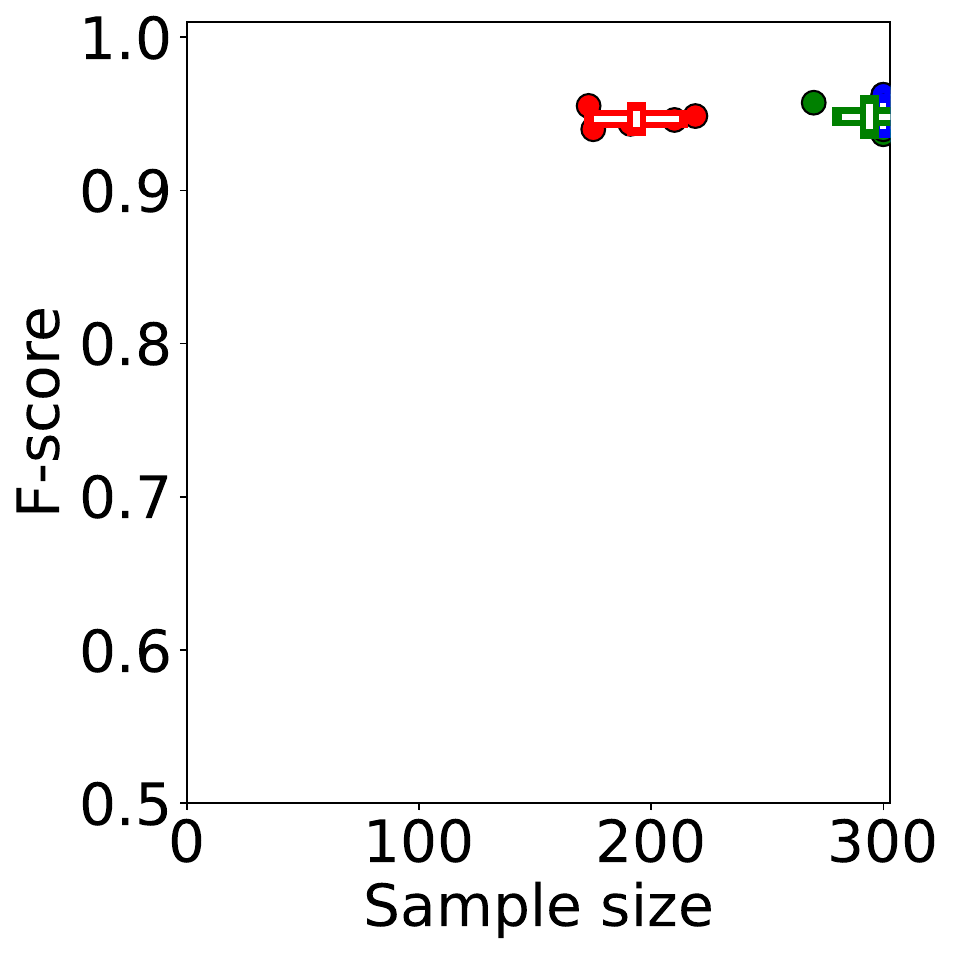}&    
    \includegraphics[height=4cm,width=4cm]{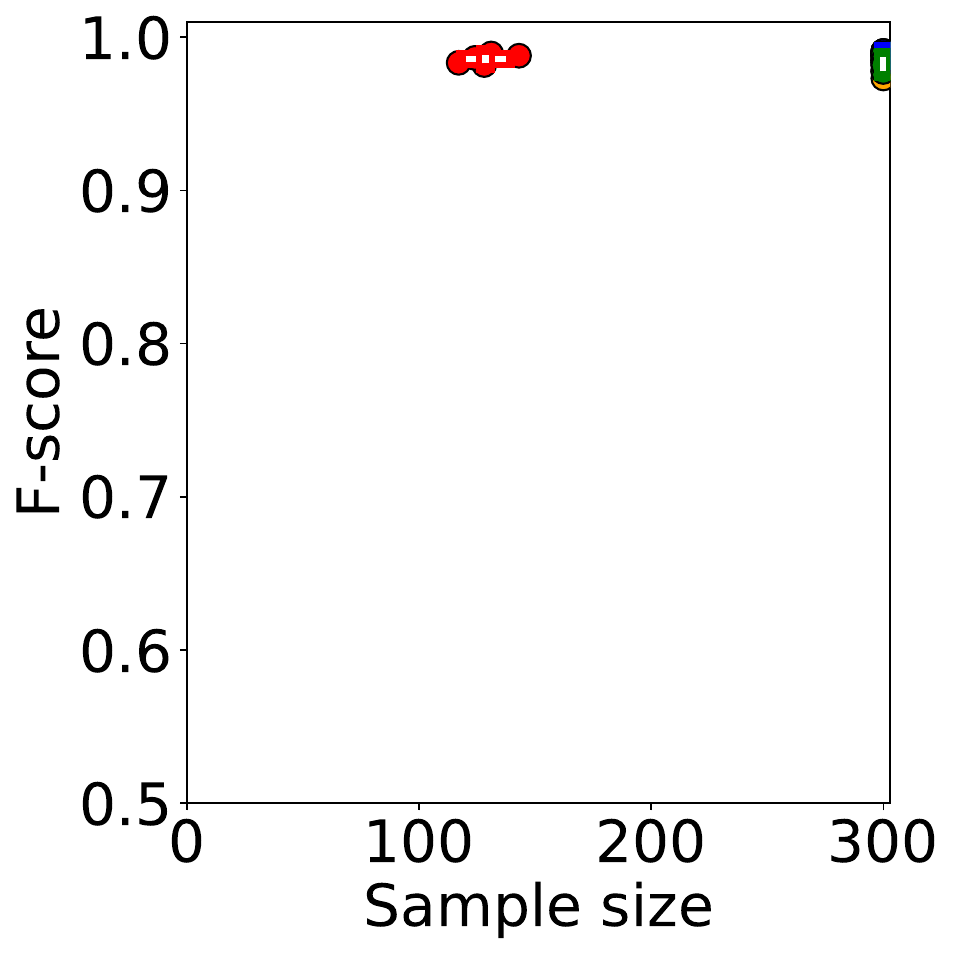}&
    \includegraphics[height=4cm,width=4cm]{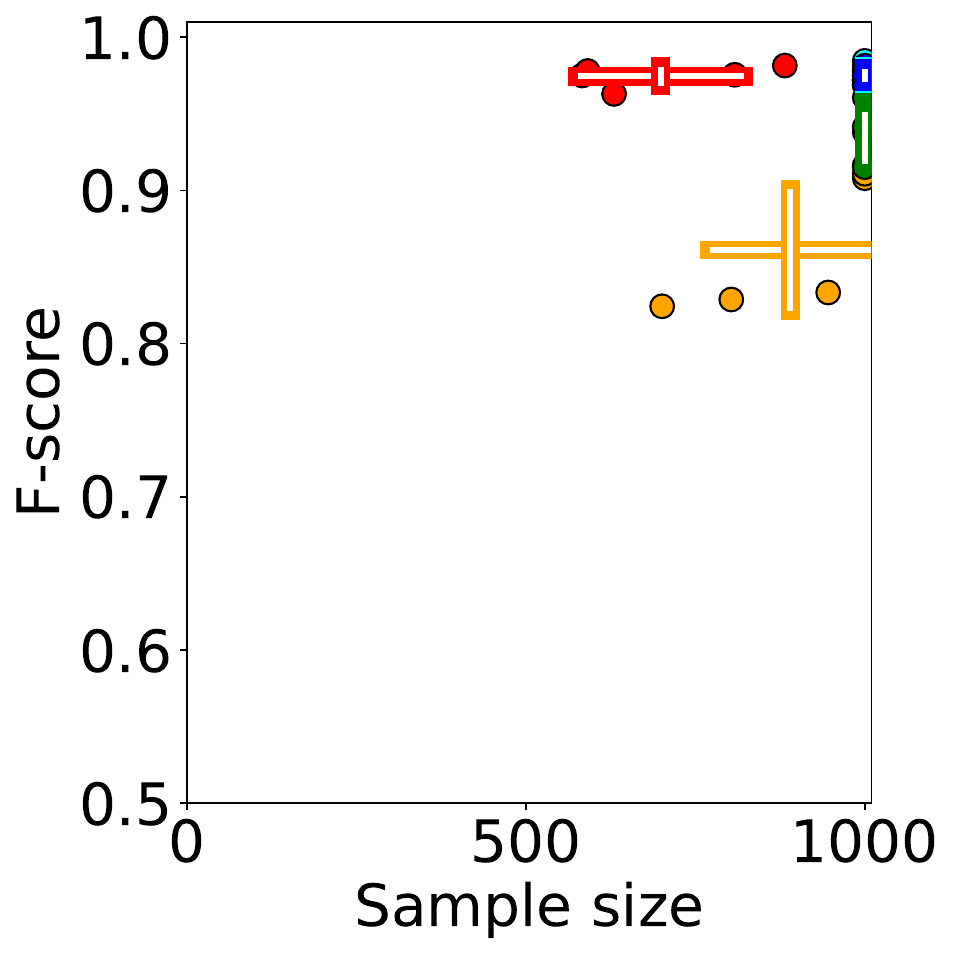}\\
    (g) {\tt{Sphere}}($\sigma_{\rm noise}=0.5$)& (h) {\tt{Rosenbrock}}($\sigma_{\rm noise}=30$)& (i) {\tt{Cross in tray}}($\sigma_{\rm noise}=0.01$)  \\
    \includegraphics[height=4cm,width=4cm]{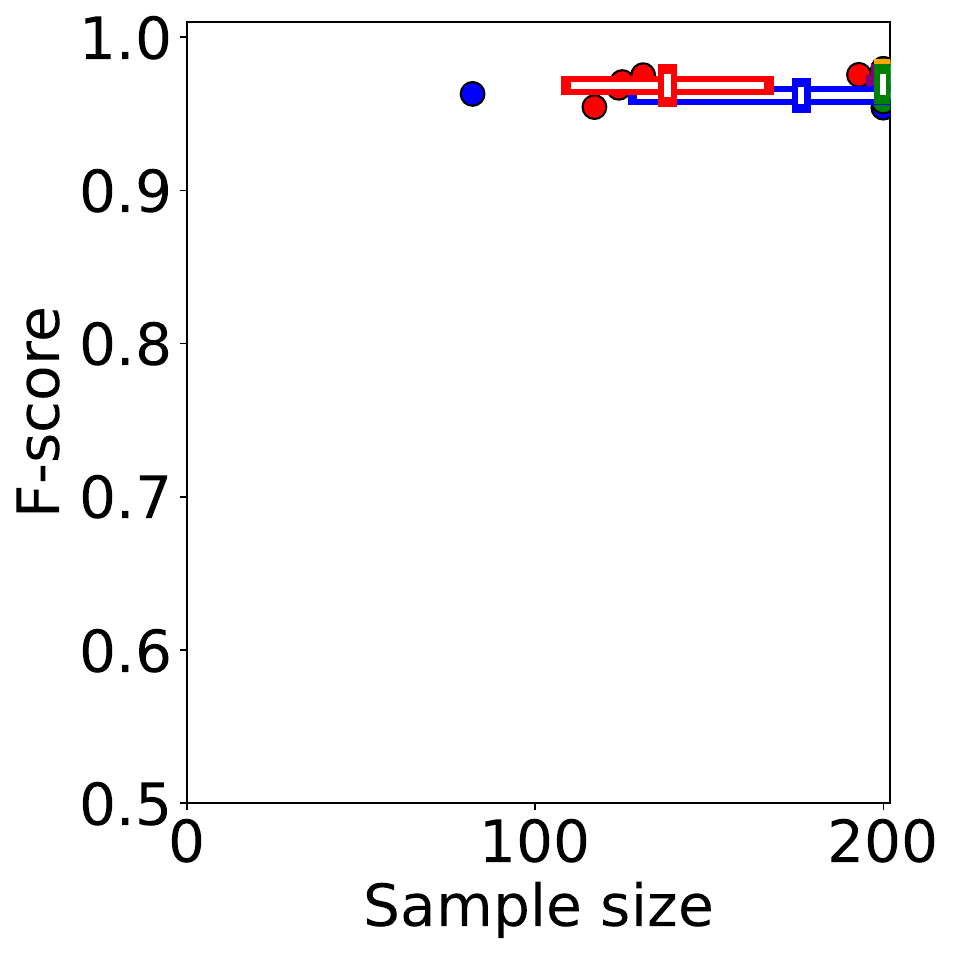}&    
    \includegraphics[height=4cm,width=4cm]{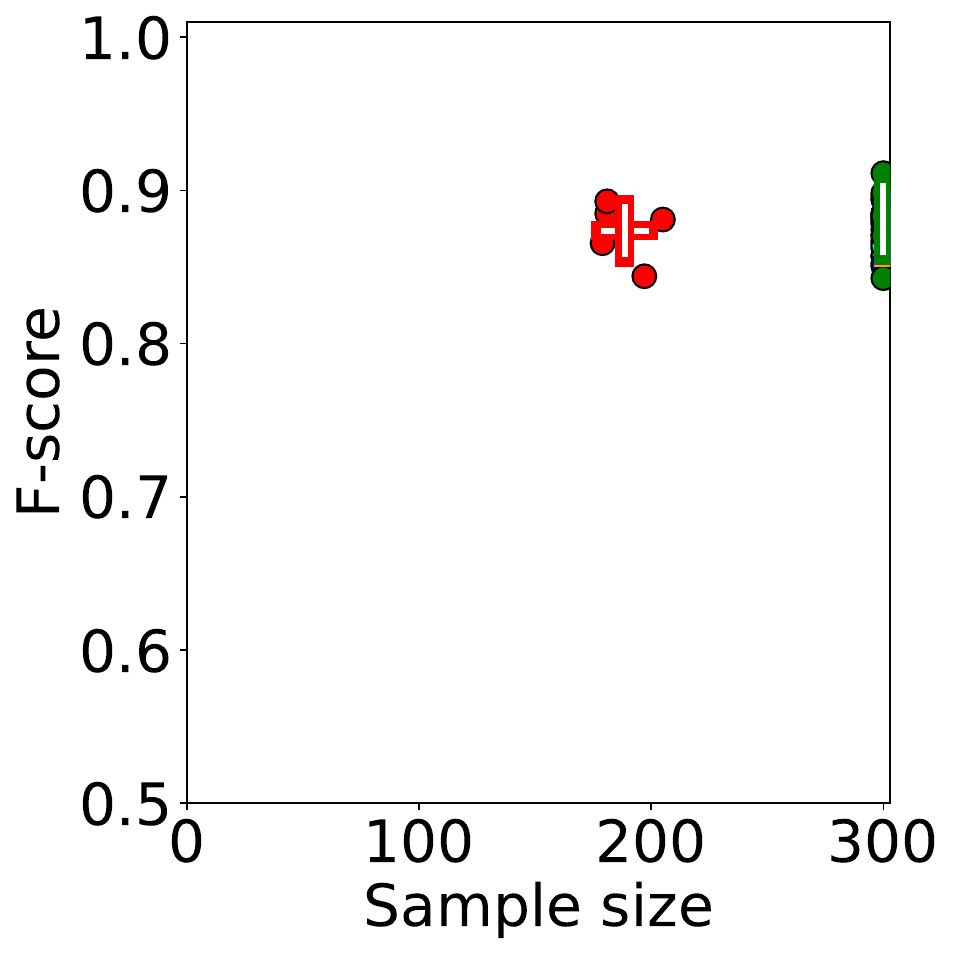}&
    \includegraphics[height=4cm,width=4cm]{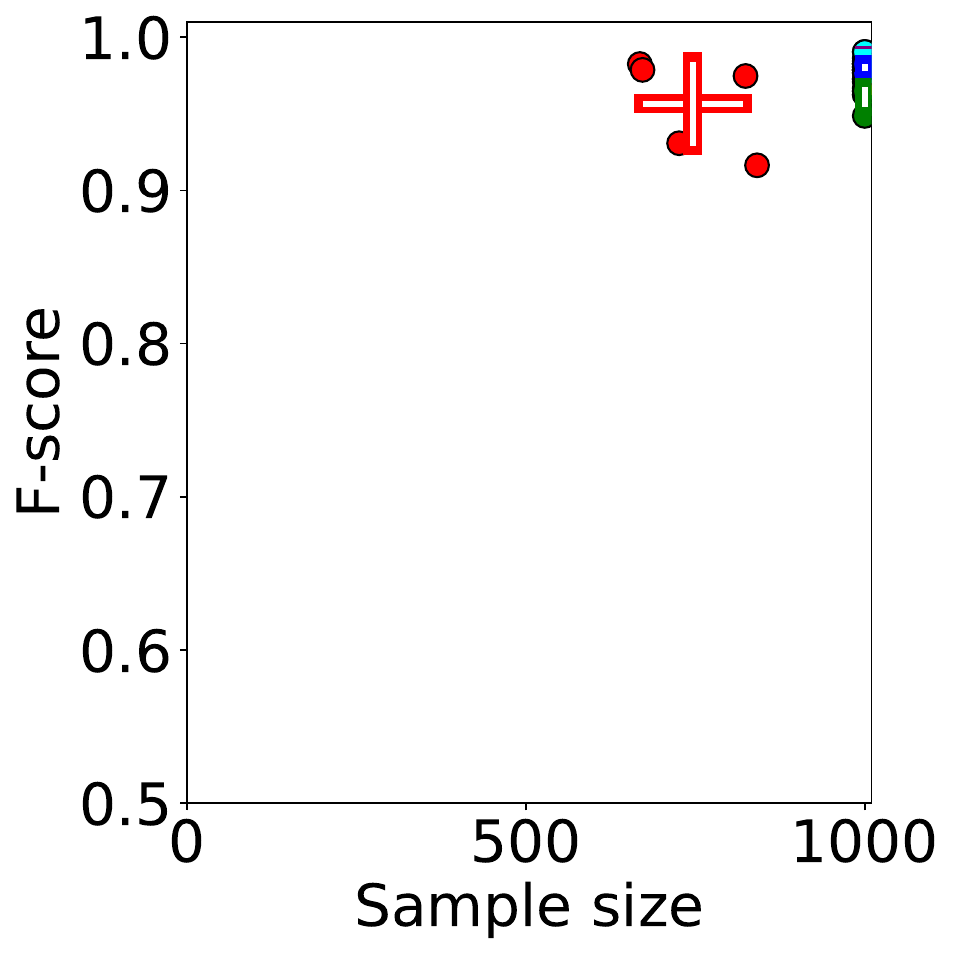}\\
    (j) {\tt{Booth}}($\sigma_{\rm noise}=30$)& (k) {\tt{Branin}}($\sigma_{\rm noise}=10$)& (l) {\tt{Holder table}}($\sigma_{\rm noise}=0.3$)
    \end{tabular}
    \caption{
    Stopped time using each acquisition function with the proposed stopping criterion. (a) -- (f) noise-free case. (g) -- (l) noise addition case. Error bars mean standard deviation. Error bars mean standard deviation.
    }
    \label{fig_result_test_func_other_acq_func}
\end{figure}

\begin{figure}[th!]
    \centering
    \begin{tabular}{cc}
    \includegraphics[height=4cm,width=4cm]{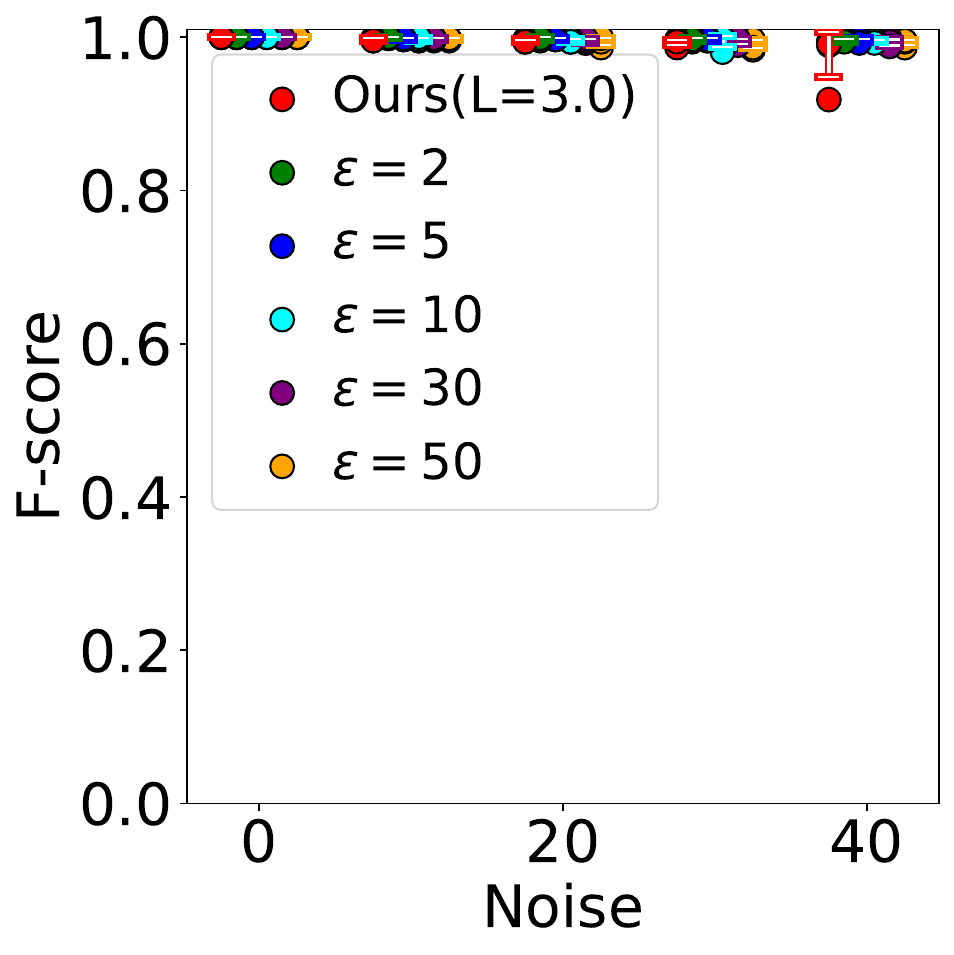}&    
    \includegraphics[height=4cm,width=4cm]{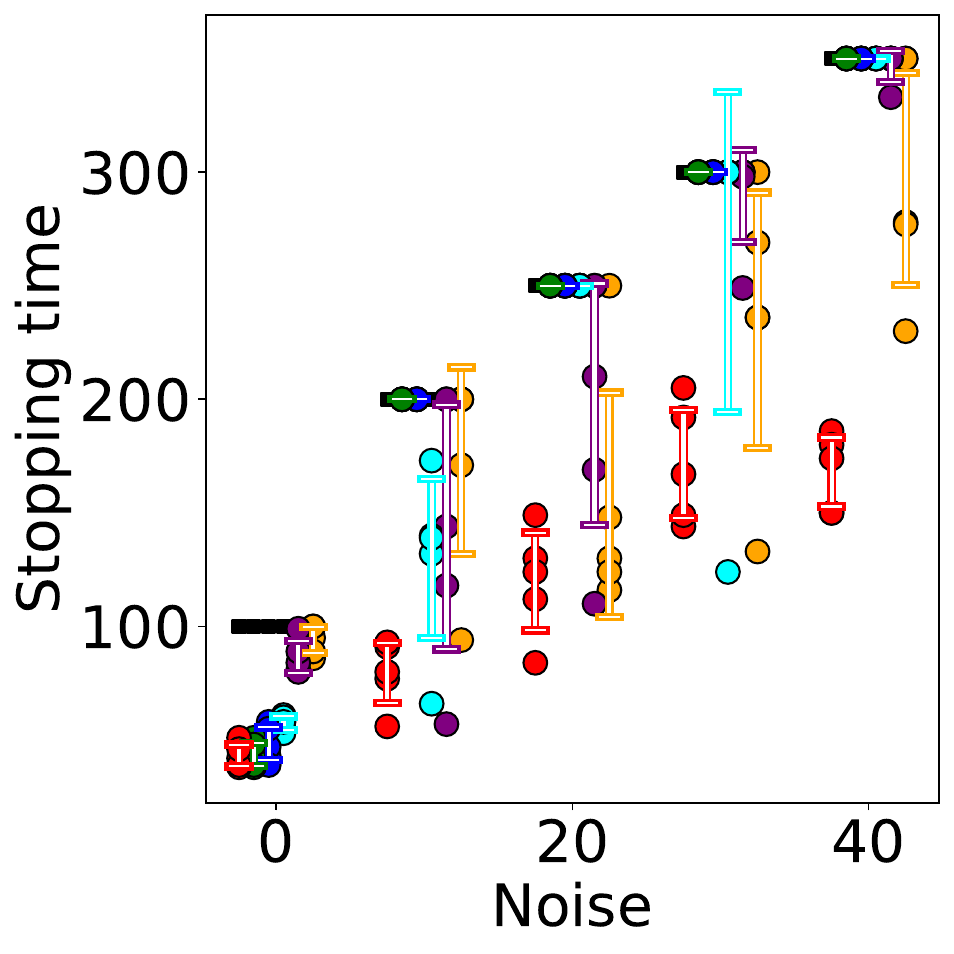}\\
     (a) F-score of {\tt{Rosenbrock}}& (b) Stopping time of {\tt{Rosenbrock}}  \\
   \includegraphics[height=4cm,width=4cm]{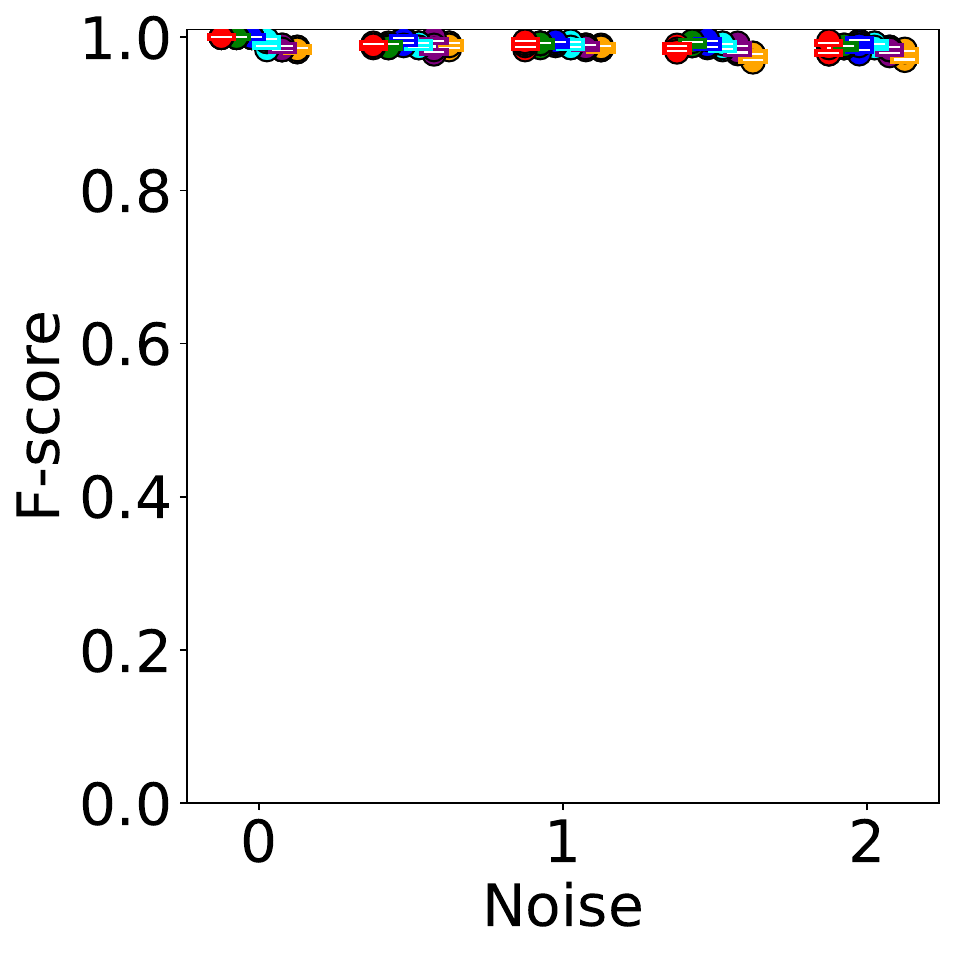}&    
    \includegraphics[height=4cm,width=4cm]{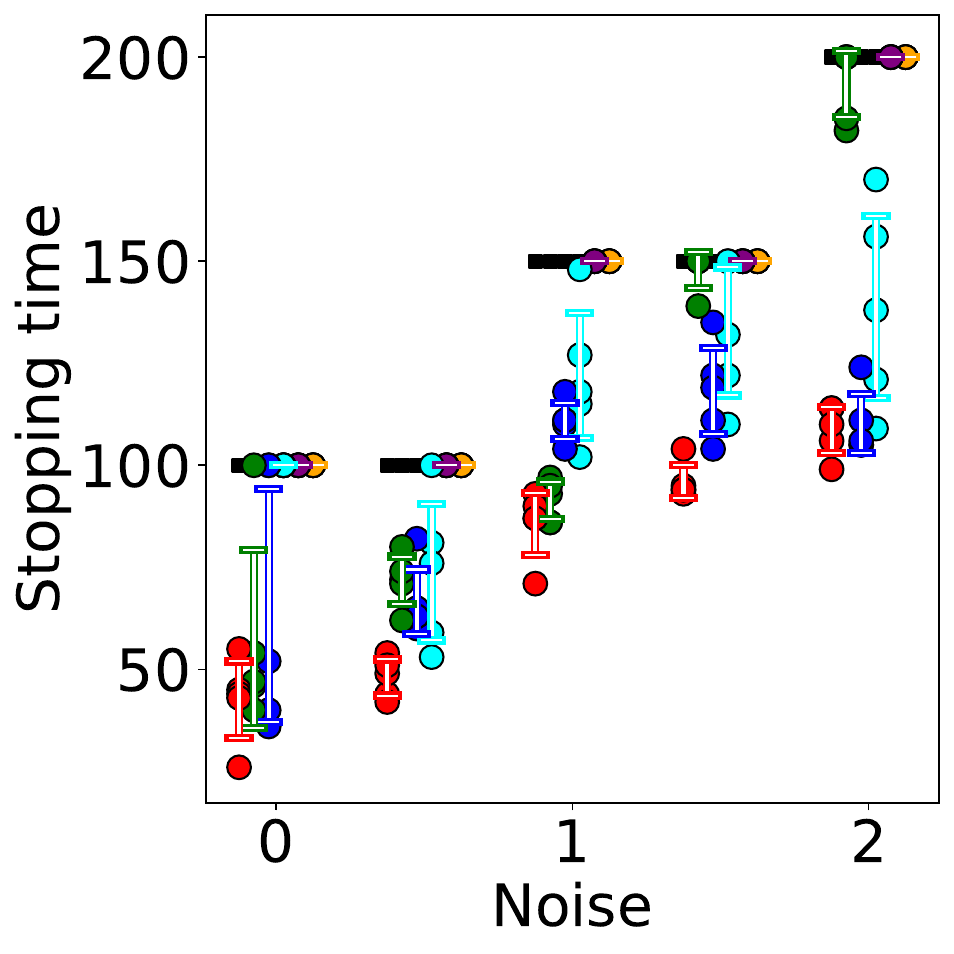}\\
    (c) F-score of {\tt{Sphere}}& (d) Stopping time of {\tt{Sphere}}
    \end{tabular}
    \caption{
    The impact of noise variance and the range of function in the case of the proposed method and the method for setting $\epsilon$ directly. The black line of (b) and (d) means budget. To make it easier to understand, a jitter is added to the x-coordinate to prevent overlapping of the drawings.
    }
    \label{fig_result_robustness_of_L}
\end{figure}

\subsection{The robustness of our margin-setting method}
\label{app:eps}

In this section, we demonstrate that the method for determining $\epsilon$ proposed in the Appendix~\ref{seq:determine_margin} is less dependent on noise variance and the range of the objective function than directly setting $\epsilon$. To illustrate this, we compare the stopping timings using five different $\epsilon$ values $\{2, 5, 10, 30, 50\}$ with the proposed method. The test functions used for this evaluation are the {\tt{rosenbrock}} and {\tt{sphere}} functions, and the experiments assess the impact of varying Gaussian noise variance in five patterns for each function. For the {\tt{rosenbrock}} function, the noise variances are set to $\{0^2, 10^2, 20^2, 30^2, 40^2\}$, and for the {\tt{sphere}} function, the variances are set to $\{0^2, 0.5^2, 1.0^2, 1.5^2, 2.0^2\}$. In the experiments, the proposed method for determining $\epsilon$ sets $L=3$, and all other parameters follow the settings used for the experiment of the test functions.

The experimental results for the {\tt{rosenbrock}} function are shown in Figs.~\ref{fig_result_robustness_of_L}(a) and (b). From Fig.~\ref{fig_result_robustness_of_L}(a), we can observe that, regardless of the method for determining $\epsilon$, no reduction in F-scores occurs, and the LSE does not stop until the F-scores are close enough to their limit, even when noise is varied. However, from Fig.~\ref{fig_result_robustness_of_L}(b), we see that when $\epsilon=2$, $\epsilon=5$, and $\epsilon=10$, the LSE can stop before using the entire budget in the noise-free case, but as the noise increases, it fails to stop even when the entire budget is used. Furthermore, for $\epsilon=30$ and $\epsilon=50$, the LSE stops earlier than the proposed method when the noise variance is up to $20^2$ and $30^2$, respectively, but as the noise increases further, the LSE fails to stop even after using the entire budget. Next, in the results for the {\tt{sphere}} function shown in Figs.~\ref{fig_result_robustness_of_L}(c) and (d), we find that, unlike in the {\tt{rosenbrock}} function, where $\epsilon=30$ and $\epsilon=50$ successfully stopped the LSE, they stop prematurely before the F-scores converge in the {\tt{sphere}} function. In contrast, $\epsilon=5$, which failed to stop the LSE in the {\tt{rosenbrock}} function, successfully stops the LSE in the {\tt{sphere}} function. This indicates that the appropriate $\epsilon$ must be chosen depending on the target function and the level of noise when directly setting $\epsilon$.

On the other hand, the proposed method can stop the LSE with relatively consistent accuracy across different functions and noise levels, despite using the same parameter settings. Thus, it has been demonstrated that the method for determining $\epsilon$ proposed in Appendix~\ref{seq:determine_margin} is less dependent on data noise or the range of the objective function than directly setting $\epsilon$. 

In conclusion, if an acceptable classification error $\epsilon$ is given, we use that value; otherwise, we can use $L$ as an easy-to-use knob.

\begin{figure}[th!]
    \centering
    \begin{tabular}{ccc}
    \includegraphics[height=4cm,width=4cm]{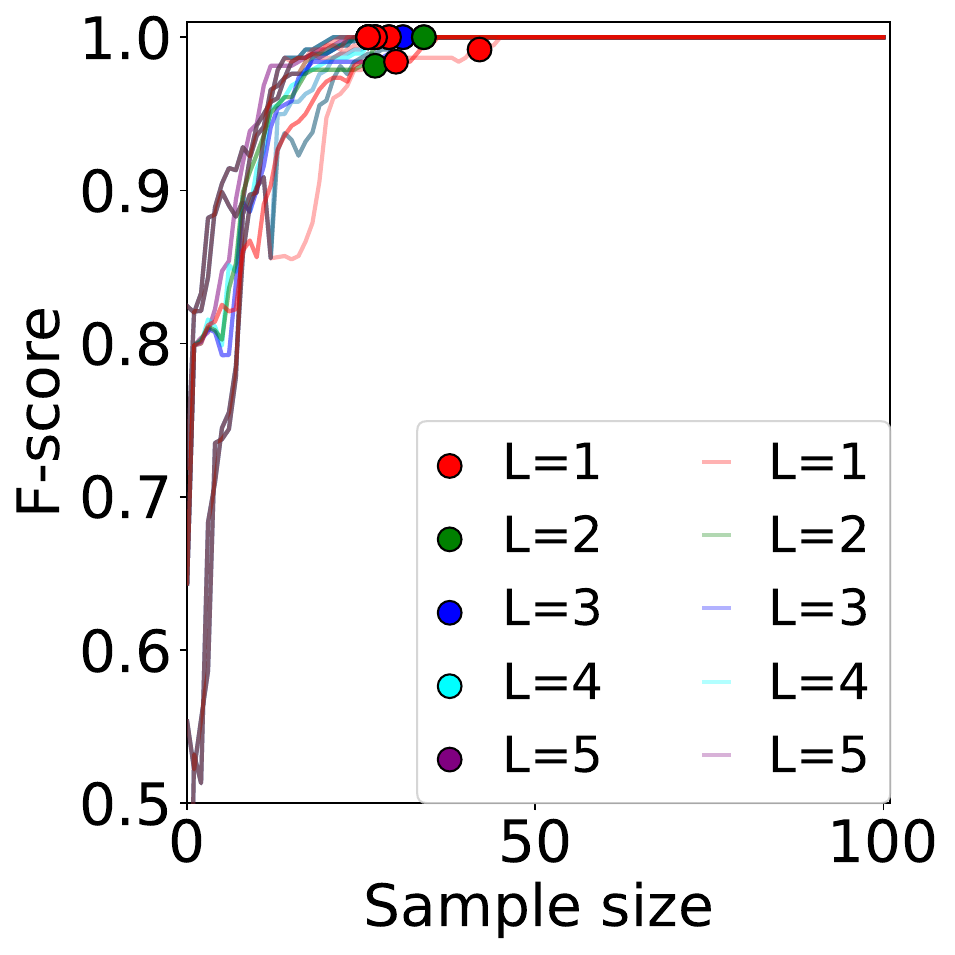}&    
    \includegraphics[height=4cm,width=4cm]{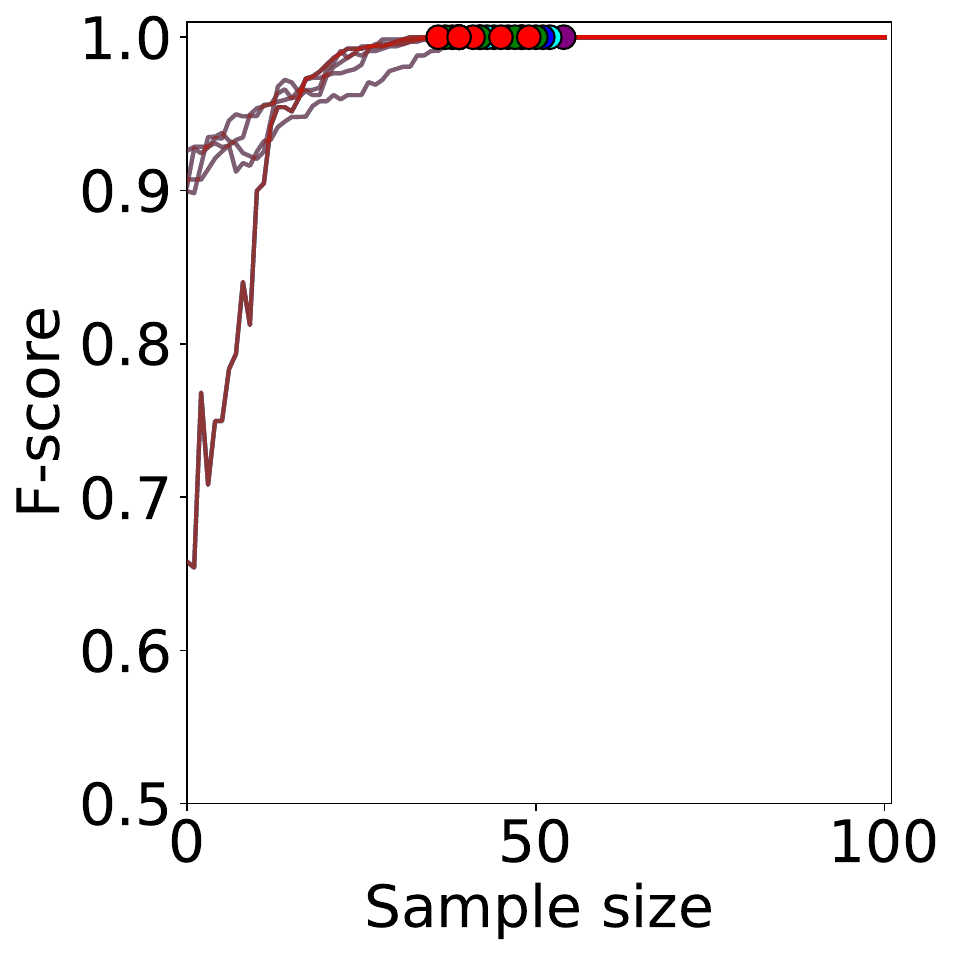}&
    \includegraphics[height=4cm,width=4cm]{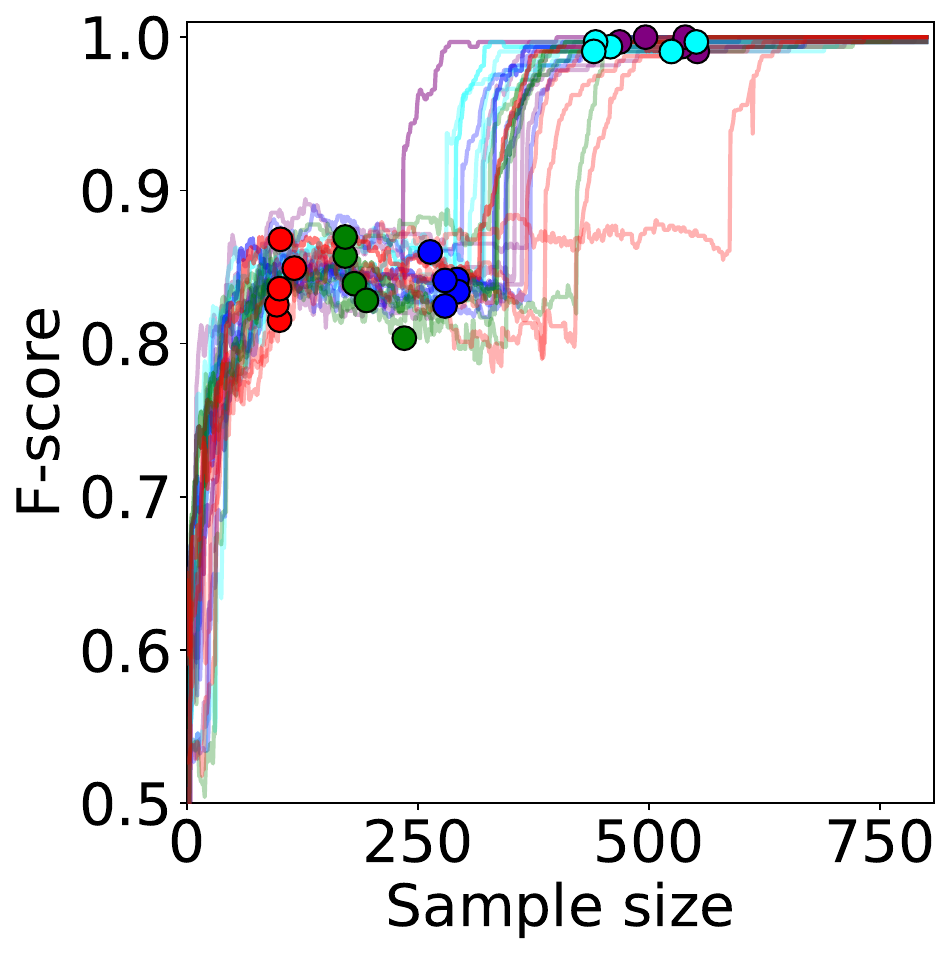}\\
     (a) {\tt{Sphere}}($\sigma_{\rm noise}=0$)& (b) {\tt{Rosenbrock}}($\sigma_{\rm noise}=0$)& (c) {\tt{Cross in tray}}($\sigma_{\rm noise}=0$)  \\
   \includegraphics[height=4cm,width=4cm]{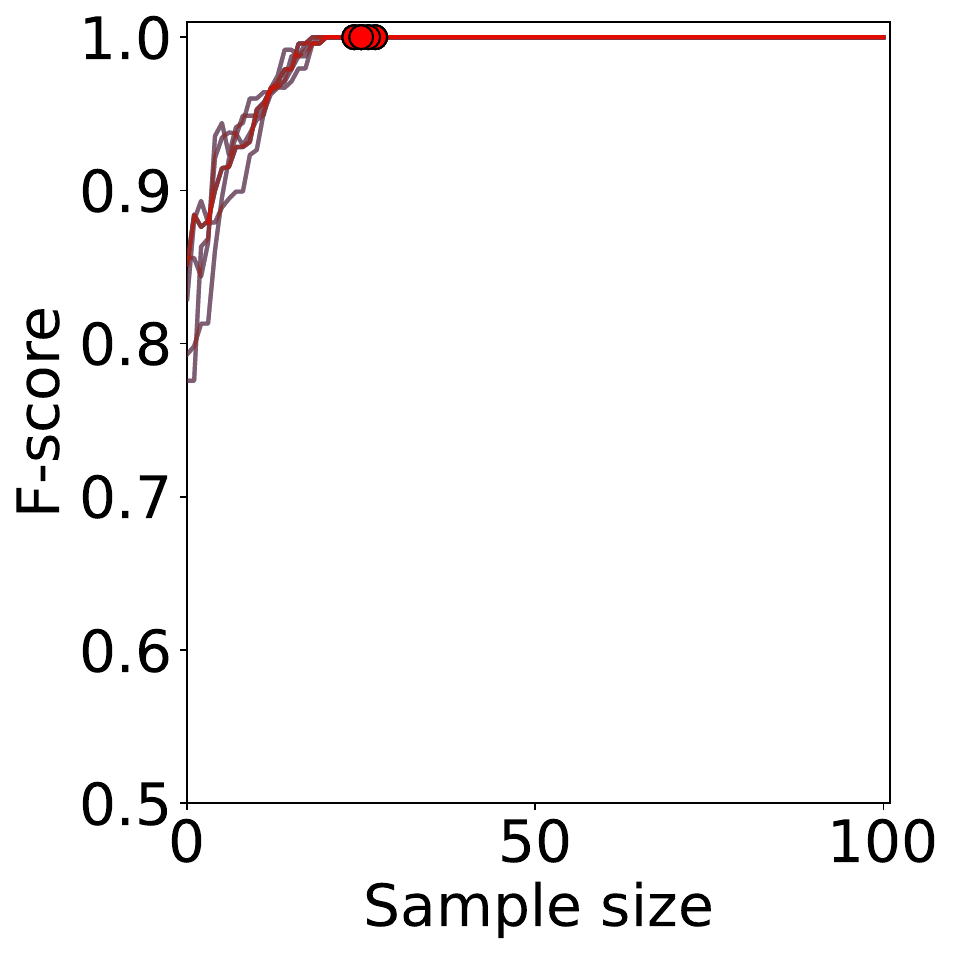}&    
    \includegraphics[height=4cm,width=4cm]{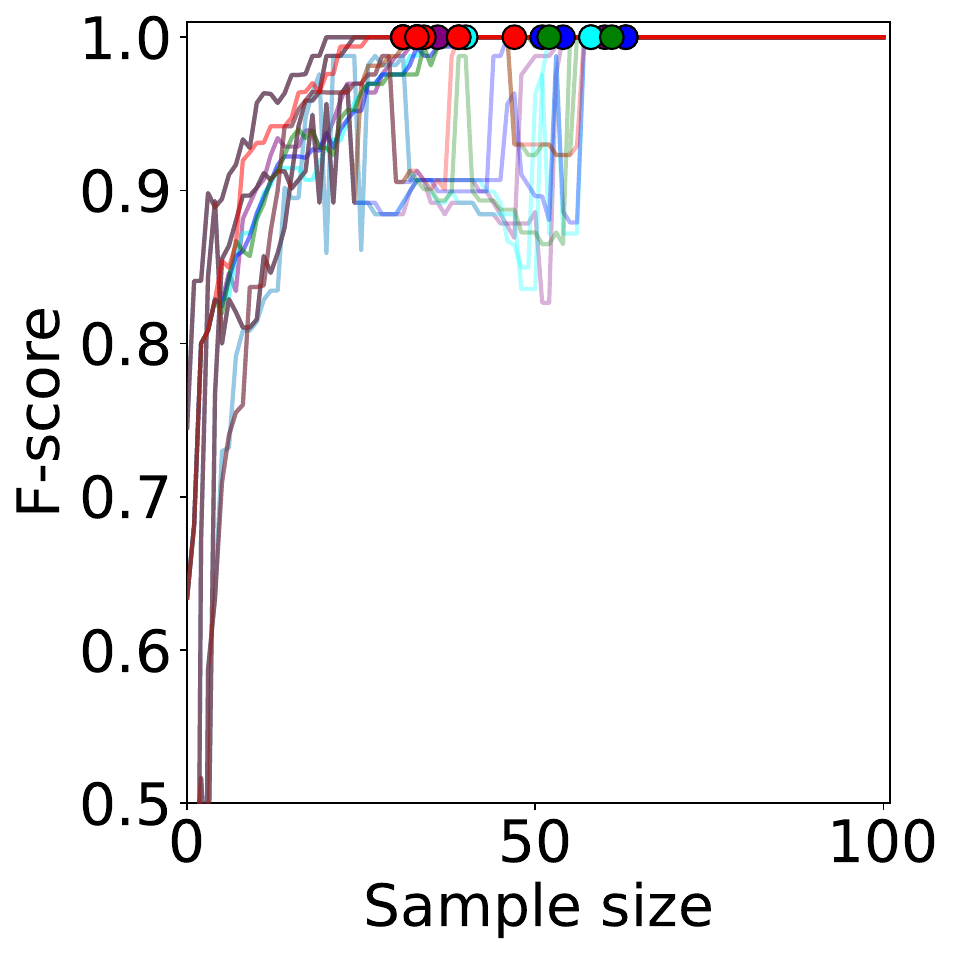}&
    \includegraphics[height=4cm,width=4cm]{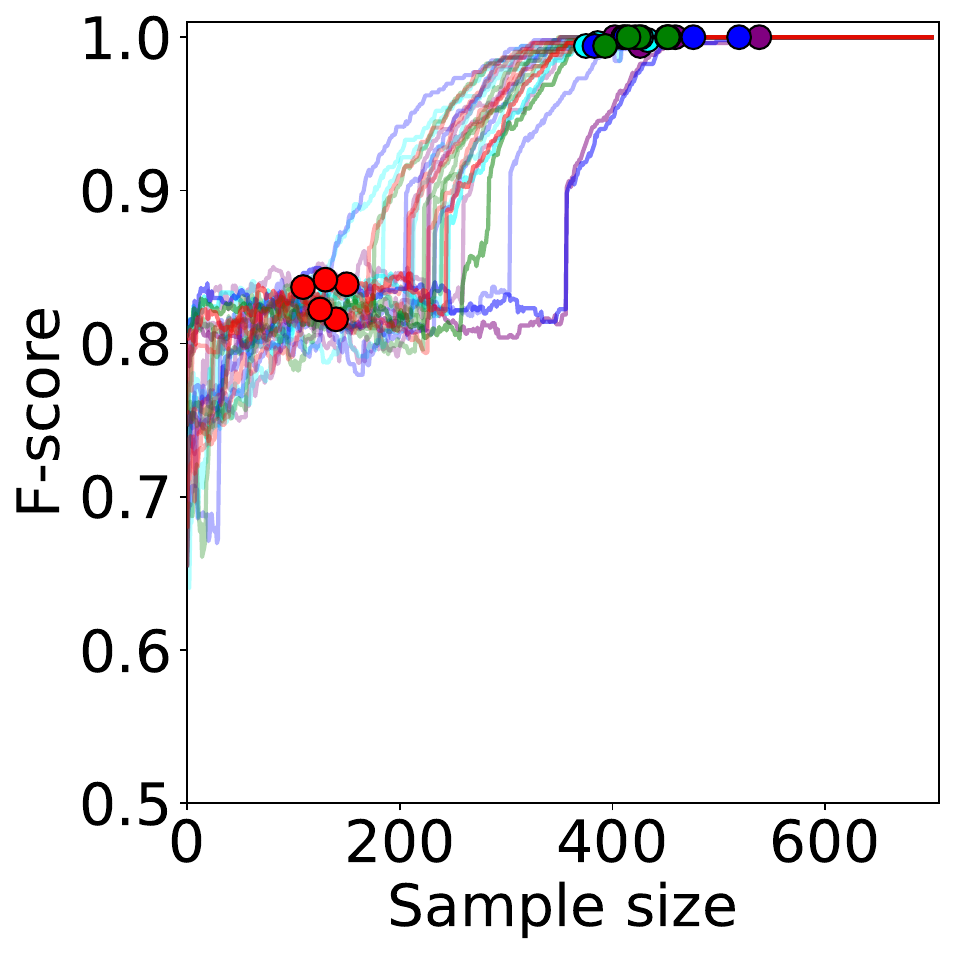}\\
    (d) {\tt{Booth}}($\sigma_{\rm noise}=0$)& (e) {\tt{Branin}}($\sigma_{\rm noise}=0$)& (f) {\tt{Holder table}}($\sigma_{\rm noise}=0$)  \\
    \includegraphics[height=4cm,width=4cm]{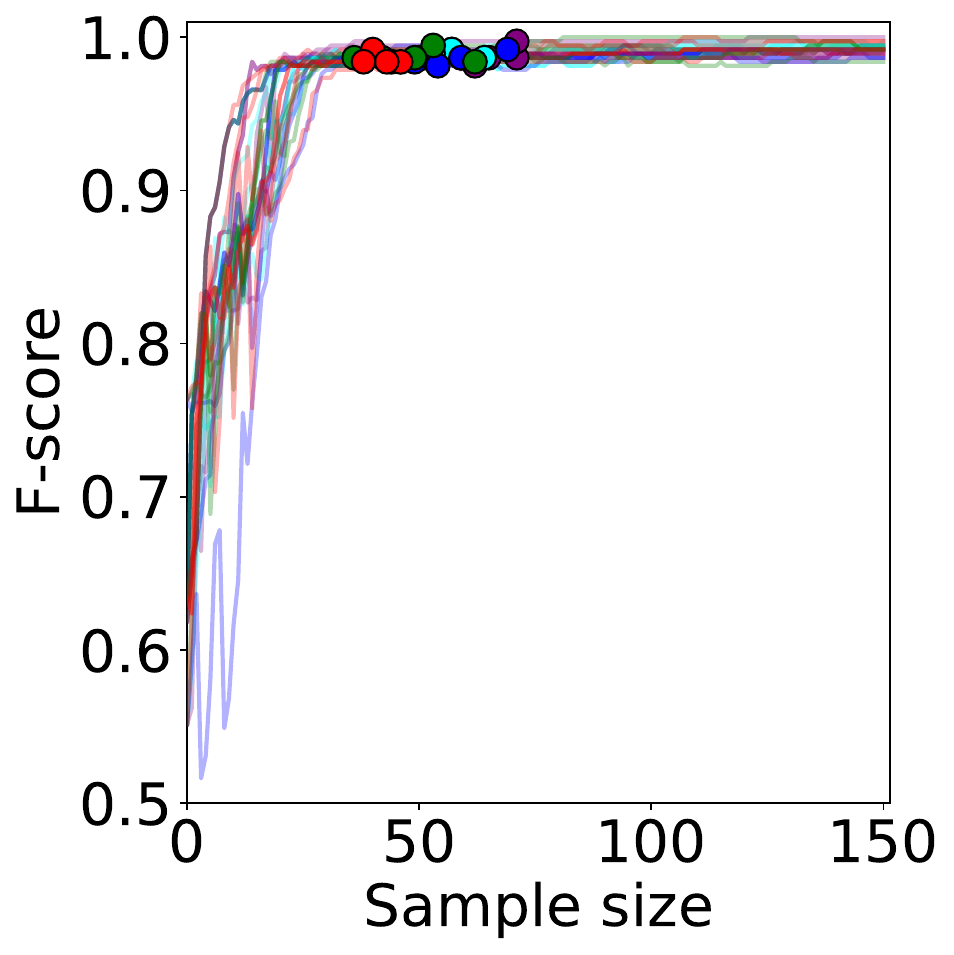}&    
    \includegraphics[height=4cm,width=4cm]{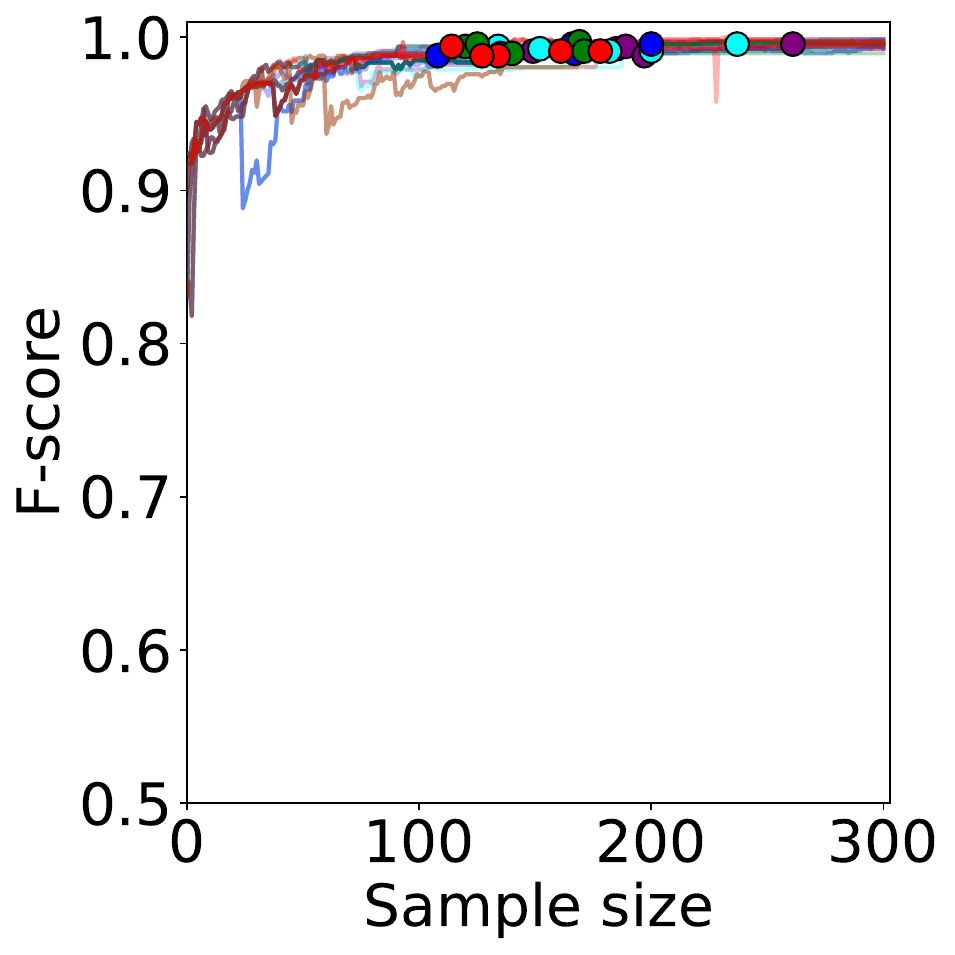}&
    \includegraphics[height=4cm,width=4cm]{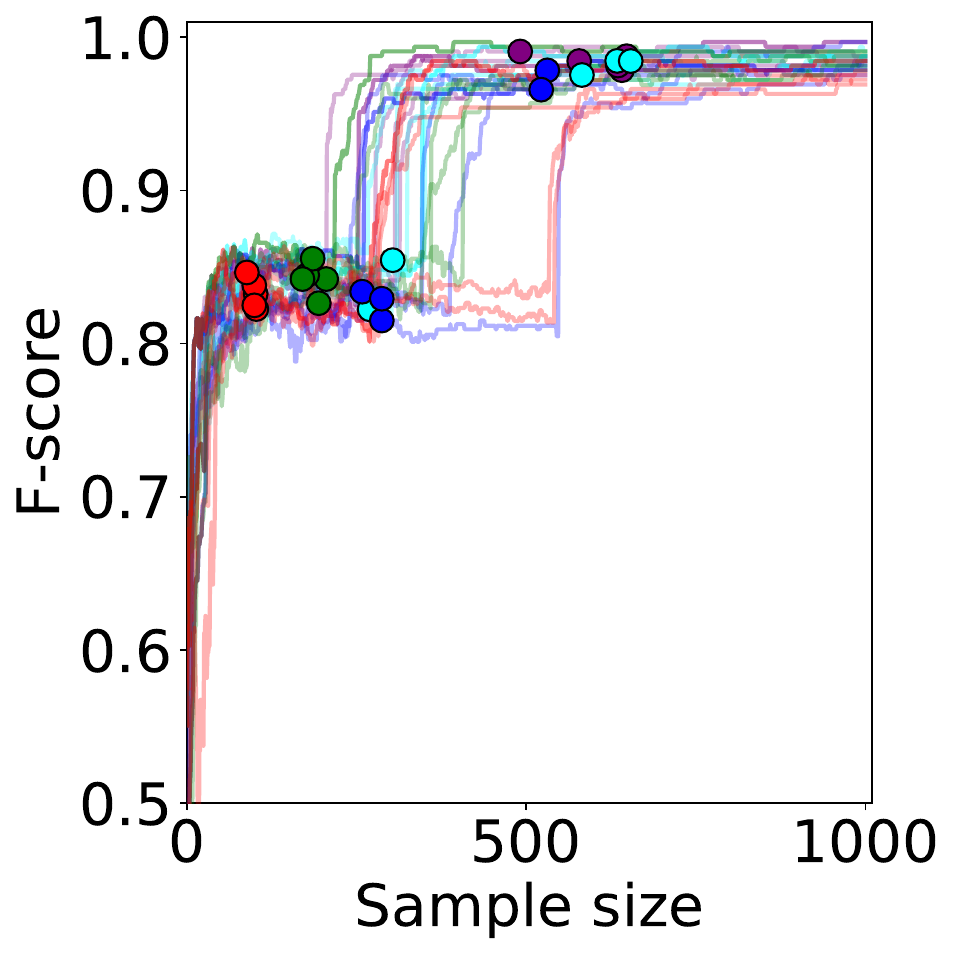}\\
    (g) {\tt{Sphere}}($\sigma_{\rm noise}=0.5$)& (h) {\tt{Rosenbrock}}($\sigma_{\rm noise}=30$)& (i) {\tt{Cross in tray}}($\sigma_{\rm noise}=0.01$)  \\
    \includegraphics[height=4cm,width=4cm]{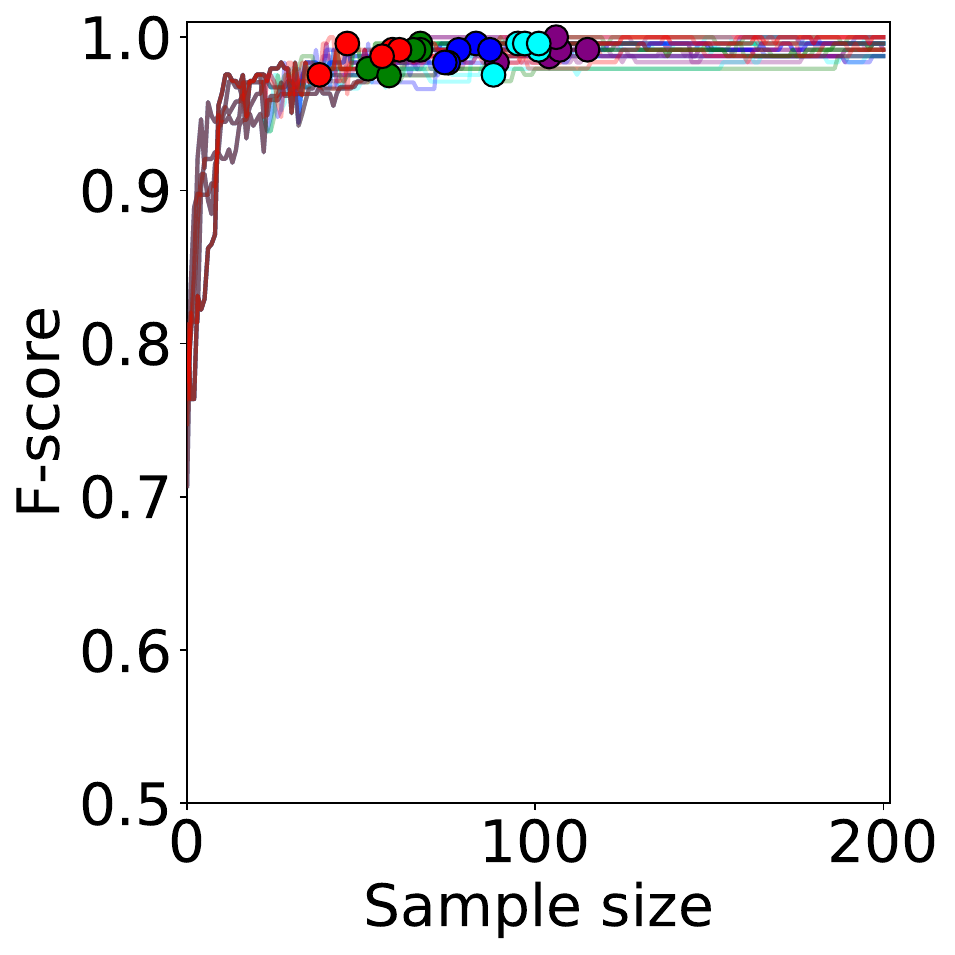}&    
    \includegraphics[height=4cm,width=4cm]{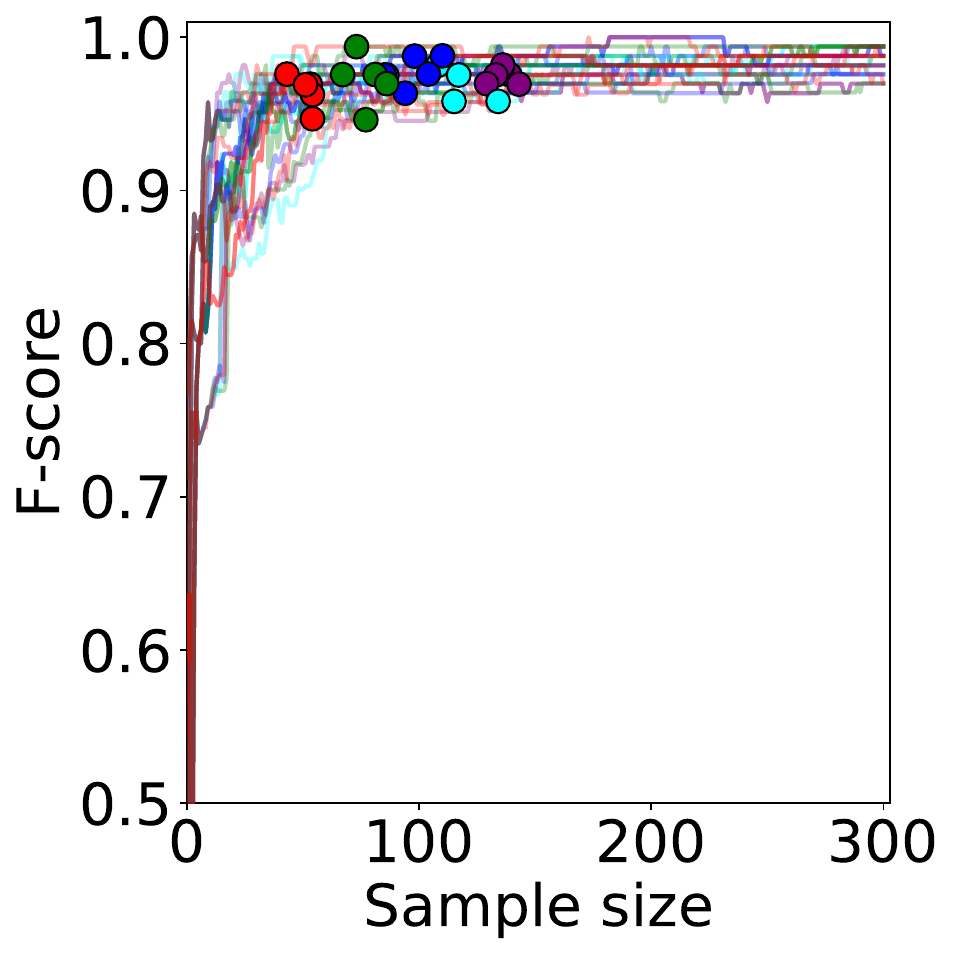}&
    \includegraphics[height=4cm,width=4cm]{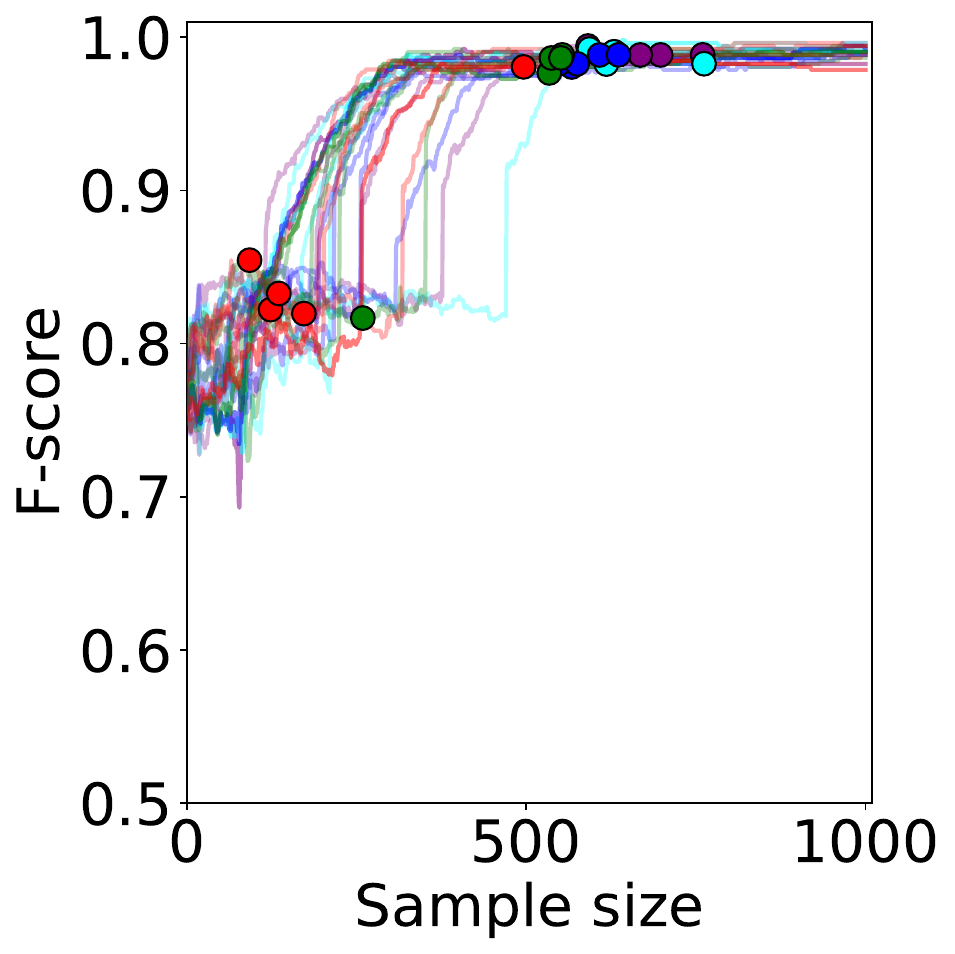}\\
    (j) {\tt{Booth}}($\sigma_{\rm noise}=30$)& (k) {\tt{Branin}}($\sigma_{\rm noise}=10$)& (l) {\tt{Holder table}}($\sigma_{\rm noise}=0.3$)
    \end{tabular}
    \caption{
    F-scores and Stopped time using each acquisition function. (a) -- (f) noise-free case. (g) -- (l) noise addition case. 
    }
    \label{fig_result_test_func_change_margin}
\end{figure}

\subsection{Effects of parameter of the proposed method}
\label{app:L}

Next, we evaluate the impact of changing the parameter $L$ of the proposed acquisition function and stopping criterion using test functions. In this experiment, experimental settings are the same as the Appendix~\ref{app:testfunc}. Figure~\ref{fig_result_test_func_change_margin} shows the F-scores and the stopping timing when $L$ is varied as $L=1, 2, 3, 4, 5$. Although there are rare cases where the increase of F-score is lower with some parameters, there are no significant differences in the F-scores when $L$ is varied, regardless of the test function used or whether observation noise is added. Thus, it can be said that the parameter $L$ does not significantly affect the acquisition function.

On the other hand, $L$ has a large impact on the stopping timing. Specifically, for test functions other than the {\tt{Cross in tray}} function and {\tt{Holder table}} function, the impact of $L$ is small without noise, but with noise, smaller $L$ values lead to earlier stopping. Therefore, in most cases, setting $L=1$ is sufficient if early stopping is desired even at the expense of accuracy, while typically setting $L$ around 3 is adequate. However, for the {\tt{Cross in tray}} and {\tt{Holder table}} functions, LSE stops before the F-score converges adequately when $L$ is small. This occurs because these functions have complex shapes, requiring appropriate hyperparameter estimation by the GP, which in turn requires a sufficient amount of data. With limited data, there is a tendency to estimate a larger kernel width. As a result, in situations with insufficient data, the GP may converge to a function different from the true function, meeting the stopping criterion and stopping LSE prematurely. In such cases, it is necessary to increase $L$ to prevent LSE from stopping until appropriate hyperparameters are estimated.

\subsection{Experiments with different threshold values}
\label{app:theta}

In this subsection, we evaluate the effect of changing the threshold on the stopping timing for four test functions: {\tt{Sphere}}, {\tt{Branin}}, {\tt{Rosenbrock}}, and {\tt{Booth}}. We examine three different threshold settings for each test function. For the {\tt{Sphere}} function, the thresholds are set to \(\theta = 10, 20, 30\); for the {\tt{Branin}} function, \(\theta = 50, 75, 100\); for the {\tt{Rosenbrock}} function, \(\theta = 100, 300, 500\); and for the {\tt{Booth}} function, \(\theta = 100, 300, 500\). All other experimental settings are the same as those described in Appendix~\ref{app:testfunc}, except for the threshold.

The results are shown in Fig.\ref{fig_change_threshold}. Consistent with the main text, it is observed that the FC criterion fails to stop, the FS criterion tends to stop LSE aggressively, and the proposed criterion stops LSE conservatively. However, as seen in Figs.\ref{fig_change_threshold}(c), (e), (d), and (f), there are cases where the FS criterion fails to stop LSE, and as shown in Figs.~\ref{fig_change_threshold}(h) and (i), there are cases where it stops before the F-score exceeds the threshold. In contrast, the proposed criterion consistently stops LSE even in such cases.

\begin{figure}[th!]
    \centering
    \begin{tabular}{ccc}
    \includegraphics[height=4cm,width=4.2cm]{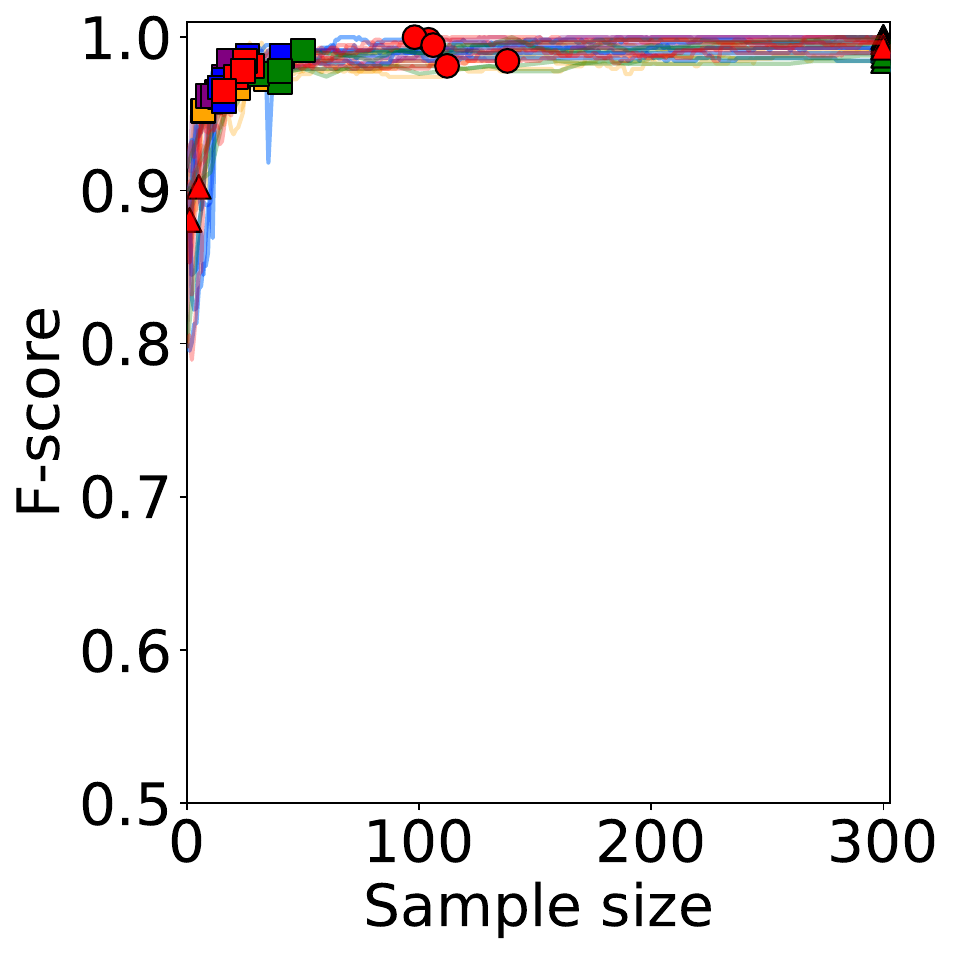}&
    \includegraphics[height=4cm,width=4.2cm]{images/sphere_20_2_f1_and_st.pdf}&
    \includegraphics[height=4cm,width=4.2cm]{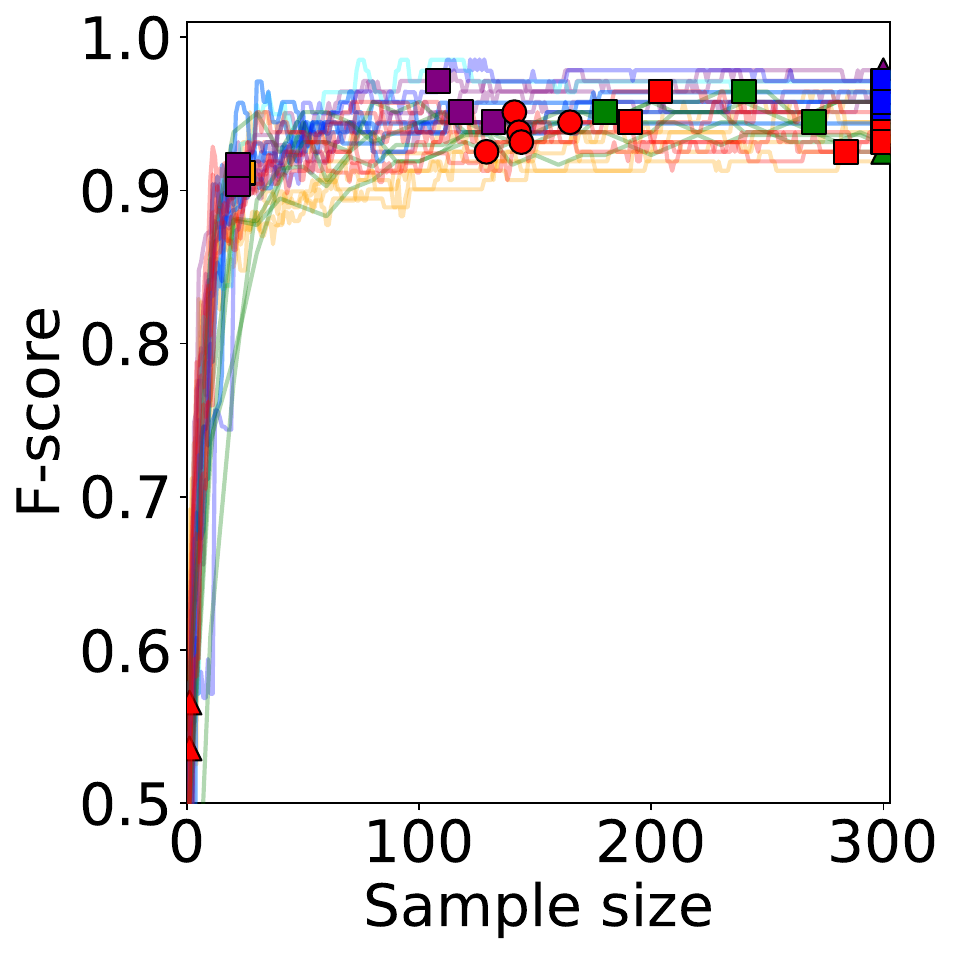}\\
    (a) {\tt{Sphere}}($\theta=10$)&    
    (b) {\tt{Sphere}}($\theta=20$)&    
    (c) {\tt{Sphere}}($\theta=30$) \\
    \includegraphics[height=4cm,width=4.2cm]{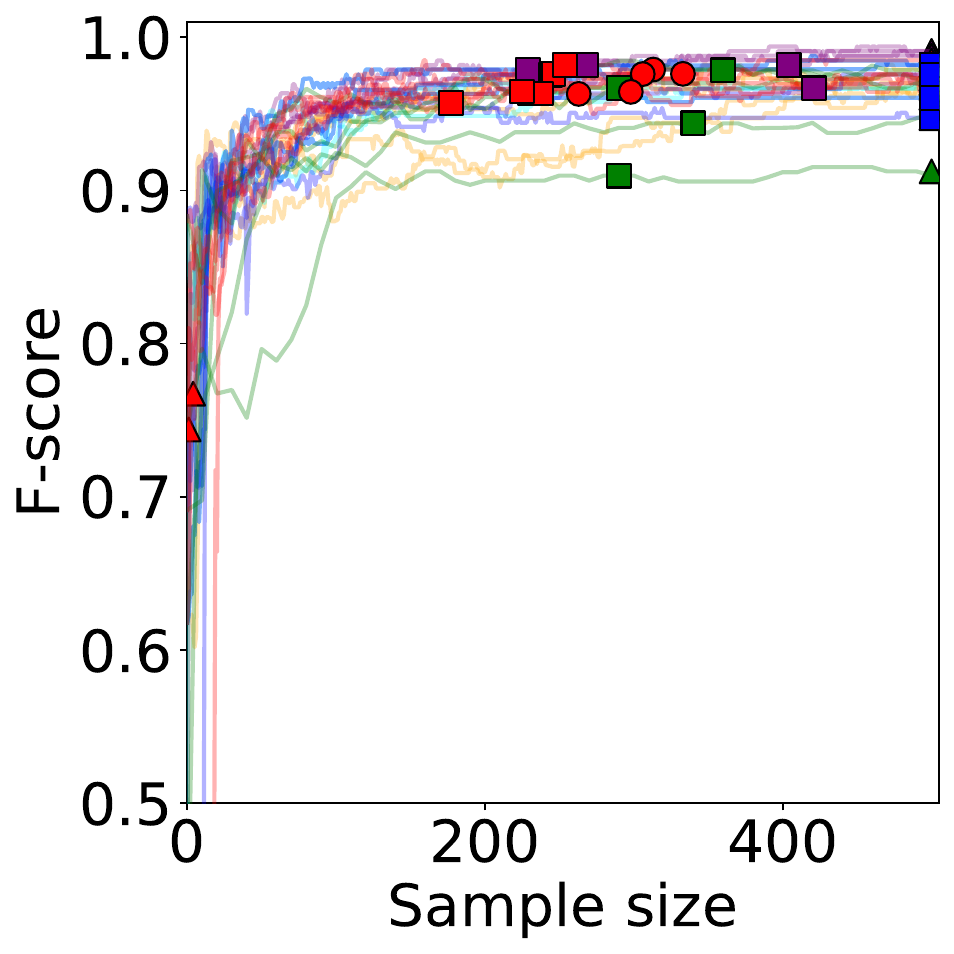}&
    \includegraphics[height=4cm,width=4.2cm]{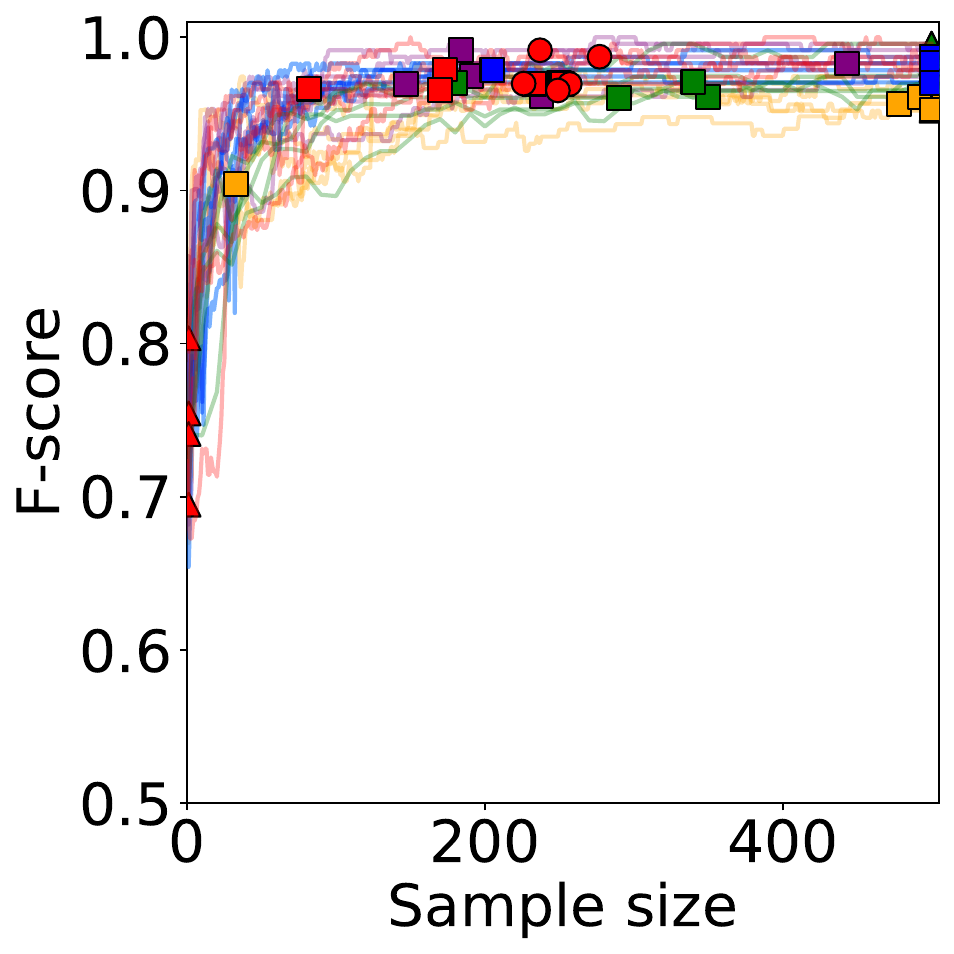}&
    \includegraphics[height=4cm,width=4.2cm]{images/branin_100_20_f1_and_st.pdf}\\
    (d) {\tt{Branin}}($\theta=50$)&    
    (e) {\tt{Branin}}($\theta=75$)&    
    (f) {\tt{Branin}}($\theta=100$) \\
    \includegraphics[height=4cm,width=4.2cm]{images/rosenbrock_100_30_f1_and_st.pdf}&
    \includegraphics[height=4cm,width=4.2cm]{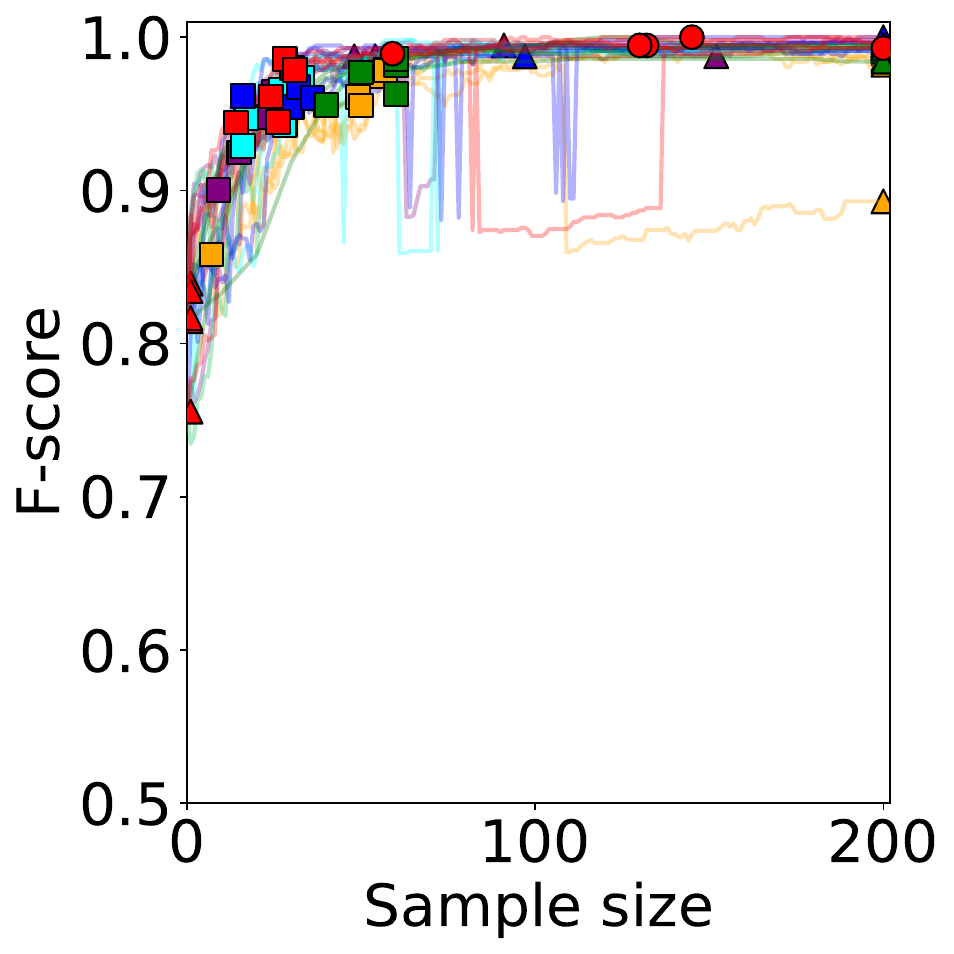}&
    \includegraphics[height=4cm,width=4.2cm]{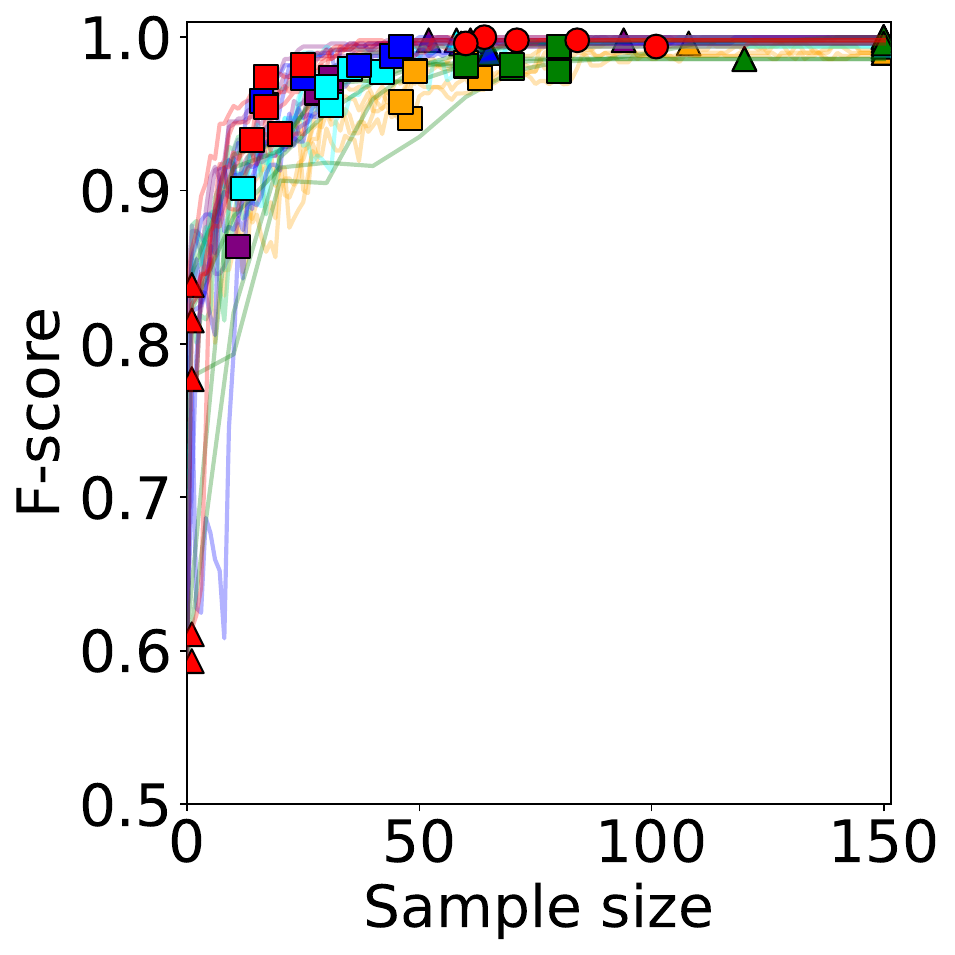}\\
    (g) {\tt{Rosenbrock}}($\theta=100$)&    
    (h) {\tt{Rosenbrock}}($\theta=300$)&    
    (i) {\tt{Rosenbrock}}($\theta=500$) \\
    \includegraphics[height=4cm,width=4.2cm]{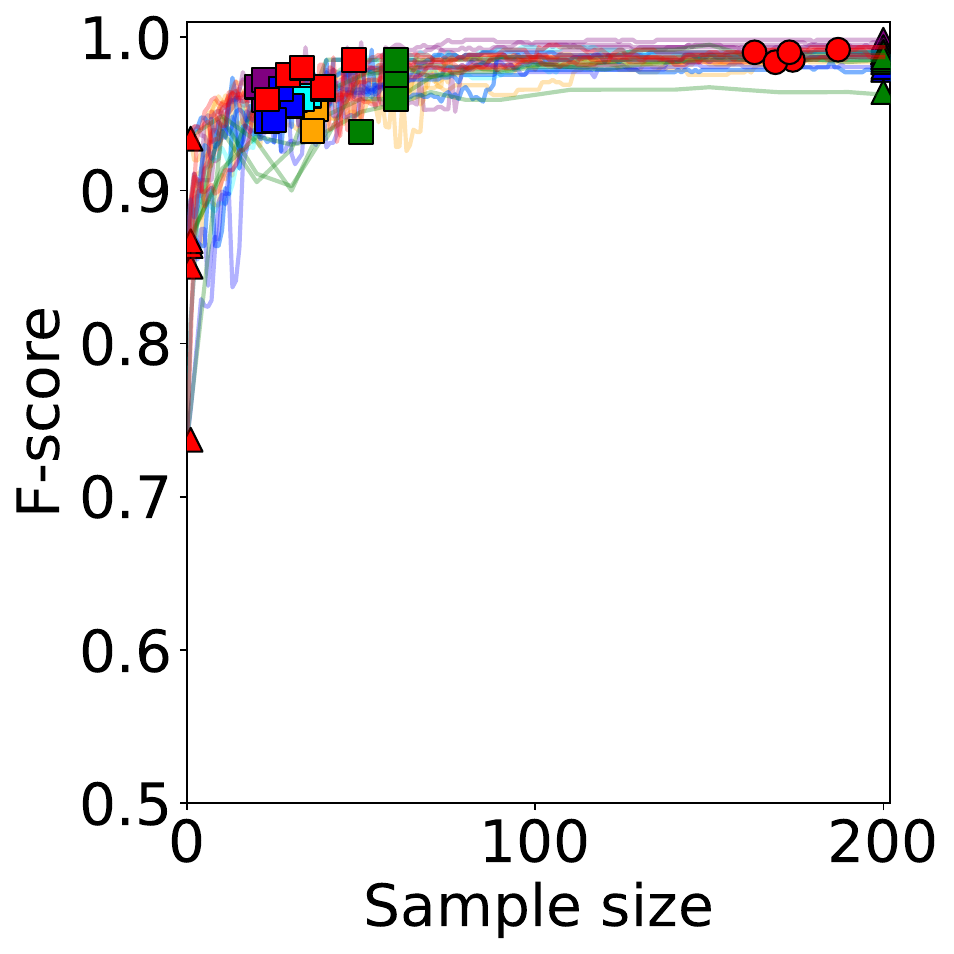}&
    \includegraphics[height=4cm,width=4.2cm]{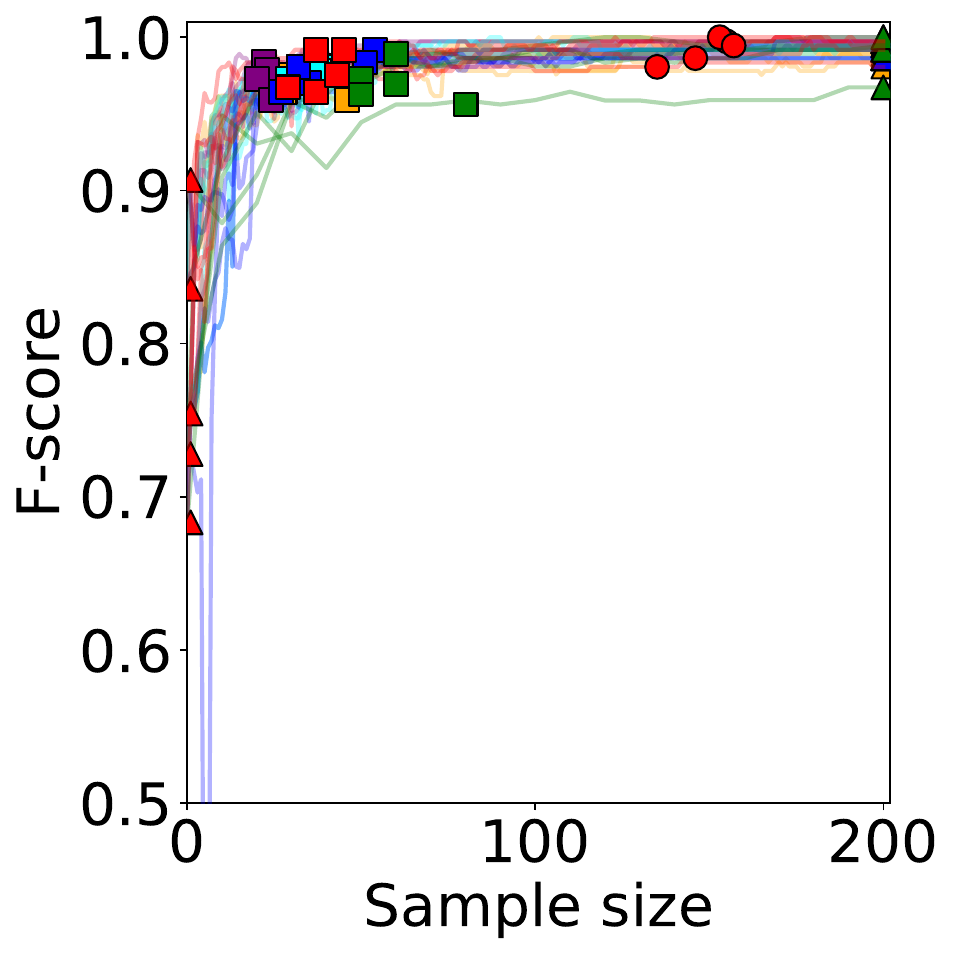}&
    \includegraphics[height=4cm,width=4.2cm]{images/booth_500_30_f1_and_st.pdf}\\
    (j) {\tt{Booth}}($\theta=100$)&    
    (k) {\tt{Booth}}($\theta=300$)&    
    (l) {\tt{Booth}}($\theta=500$) \\
    \end{tabular}
    \caption{
    F-scores using each acquisition function and stopped timings with the proposed (Ours) and fully classified (FC) criteria for datasets with different thresholds.
    }
    \label{fig_change_threshold}
\end{figure}
\end{document}